\theoremstyle{plain}
\newtheorem{theorem}{Theorem}
\newtheorem{informal}{Main Result}
\newtheorem{proposition}{Proposition}
\newtheorem{lemma}{Lemma}
\newtheorem{corollary}{Corollary}
\newtheorem{problemstatement}{Problem Statement}
\theoremstyle{definition}
\newtheorem{definition}{Definition}
\newtheorem{assumption}{Assumption}
\newenvironment{customassump}[1]
  {\innercustomassump}
  {\endinnercustomassump}
\theoremstyle{remark}
\newtheorem{remark}{Remark}
\newcommand{\lp}{\left (} 
\newcommand{\rp}{\right )} 
\newcommand{\lb}{\left [}
\newcommand{\rb}{\right ]}
\newcommand{\mc}[1]{\mathcal{#1}}
\newcommand{\mbb}[1]{\mathbb{#1}}
\newcommand{\indi}[1]{\mathbbm{1}_{ \{#1\} }}
\newcommand{\X}{\mathcal{X}}
\newcommand{\Y}{\mathcal{Y}}
\DeclareMathOperator*{\argmax}{arg\,max}
\newcommand{\tbf}[1]{\textbf{#1}}
\newcommand{\pushright}[1]{\ifmeasuring@#1\else\omit\hfill$\displaystyle#1$\fi\ignorespaces}
\newcommand{\pushleft}[1]{\ifmeasuring@#1\else\omit$\displaystyle#1$\hfill\fi\ignorespaces}
\def\csname ver@etex.sty\endcsname{3000/12/31}
\newcommand{\defined}{\coloneqq}
\newenvironment{proofoutline}
 {\proof[Proof outline]}
 {\endproof}
\newcommand{\ttt}[1]{\texttt{#1}}
\newcommand{\domain}{\mathcal{X}} 
\newcommand{\creg}[1][n]{\mathcal{R}_{#1}} 
\newcommand{\rkhs}[1][K]{\mc{H}_{#1}}
\newcommand{\holder}{H\"older }
\newcommand{\kmat}[1][\nu]{K_{#1}}
\newcommand{\matern}{Mat\'ern }
\newcommand{\igain}[1][n]{\gamma_{#1}}
\newcommand{\tOh}[1]{\widetilde{\mathcal{O}} \lp #1 \rp }
\newcommand{\algoref}[1]{Algorithm~\ref{#1}}
\newcommand{\cu}{\bar{c}}
\newcommand{\cl}{\underbar{c}}
\newcommand{\Dn}{\Delta_n}
\newcommand{\Dni}{\Delta_{n,i}}
\newcommand{\wni}{w_{n,i}}
\newcommand{\compLip}{\mc{C}_f^{(Lip)}}
\newcommand{\comp}{\underline{\mc{C}}_f}
\newcommand{\compupper}{\overline{\mc{C}}_f}
\newcommand{\posterior}{\ttt{ComputePosterior}\xspace}
\newcommand{\refine}{\ttt{RefinePartition}\xspace}
\newcommand{\gpucb}{\texttt{GP-UCB}\xspace}
\newcommand{\gpts}{\texttt{GP-TS}\xspace}
\newcommand{\pigpucb}{\texttt{$\pi-$GP-UCB}\xspace}
\newcommand{\kernelucb}{\texttt{SupKernelUCB}\xspace}
\newcommand{\dopt}{\tilde{d}_{\text{opt}}}
\newcommand{\bbR}{\mathbb{R}}
\author[1]{Shubhanshu Shekhar\thanks{\texttt{shubhan2@andrew.cmu.edu}}}
\author[2]{Tara Javidi}
\affil[1]{Department of Statistics and Data Science, Carnegie Mellon University }
\affil[2]{Department of Electrical and Computer Engineering, UCSD}
\begin{document}

\title{Instance-Dependent Regret Analysis of Kernelized Bandits}
\date{}
\maketitle
\begin{abstract}
	We study the kernelized bandits problem,  which involves designing an adaptive strategy for querying a noisy zeroth-order-oracle to efficiently learn about the optimizer of an unknown function $f$ with a norm bounded by $M<\infty$ in a Reproducing Kernel Hilbert Space~(RKHS) associated with a positive definite kernel $K$.
	Prior results, working in a \emph{minimax framework}, have characterized the worst-case~(over all functions in the problem class) limits on regret achievable by \emph{any} algorithm, and have constructed algorithms with matching~(modulo polylogarithmic factors) worst-case performance. 
	These results suffer from two drawbacks. First, the minimax lower bound gives no information about the limits of regret achievable by algorithms on specific problem instances. Second, due to their worst-case nature, the existing upper bound analysis fails to adapt to easier problem instances within the function class. 
	Our work takes steps to address both these issues. 
	First, we derive \emph{instance-dependent} regret lower bounds for  algorithms  with uniformly~(over the function class) vanishing normalized cumulative regret.
    Our result, valid for all the practically relevant kernelized bandits algorithms, such as, \gpucb, \gpts and \kernelucb,  identifies a fundamental complexity measure associated with every problem instance.
	We then address the second issue, by proposing a new algorithm that  is minimax near-optimal while also demonstrating the ability to adapt to easier problem instances. 
\end{abstract}

\section{Introduction}
\label{sec:introduction}
    We consider the problem of optimizing a  function $f:\X = [0,1]^d \mapsto \bbR$ by adaptively gathering information about it via noisy zeroth-order-oracle queries.
    To make the problem tractable,  we assume that $f$ lies in the reproducing kernel Hilbert space (RKHS) associated with a given positive-definite kernel $K$, and its norm is bounded by a known constant $M<\infty$. 
    The function $f$ can  be accessed through noisy zeroth-order-oracle queries that return $y_x = f(x) + \eta_x$ at a query point $x \in \X$.  Given a total budget $n$,  the goal of an agent is to design an adaptive querying strategy, denoted by $\mc{A}$,  to  select a sequence of query points $(x_t)_{t=1}^n$, that incur a small cumulative regret $\mc{R}_n(\mc{A}, f)$, defined as 
    \begin{align}
        &\mc{R}_n (\mc{A}, f) \defined \sum_{t=1}^n f(x^*) - f(x_t), \qquad \text{where} \quad x^* \in \argmax_{ x\in \X} f(x). 
    \end{align}
    
    The cumulative regret forces the agent to address the exploration-exploitation trade-off and prevents it from querying too many points from the sub-optimal regions of the domain.

    The problem described above is referred to as the kernelized bandit or \emph{agnostic} Gaussian Process bandit problem. Prior theoretical works in this area have focused on establishing lower and upper bounds on the performance achievable by algorithms in the minimax setting. These results characterize of the worst-case limits, over all functions in the given RKHS,  of  performance achievable by any adaptive sampling algorithm~(see ~\Cref{subsec:contributions_bead} for details).
    Due to its worst case nature, the minimax framework does not account for the fact that there may exist functions that are easier to optimize than others in the same function class.
    As a result, the minimax regret bounds  do not accurately reflect the benefits of carefully designed adaptive strategies in the typical, non-adversarial, problem instances. In this paper, we take  a step towards addressing this issue and present the first \emph{instance-dependent} analysis for kernelized bandits. 
    
    To streamline our presentation, we focus  on the RKHSs corresponding  to the  \matern family of  kernels, denoted by $\{\rkhs[\kmat]: \nu > 0\}$. These kernels are most relevant for practical applications and also allow a graded control over the smoothness of its elements~(through a smoothness parameter~$\nu>0$) as described by \citet{stein2012interpolation}. Furthermore, we also restrict our attention primarily to analyzing the cumulative regret, and leave the extension of these to the pure exploration setting for future work. 
    
    The rest of this paper is organized as follows:   we discuss the limitations of the existing minimax analysis and present  an overview of our contributions in ~\Cref{subsec:contributions_bead}. We describe  related results in literature in ~\Cref{subsec:related_work_bead} to place our results in proper context. We formally state the problem and the required assumptions in~\Cref{sec:preliminaries_bead}, and in \Cref{sec:lower_bounds_bead} we derive the instance-dependent lower bounds of the regret achievable by a `good' class of algorithms~(this class includes all the existing algorithms analyzed in the literature including \gpucb, \gpts, \kernelucb). Finally, in \Cref{sec:algorithms_bead}, we propose a new algorithm that achieves the \emph{best of both worlds}: it matches the minimax lower bounds (up to polylogarithmic factors) for the \matern family of kernels in the worst case, and can also exploit some additional structure present in the given problem instance to achieve regret tighter than the minimax lower bound for those instances. 
    
    \subsection{Overview of Results}
    \label{subsec:contributions_bead}
    
        \sloppy For a given class of functions, $\mc{H}$, the minimax expected cumulative regret is defined as $\mc{R}_n^*(\mc{H}) \defined \inf_{ \mc{A}}\; \sup_{f \in \mc{H}} \; \mbb{E} \lb \mc{R}_n \lp f, \mc{A} \rp \rb$. 
        For the RKHS associated with \matern kernels with smoothness parameter $\nu>0$, prior work has established a minimax rate $\mc{R}_n^*(\rkhs[\kmat]) = \Theta \lp n^{(\nu+d)/(\nu+2d)} \rp$, ignoring polylogarithmic factors in $n$. 
        In particular, the worst-case algorithm independent lower bound of the order $\Omega \lp n^{{(\nu+d)}/{(2\nu+d)}} \rp$ for the \matern kernels with smoothness parameter $\nu>0$ was established by
        \citet{scarlett2017lower}.  On the other hand,  \citet{vakili2021information} recently derived tighter bounds on the mutual information gain (or equivalently the \emph{effective dimension}) associated with \matern kernels, that in turn implied that the \kernelucb algorithm of \citet{valko2013finite} matches (up to polylogarithmic terms) the above-stated lower bound, hence showing its near-optimality. 
        
        While the existing theoretical results provide a rather complete understanding of the worst-case performance limits for kernelized bandits, they suffer from two drawbacks:
        \begin{enumerate}
            \item  The existing lower bounds  are obtained by constructing a suitable subset of `hard' problem instances, and then demonstrating that there exists no algorithm that can perform well on all of those problems simultaneously. These results tell us that for \emph{any} algorithm, there exists at least one \emph{hard problem instance} on which that algorithm must incur a certain  regret. However, such results do not tell us what are the limits of performance for \emph{carefully designed, `good'} algorithms~(such as \gpucb; precise meaning of `good' is stated in Definition~\ref{def:uniform}) on the \emph{specific problem instance} presented to it.
           
            \item The existing analysis of most of the algorithms depend on global properties of the given function class, such as the dimension, kernel parameters and RKHS norm. Consider, for example, the \gpucb algorithm for which \citet{srinivas2012information} derived the following upper bound on the regret $\mc{R}_n  = \tOh{ \sqrt{n} \igain}$, where $\igain$ is the maximum information gain for the given kernel $K$~(formally defined in \eqref{eq:info_gain_bead}).  
            
            Since the maximum information gain $\igain$ is a property of the entire RKHS, the existing theory does not exploit any simplifying structure present in the specific problem instance. 
        \end{enumerate}
        
        Our main contributions make progress towards addressing the two issues stated above. In particular, we first derive an instance-dependent lower bound for algorithms with uniformly bounded normalized cumulative regret, and then we propose an algorithm which achieves near-optimal worst case performance, and can also exploit some additional structure present in problem instances. 
        
        First, we consider the following question: \emph{Suppose we are given an algorithm $\mc{A}$ that is known to have $\mc{O} \lp n^{a_0} \rp$ worst-case regret over all functions in the RKHS associated with \matern kernel $\kmat$~(denoted by $\rkhs[\kmat]$). Then, what values of expected regret can $\mc{A}$ achieve for a given function $f$ in $\rkhs[\kmat]$?} We answer this question by identifying a \emph{lower-complexity} term $\comp$, that characterizes the per-instance achievable limit. 
        
        \begin{informal}
        \label{informal:lower} 
            Suppose an algorithm $\mc{A}$ has a worst-case regret $\mc{O}\lp n^{a_0}\rp$ over functions in $\rkhs[\kmat]$. Then, for a given $f$ with $\|f\|_{\rkhs[\kmat]}<M$, we  have:
            \begin{align}
                \mbb{E}[\mc{R}_n(f, \mc{A})] = \Omega \lp \comp ( n^{-(1-a_0)}) \rp, 
            \end{align}
            where $\comp(\Delta) \defined \sum_{k \geq 0} m_k/(2^{k+2}\Delta)$ for any $\Delta>0$, and $m_k$ denotes the $2w_k = \mc{O}\lp (\Delta 2^k)^{1/\nu}\rp$ packing number of the annular set $\mc{Z}_k = \{x \in \X: 2^k \Delta \leq f(x^*) - f(x) < 2^{k+1}\Delta \}$. 
        \end{informal}
        
        To interpret the term $\comp$, let us deconstruct each element, $m_k/(2^{k+2}\Delta)$,  in its defining sum. Consider a ball~(denoted by $E$) of radius $w_k = \mc{O}((2^k\Delta)^{1/\nu})$ contained in the region $\mc{Z}_k$. By construction every point in $E$ is at most $2^{k+1} \Delta $ suboptimal for $f$. We first show that, if $\Delta > n^{-(1-a_0)}$, then $\mc{A}$ must spend at least $\Omega \lp 1/(2^{k+1}\Delta)^2\rp$ queries in the region $E$. This  implies that the total number of queries in the region $\mc{Z}_k$ is at least $\Omega(m_k/(2^{k+1}\Delta)^2)$, which in turn, implies that the regret  incurred by these queries is lower bounded by $\Omega \big( 2^k\Delta m_k/( 2^{k+1}\Delta )^2 \big) = \Omega \lp m_k/(2^{k+2}\Delta) \rp$. Further details of this argument are in~\Cref{sec:lower_bounds_bead}.

        The previous result can be specialized for minimax-optimal case by setting $a_0= a_{\nu}^* \defined (\nu+d)/(2\nu+d)$. This motivates our next question: \emph{Can we construct a minimax-optimal algorithm, that adapts, and incurs smaller regret on easier problem instances?} We address this question, by constructing a new algorithm $\mc{A}_1$ in~\Cref{sec:algorithms_bead}, for which we show the following~(see~\Cref{theorem:general-upper} for a more precise statement). 
        
        \begin{informal}
            We construct an algorithm, $\mc{A}_1$, that is minimax near-optimal for functions in $\rkhs[\kmat]$, and satisfies (with $a_{\nu}^* = (\nu+d)/(2\nu+d)$)
            \begin{align}
                \mbb{E}[\mc{R}_n\lp f, \mc{A}_1 \rp] = \widetilde{\mc{O}}\lp \compupper(n^{-(1-a_{\nu}^*)}) \rp, 
            \end{align}
            where $\compupper(\Delta)  \defined   \sum_{k \geq 0}  \widetilde{m}_k/( 2^{k \xi}  \Delta) $ for $\Delta>0$ and $\xi = \min\{1, \nu\}$. The term $\widetilde{m}_k$ is the $\widetilde{w}_k = \mc{O}\big( (2^{k\xi} \Delta )^{1/\xi}\big)$ packing number of the set $\widetilde{\mc{Z}}_k = \{x : f(x^*) - f(x) = \mc{O} ( 2^{k\xi} \Delta ) \}$. 
        \end{informal}
        
        Similar to $\comp$, the upper-complexity term $\compupper$ can also be interpreted in terms of the number of queries made by our proposed algorithm $\mc{A}_1$ in the regions $\widetilde{\mc{Z}}_k$.  However, in general, the term $\compupper$ is  larger than $\comp$. The is primarily due to the fact that the packing radius $\widetilde{w}_k$ used in $\compupper$ is smaller than the analogous term $w_k$ used in $\comp$, due to the presence of $\xi = \min\{1, \nu\}$ instead of $\nu$. Nevertheless, in~\Cref{theorem:bead_upper}, we  identify sufficient conditions under which $\compupper$ is strictly tighter than the minimax rate, thus demonstrating the ability to adapt to easier problem instances. 
        
        To summarize, our results imply that for a minimax-optimal algorithm, the per-instance regret on a function $f$ lies between $\comp(n^{-(1-a_{\nu}^*)})$ and $\compupper(n^{-(1-a_{\nu}^*)})$, where $a_{\nu}^* = (\nu+d)/(2\nu+d)$. 
        In the next subsection, we discuss in more details the existing theoretical results on kernelized bandits.

    \subsection{Related Work}
    \label{subsec:related_work_bead}

        \emph{Lower Bounds.} \citet{scarlett2017lower} characterized the fundamental limits on the \emph{worst-case} performance of \emph{any} kernelized bandit algorithm by obtaining the minimax lower bound on the regret for the RKHS of Squared-Exponential~(SE) and \matern kernels.
        For a given value of the query budget $n$, a \matern kernel $\kmat$ and bound on RKHS norm $0< M<\infty$, they constructed specific \emph{hard} collection of functions, denoted by $\{f_1, \ldots, f_{m_n}\} \subset \rkhs[\kmat](M)$ for some integer $m_n$. By a reduction to multiple hypothesis testing and an application of Fano's inequality, they then lower bounded the maximum expected regret of \emph{any} algorithm $\mc{A}$ on these functions by $\Omega\lp n^{(d+\nu)/(d+2\nu)} \rp$, which in turn implied the result on $\mc{R}_n^*(\rkhs[\kmat])$, since 
        \begin{align}
            \mc{R}_n^*\lp \rkhs[\kmat] \rp \defined \inf_{\mc{A}}\; \sup_{f \in \rkhs[\kmat](M)}\; \mbb{E}\lb \mc{R}_n \lp f, \mc{A} \rp \rb \geq \inf_{\mc{A}} \; \max_{1 \leq i \leq m_n}\; \mbb{E} \lb \mc{R}_n \lp f_i, \mc{A} \rp \rb = \Omega \lp n^{\frac{d+\nu}{d+2\nu}}\rp. 
        \end{align}
        
        However, the \emph{hard} functions employed by \citet{scarlett2017lower} are of the \emph{needle-in-haystack} type, and may not be representative of the typical functions belonging to the  RKHS.
        Thus, the corresponding regret lower bound may provide a pessimistic limit for   the achievable performance of the above algorithms on the specific problem instance encountered.
        We address this issue in \Cref{sec:lower_bounds_bead},  and  derive the first instance-dependent regret bound for this problem. 
        
        \emph{Beyond minimax analysis.} There exist some results in the related area of $\X$-armed bandits~(or Lipschitz bandits) which move beyond the worst case analysis towards instance-dependent bounds. The closest such work is by~\citet{bachoc2021instancedependent}, who obtain a precise characterization of the instance-dependent regret for algorithms with error certificates in the noiseless setting. 
        Similarly,~\citet{wang2019optimization} study the \emph{local minimax optimality} of \holder continuous functions, where they characterize the cumulative regret of functions that are close in $\sup$ norm to some~(possibly unknown) reference function, in terms of the properties of the reference function. 

        \emph{Upper Bounds.} The most commonly used kernelized bandit algorithm is \gpucb proposed by \citet{srinivas2012information} that was motivated by the Upper Confidence Bound~(UCB) strategy for multi-armed bandits (MABs)~\citep{auer2002finite}. 
        The \gpucb algorithm proceeds by selecting query points $(x_t)_{t \geq 1}$ that maximize the UCB of $f$ of the form $\mu_t(x) + \beta_t \sigma(t)(x)$ over the domain $\domain$ for suitable  factors $(\beta_t)_{t \geq 1}$.  For this algorithm, \citet{srinivas2012information} derived the following high-probability upper bound on $\mc{R}_n$ 
        \begin{align}
            \label{eq:info_regret_bead}
            \mc{R}_n = \tilde{\mc{O}}\lp \sqrt{n} \igain\rp.
        \end{align}
         In the above display $\igain$ is the maximum information gain associated with the kernel $K$, defined as 
        \begin{align}
            \label{eq:info_gain_bead}
            \igain \defined \max_{S \subset \domain : |S|=n} I(\bm{y}_S ;\; f), \quad \text{for} \quad f \sim GP(0, K),  
        \end{align}
        where $\bm{y}_S = (y_1, \ldots, y_n)$ is the vector of observations at points in $S = (x_1, \ldots, x_n)$ and $I(\bm{y}_S ; f)$ denotes the Shannon mutual information between the observations $\bm{y}_S$ and  the function $f$ assumed to be a sample from a zero-mean Gaussian Process $GP(0, K)$. Thus, $\igain$ denotes the maximum amount of information that can be gained about a function $f$ sampled from a zero-mean GP, through $n$ noisy observations. 
        To obtain explicit (in $n$) regret bounds from~\eqref{eq:info_regret_bead}, \citet{srinivas2012information} also derived upper bounds on $\igain$ for two important family of kernels, squared-exponential and \matern. More recently, \citet{vakili2021information} derived tighter bounds on $\igain$ for these families using a different approach than that employed by \citet{srinivas2012information}. In particular, for the \matern family, this implies the following regret bound for \gpucb: $\mc{R}_n =\tOh{ n^{ (3d/2 + \nu)/(d+2\nu)} }$ where $\nu$ is the smoothness parameter.  \citet{chowdhury2017kernelized} showed that the same upper bound is also achieved by the Thompson Sampling based algorithm, \gpts. This is a randomized strategy that sets the query point $x_t$ at time $t$ to a maximizer of a random sample (function) drawn from the posterior distribution on the function space based on the first $t-1$ observations. 
        
        The regret bound achieved by \gpucb and \gpts for \matern kernels, stated above,  is not sublinear for some ranges of smoothness parameter $\nu$~(i.e., $\nu<d/2$).  Recently,  \citet{janz2020bandit} addressed this issue by proposing an algorithm~(referred to as $\pi-$\gpucb), which  adaptively partitions the input space and fits independent GP models in each element of the partition. This structured approach to sampling yields an alternative bound on $\igain$,  and results in a tighter regret bound of the form $\creg = \tOh{n^{e_\nu}}$ for $\nu>1$, where $e_\nu = \nicefrac{d(2d+3) + 2\nu}{d(2d+4) + 4\nu}$.   Unlike the bounds of \citep{srinivas2012information, chowdhury2017kernelized}, this is sublinear for  $\nu>1$ and $d\geq 1$.
        
        \citet{valko2013finite} proposed the \kernelucb algorithm for this problem, that takes a different algorithmic approach  and proceeds by dividing the queried points into batches which consist of conditionally independent observations. This dependence structure among the points allows the use of simple Azuma's inequality for constructing tight confidence intervals. This is in contrast to the analysis of \gpucb, in which the complex dependence structure among the observations requires use of stronger martingale inequalities, and results in wider confidence intervals. In particular, \citet{valko2013finite} showed that the \kernelucb algorithm achieves a regret bound of $\mc{R}_n = \mc{O} \lp \sqrt{n \igain}\rp$. Note that this is tighter than the corresponding bound for \gpucb~(and \gpts) algorithm by a factor of $\sqrt{\igain}$.   By plugging in the recently derived bounds on $\igain$ for  \matern kernels by \citet{vakili2021information}, this implies that the \kernelucb algorithm  achieves a regret bound $\mc{R}_n = \tOh{n^{(d+\nu)/(d+2\nu)}}$, which matches algorithm independent lower bounds derived by \citet{scarlett2017lower} and \citet{cai2020lower}. 
        
        As stated earlier, all the upper bounds of the existing algorithms in literature depend on the term $\igain$ which is a global property of the function class. Hence, these results do not distinguish between easy and hard problem instances lying in the same class. Our proposed algorithm described in \Cref{sec:algorithms_bead}, in contrast, can exploit some additional structure present in a given problem instance while also matching the best known worst case performance (i.e., the bound achieved by \kernelucb). 


\section{Preliminaries}
\label{sec:preliminaries_bead}
    We present the formal definitions of several important terms in \Cref{subsec:definitions_bead}, and then describe the assumptions and the formal problem statement in \Cref{subsec:problem_statement}.  

    \subsection{Definitions}
    \label{subsec:definitions_bead}
        We begin with the definition of a positive definite kernel, that will then be used in defining an RKHS. 
        \begin{definition}[Positive Definite Kernel]
        \label{def:positive_definite}
        For a non-empty set $\mc{X}$, a symmetric function $K:\X \times \X \mapsto [0, \infty)$ is called a positive-definite kernel, if for any $m \in \mbb{N}$,  any $x_1, \ldots, x_m \in \X$ and $c_1, c_2, \ldots, c_m \in \mbb{R}$, the following is true:  $\sum_{i=1}^m \sum_{j=1}^m c_i K(x_i, x_j) c_j \geq 0$. 
        \end{definition}
        
        In this paper, we will focus primarily on a family of kernels, referred to the \matern family, that are parameterized by a smoothness parameter $\nu>0$. 
        
        \begin{definition}[\matern kernels]
        \label{def:matern_family}
        For $\nu>0$ and $\theta>0$, the \matern kernel $\kmat: \X \times \X \mapsto \mbb{R}$ is defined as 
        \begin{align}
            \label{eq:matern_def}
            \kmat \lp x, z \rp = \frac{1}{2^{\nu-1} \Gamma(\nu)} \lp \frac{\sqrt{2\nu} \|x-z\|}{\theta} \rp^{\nu} J_\nu \lp  \frac{\sqrt{2\nu} \|x-z\|}{\theta} \rp, 
        \end{align}
        where $J_\alpha$ denotes the modified Bessel function of the second kind of order $\alpha$. 
        \end{definition}
        
        The RKHS associated with \matern kernels  consist of functions with a `finite degree of smoothness'~\citep{kanagawa2018gaussian} as opposed to the infinitely differentiable functions lying the RKHS associated with the SE kernels.  Due to this property, the \matern kernels are most commonly used in practical problems~\citep[\S~1.7]{stein2012interpolation} as they provide a reasonable trade-off between analytical tractability and representation power. 
        
        We now present a formal definition of the RKHS associated with a positive definite kernel $K$. 

        \begin{definition}[RKHS]
        \label{def:rkhs} 
        For a nonempty set $\mc{X}$ and a positive-definite kernel $K$, the RKHS associated with $K$, denoted by $\rkhs$, is defined as the Hilbert space of functions on $\X$ with an inner product $\langle \cdot, \cdot \rangle$ satisfying the following: \tbf{(i)} for all $x \in \X$, the function $K(\cdot, x) \in \rkhs$, and \tbf{(ii)} for all $x \in \X$ and $g \in \rkhs$, we have $g(x) = \langle g, K(\cdot, x) \rangle$. 
        \end{definition}
        
        The equality $g(x) = \langle g, K(\cdot, x) \rangle$ is referred to as  the \emph{reproducing property} which lends the name to the RKHS. We next introduce the definition of Gaussian Processes~(GPs) that are often used as a surrogate model for estimating functions lying in an RKHS. 
        
        \begin{definition}[Gaussian Processes]
        \label{def:gaussian_processes_bead}
        For a positive definite kernel $K:\X \times \X \mapsto \mbb{R}$, we use $GP(0, K)$ to represent a stochastic process indexed by $\X$, denoted by $\{Z_x : x \in \X\}$, such that for any $m \in \mbb{N}$ and $x_1, \ldots, x_m \in \X$, the random vector $[Z_{x_1}, \ldots, Z_{x_m}] \sim N(0, \Sigma_m)$ with $\Sigma_m = [K(x_i, x_j)]_{1\leq i, j\leq m}$. 
        \end{definition}

        Finally, we introduce the formal definition of an adaptive querying strategy. 
        \begin{definition}[Adaptive Strategy]
        \label{def:adaptive_strategy}
        An adaptive querying strategy $\mc{A}$ consists of a sequence of mappings $(A_t)_{t=1}^{\infty}$ where $A_t : \lp \X \times \Y  \rp^{t-1} \times \mc{U}^{t} \mapsto \mc{X}$, where $\mc{U} = [0,1]$ represents the range of additional randomness (used in randomized algorithms such as \gpts).  
        \end{definition}

        \begin{definition}[Induced Probability Measure]
        \label{def:induced_probability}
            An adaptive sampling strategy $\mc{A}$ and a function $f$ induces a probability measure $\mbb{P}_{f, \mc{A}}$~(henceforth abbreviated as $\mbb{P}_f$) on the measurable space $\lp \Omega, \mc{F} \rp$, with $\Omega = \lp \X \times \Y \rp^n$ and $\mc{F}$ representing the Borel $\sigma-$algebra on $\Omega$. This measure assigns the probabilities to events $E = \prod_{t=1}^n \lp E_{t,X} \times E_{t,Y} \rp$ where $E_{t,X} \in \mc{B}_{\X}$ and $E_{t,Y} \in \mc{B}_{\Y}$. 
            \begin{align}
                \mbb{P}_f \big( (X^n, Y^n) \in E \big) = \prod_{t=1}^n P_f \lp Y_t \in E_{t,Y} \vert X^t, Y^{t-1} \rp \, P_{\mc{A}}\lp  X_t \in E_{t,X} | X^{t-1}, Y^{t-1} \rp. 
            \end{align}
            Here $\mbb{P}_f$ represents the noisy zeroth-order-oracle and $\mbb{P}_{\mc{A}}$ is determined by the sampling strategy.
        \end{definition}
    
        \emph{Notations.} We end this section, by listing some of the notations that will be used in the rest of the paper. As mentioned earlier, $\X = [0,1]^d$ for some $d \geq 1$ represents the domain and $f:\X \mapsto \mbb{R}$ is the unknown objective function. We will represent the set of optimal points of $f$ with $\X^*(f) = \{x \in \X : f(x) = \max_{x' \in \X} f(x') \}$. In the sequel, we will suppress the $f$ dependence of $\X^*(f)$ and only use $\X^*$. For any $x \in \X$ and $r>0$, we use $B(x,r)$ to denote the $\ell_2$ open ball $\{z \in \X: \|z-x\|_2 < r\}$. 
        
    \subsection{Problem Statement}
    \label{subsec:problem_statement}

        As stated in the introduction, we consider the problem of optimizing a black-box function $f: \X \mapsto \mathbb{R}$, where $\X = [0,1]^d$ that is assumed to have bounded norm in the RKHS of a given kernel $K$. This problem is usually studied under the following assumptions.
        
        \begin{assumption}
        \label{assump:rkhs} 
            We assume that the function $f$ lies in the RKHS associated with a positive-definite kernel $K$, denoted by  $\rkhs$. Furthermore, we assume that $\|f\|_{\rkhs} \leq M$ for some known constant $M< \infty$. 
        \end{assumption}
        
        The above assumption is standard in the kernelized bandits literature, and informally it states that the unknown function has \emph{low complexity}, where the complexity is quantified in terms of the RKHS norm. 

        \begin{customassump}{2a}
        \label{assump:noise-lower}
            We assume that the agent can access the objective function $f$ via noisy zeroth order oracle queries. More specifically, the oracle returns $y(x) = f(x) + \eta(x)$ for a query point $x \in \X$, where $\eta(x)$ i.i.d. sequence of $N(0, \sigma^2)$ random variables. 
        \end{customassump}
        
        \begin{customassump}{2b}\label{assump:noise-upper}
            We assume that the zeroth order oracle queries returns $y(x_t) = f(x_t) + \eta(x_t)$ for a query point $x \in \X$, where $\eta(x)$ i.i.d. sequence of $\sigma^2$-sub-Gaussian random variables. 
        \end{customassump}
        
        These two assumptions state that the observation noise has light tails, and hence we can construct tight confidence intervals for the unknown functions based on the noisy observations. We will use~\Cref{assump:noise-lower} in the statement of our lower bound, while the more general~\Cref{assump:noise-upper} will be used to state the upper bound result. 
        Note that imposing the $N(0, \sigma^2)$  requirement for stating the lower bounds is primarily to simplify the presentation. This is further discussed in Remark~\ref{remark:gaussian_noise} in Appendix~\ref{appendix:general_lower2}, and relies on the fact that we can obtain closed form expressions for KL-divergence involving Gaussian random variables.

        We end this section with a formal problem statement. 
        \begin{problemstatement}
        \label{problem_statement}
            Suppose Assumptions~\ref{assump:rkhs}~and~either~\ref{assump:noise-lower}~or~\ref{assump:noise-upper}  hold with known values of $M$ and $\sigma^2$. Then, given a total querying budget $n$,  design an adaptive strategy $\mc{A}$ to select query points $x_1, \ldots, x_n$, which incur a small cumulative regret $\mc{R}_n \lp \mc{A}, f \rp \defined \sum_{t=1}^n f(x^*) - f(x_t)$. 
        \end{problemstatement}


\section{Lower Bounds}
\label{sec:lower_bounds_bead}

    In order to derive instance-dependent bounds on the regret, we need to restrict our attention to `good' algorithms  which perform well for all elements the given problem class. We beign by presenting  the precise definition of this term below, which is motivated by a similar definition of consistent policies used in multi-armed bandits~\citep[see][Definition~16.1]{lattimore2020bandit}. 

    \begin{definition}[$a_0$-consistent]
    \label{def:uniform}
        We say an algorithm $\mc{A}$ is $a_0$-consistent over a given function class $\mc{F}$, if for all $a>a_0$ and $f \in \mc{F}$, the following holds: 
        \begin{align}
        \label{eq:uniform_def}
            \lim_{n \to \infty}\; \frac{ \mbb{E} \lb \mc{R}_n \lp \mc{A}, f \rp \rb}{ n^a } = 0. 
        \end{align}
    \end{definition}

    \begin{remark}
    \label{remark:uniform_rate_algorithms}
        Note that when $\mc{F}$ is the RKHS associated with a \matern kernel, then all the existing algorithms discussed in \Cref{subsec:related_work_bead} satisfy the condition above with some $a_0 \leq 1$. In particular,  this condition is satisfied by \gpucb and \gpts   with $a_0 = \min \{1, (\nu+3d/2)/(2\nu+d) \}$,  by \pigpucb with $a_0 = ( d(2d+3) + 2\nu)/(d(2d+4) + 4\nu) $ and by \kernelucb with $a_0 = (\nu+d)/(2\nu+d)$. 
    \end{remark}

    The above restriction on the class of algorithms is necessary to obtain instance-dependent regret bounds. Otherwise, for every problem instance $f$, there exists a trivial algorithm (that always queries $x^* \in \argmax_{x \in\X} f(x)$) that incurs zero regret on $f$, but incurs a linear regret for all functions for which $x^*$ is strictly suboptimal. 

    Next, we introduce the definition of a bump function used in \citep[Lemma~4]{cai2020lower} that will be used to construct the local perturbations~(see~\Cref{subsubsec:perturbation_general}) in our lower bound proof. 

    \begin{definition}[bump function $g$]
    \label{def:bump}
        Define the function $g = \exp \lp 1 - \frac{\|x\|^2}{1 - \|x\|^2} \rp \indi{\|x\|<1}$,  which satisfies the following properties:  
        \begin{itemize}
            \item $g$ is supported on the ball $B(0, 1)$. 
            \item $\sup_{x \in B(0,1)} \; g(x) = g(0) = 1$. 
            \item $\|g\|_{\rkhs[\kmat]} \defined M_\nu < \infty$ for some constant $M_\nu$ depending on $\nu$. 
            \item if $\tilde{g}(\cdot) = g(\frac{\cdot}{w})$ for some $w>0$, then $\|\tilde{g}\|_{\rkhs[\kmat]} \leq (1/w)^\nu M_\nu$. 
        \end{itemize}
    \end{definition}

    In \Cref{subsec:general_lower}, we first present a general lower bound that bounds the regret achievable by an algorithm on a given function, in terms of a complexity term that informally depends on the volume of the near-optimal regions  of of the input space for a given function $f$, as well as the exponent $a_0$ of the uniform regret condition for $a_0-$consistent algorithms. We then specialize this result for a smaller class of functions that also satisfy a local growth condition~(Assumption~\ref{assump:growth}) in \Cref{subsec:growth_lower} to get explicit in $n$ regret lower bounds. 
    
    \subsection{Overview of the argument}
    \label{subsec:warmup}
        We now present an informal description of the key ideas involved in the obtaining the main lower bound that will be formally stated as Theorem~\ref{theorem:general_lower} in \Cref{subsec:general_lower}.
    
        Suppose $f$ is a function lying in $\rkhs[\kmat](M)$ and let $\mc{A}$ be an $a_0$-consistent algorithm for this family of functions. Suppose $E_1, E_2, \ldots, E_m$ are $m$ disjoint subsets of the input space $\domain$ for some $m \geq 1$, with the property that 
            \begin{align}
            \label{eq:subopt-1}
                f(x) \leq f(x^*) - \Delta_i, \quad \text{for all}\; x \in E_i, \quad \text{for all} \; i \in [m]. 
            \end{align}
        Now if $N_i$ denotes the (random) number of times the algorithm $\mc{A}$ queries points in the region $E_i$ in $n$ rounds, then we immediately have the following regret lower bound. 
            \begin{align}
                \label{eq:regret_decomposition-1}
                \mbb{E}_f[\mc{R}_n\lp \mc{A}, f \rp ] & \geq \sum_{i=1}^m \Delta_i \mbb{E}_{f}\lb N_i \rb. 
            \end{align}
        The expression in~\eqref{eq:regret_decomposition-1} suggests that one way of lower bounding the regret incurred by $\mc{A}$ on the function $f$ is to lower bound the expected number of samples it allocates in these suboptimal regions, $\mbb{E}_{f}[N_i]$ for $1 \leq i \leq m$. 
        We approach this task by considering functions that are slightly perturbed versions of $f$, denoted by $\tilde{f}_i$,  that also lie in the same RKHS. The perturbed function $\tilde{f}_i$ shall differ from $f$ only in the region $E_i$, but this difference should be substantial enough to ensure that the maximizer of $\tilde{f}_i$ lies in $E_i$. This fact makes $\tilde{f}_i$ operationally distinct from $f$,  for which the region $E_i$ is at least $\Delta_i$-suboptimal as assumed in~\eqref{eq:subopt-1}. 
        Now, since the algorithm $\mc{A}$ is $a_0$-consistent for the given function class, it must achieve $o(n^a)$ regret (for any $a>a_0$) for all such functions~(and in particular, for $f$ and all of $\tilde{f}_i$). These two facts will enable us to  bound the number of samples that the algorithm $\mc{A}$ must spend (on an average) in the suboptimal region $E_i$ when $f$ is the true function.
        
         \begin{figure*}[hbt]
            \centering
            \begin{subfigure}[t]{0.5\textwidth}
                \centering
                \includegraphics[height=2.5in, width=\textwidth]{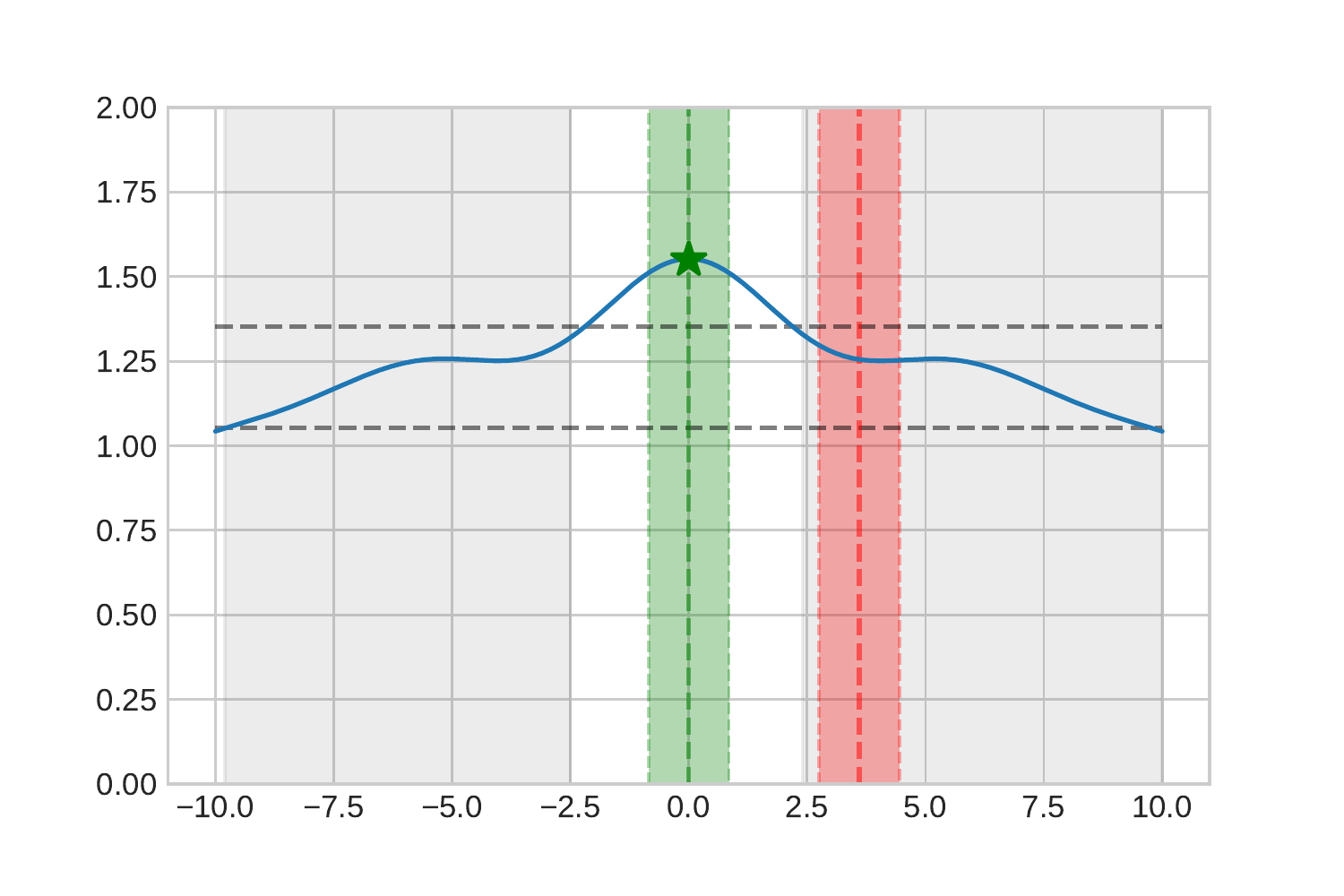}
                \caption{The given function $f$, with its optimum at $x^*$ denoted by the green $\star$.}
            \end{subfigure}%
            ~ 
            \begin{subfigure}[t]{0.5\textwidth}
                \centering
                \includegraphics[height=2.5in, width=\textwidth]{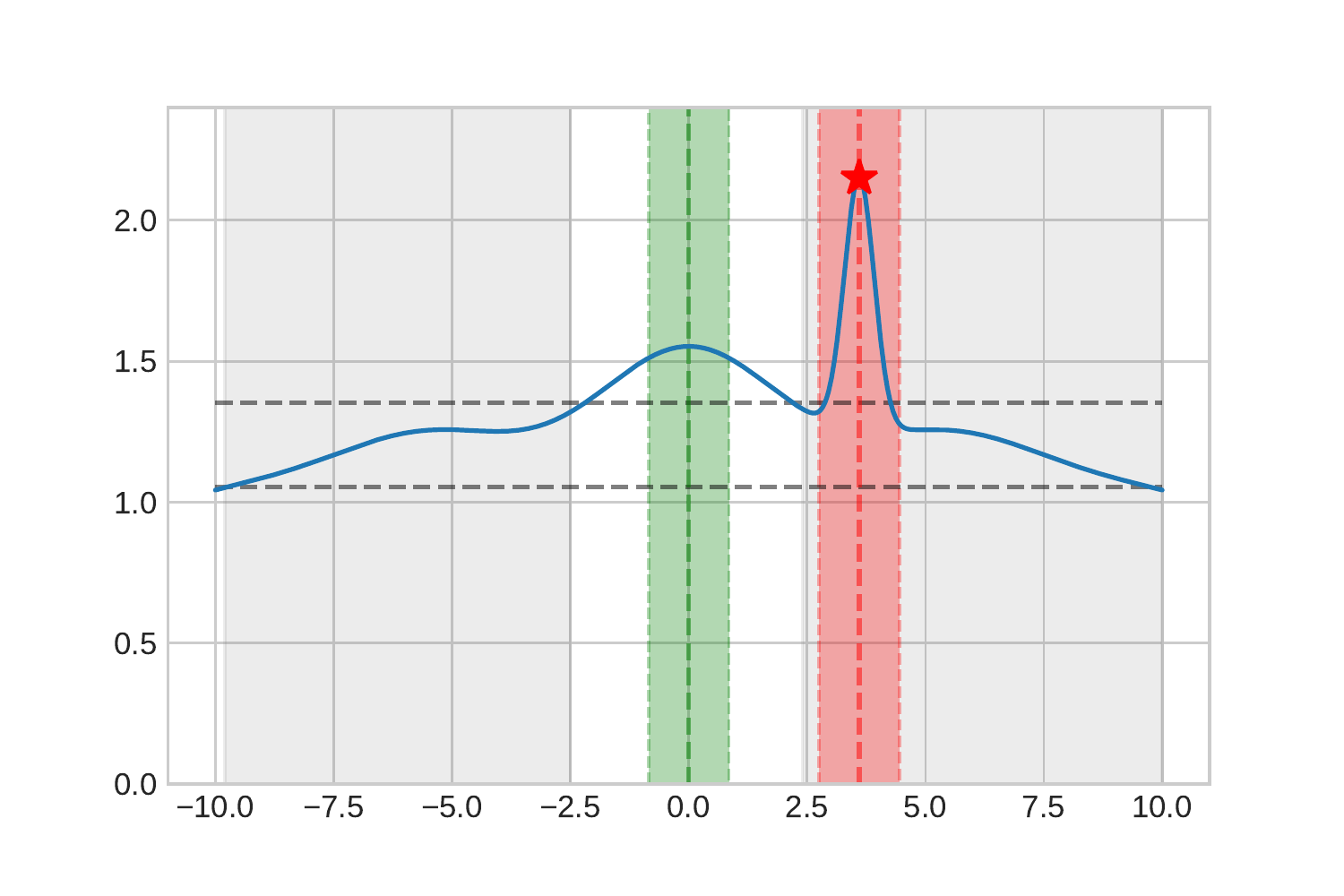}
                \caption{The function $\tilde{f}$ obtained by adding a bump function centered at $z \approx 3.7$ to the original function $f$.}
            \end{subfigure}
            \caption{The gray shaded regions represent the regions $\widetilde{\X}(f, \Delta, c) = \{x: \Delta \leq f(x) - f(x^*) \leq c\Delta\}$ for $\Delta=1.5$ and $c=2$. The green region is $B(x^*, w_0)$ and the red shaded region is $B(z, w)$. By construction, in the left figure, the red-shaded region is at least $\Delta-$suboptimal, while in the right figure the green-shaded region is at-least $\Delta-$suboptimal. The two functions $f$, and $\tilde{f}$ will induce statistically similar distributions as they differ only in a small region (shown in red). Hence, if $f$ is the true objective function,  any good algorithm $\mc{A}$ must spend some queries in the red region to discard the possibility that the objective function is $\tilde{f}$.}
            \label{fig:lower_bound}
        \end{figure*}
        
        The rest of this section is organized as follows: 
        \begin{itemize}
            \item In \Cref{subsubsec:perturbation_general}, we describe the details of the argument in deriving a lower bound on the expected number of samples $\mc{A}$ spends in a suboptimal region for one perturbed function. 
            
            \item In \Cref{subsubsec:perturbation_specific}, we present the details involved in constructing an appropriate collection of perturbed functions. In particular, this involves carefully balancing several trade-offs in the choice of parameters such as $\Delta_i$ and $E_i$, such that the resulting lower bound is tightest. 
        \end{itemize}

    \subsubsection{A perturbation argument}
    \label{subsubsec:perturbation_general}
        Suppose $f \in \rkhs[\kmat](M)$ and $\mc{A}$ is an $a_0$-consistent algorithm. Let $\tilde{f}$ be another function in $\rkhs[\kmat](M)$ that is an $(E, c, \Delta)-$perturbation of $f$, as defined below. 
        \begin{definition}[$(E, c, \Delta)-$perturbation]
        \label{def:perturbation}
            We say that a function $\tilde{f} \in \rkhs[\kmat](M)$ is an $(E, c, \Delta)$-perturbation of another function $f \in \rkhs[\kmat](M)$, if it satisfies the following properties:
            \begin{itemize}
                \item[\tbf{(P1)}] $\tilde{f}$ differs from $f$ only in a region $E$ of the input space, i.e., $\tilde{f}(x) = f(x) $ for all $x \in \X \setminus E$. 
                \item[\tbf{(P2)}] The function $f$ achieves its maximum value (denoted by $f^*$) at a point $x^* \in \X \setminus E$.  On the other hand,  $\tilde{f}$ achieves its maximum value $\tilde{f}^*$ at a point $\tilde{x}^*$ lying in the region $E$.
                
                \item[\tbf{(P3)}] There exist constants $c>1$ and $\Delta>0$ such that the following conditions are satisfied: 
                \begin{align}
                     & |f(x) - \tilde{f}(x)| \leq c \Delta, \quad \text{for all}\; x \in E,\label{eq:perturbation-1}  \\
                     & f^* - f(x) \geq \Delta, \quad  \text{for all}\; x \in E. \label{eq:perturbation-2} \\
                     & \tilde{f}^* - \tilde{f}(x) \geq \Delta, \quad  \text{for all}\; x \in \X \setminus E. \label{eq:perturbation-3}
                \end{align}
            \end{itemize}
        \end{definition}
        
        \begin{remark}
        \label{remark:perturbation-1}
            In this section, we do not address the issue of existence of a function $\tilde{f}$ satisfying all the properties, as well as the possible values of $c$, $\Delta$ and choices of the region $E$. We present the argument under the assumption that such an $\tilde{f}$ exits for some fixed $c$, $\Delta$ and $E$. The trade-offs involved in constructing such $\tilde{f}$ will be discussed in \Cref{subsubsec:perturbation_specific}. 
        \end{remark}
       
        Similar to the more general case of~\ref{eq:regret_decomposition-1}, the  definition above motivates a simple decomposition of the regret in terms of the number of queries made by $\mc{A}$ in the region $E$ in which $f$ and $\tilde{f}$ differ. In particular, if $N$ denotes the (random) number of times $\mc{A}$ queries points in $E$, then we immediately have the following: 
        \begin{align}
            &\mbb{E}_f \lb \mc{R}_n \lp \mc{A}, f \rp \rb \geq \mbb{E}_f \lb N \rb \Delta, \qquad{and} \qquad
            \mbb{E}_{\tilde{f}} \lb \mc{R}_n ( \mc{A}, \tilde{f} ) \rb  \geq \mbb{E}_{\tilde{f}} \lb n - N \rb \Delta. 
        \end{align}
        To obtain a lower bound on $\mbb{E}_f[N]$, we will use the following two key properties of the pair $(f, \tilde{f})$ as encoded by the formal statements in~\Cref{def:perturbation}.
        \begin{itemize}
            \item From a statistical point of view, the two problem instances are close. In particular, $f$ and $\tilde{f}$ only differ over the region $E$, and furthermore their deviation is upper bounded by $c \Delta$. This allows us to upper bound the KL divergence between their induced probability distributions~$\mbb{P}_f$ and $\mbb{P}_{\tilde{f}}$~(see Definition~\ref{def:induced_probability}) in terms of $\mbb{E}_f [N]$.
            
            \item In an operational sense, the two problem instances $\tilde{f}$ and $f$ are sufficiently distinct. This is a consequence of properties~(P2) and~(P3) in \Cref{def:perturbation}, which say that the optimizer of $f$~(resp. $\tilde{f}$) lies in the region $\X \setminus E$~(resp. E) that is known to be at least $\Delta-$suboptimal for $\tilde{f}$~(resp. $f$). 
            This, along with the $a_0$-consistency of $\mc{A}$ will be used to lower-bound the KL-divergence between $\mbb{P}_f$ and $\mbb{P}_{\tilde{f}}$  by a constant. 
        \end{itemize}
        Combining the two inequalities will give us the required lower bound on $\mbb{E}_f[N]$, and consequently on $\mbb{E}_f [ \mc{R}_n \lp \mc{A}, f \rp ]$.
        We now describe the steps.

        \tbf{Step~1: Upper bound on $\bm{D_{KL}(\mbb{P}_f, \mbb{P}_{\tilde{f}})}$.} Recall that the pairs $(f, \mc{A})$ and $(\tilde{f}, \mc{A})$ both induce a probability measure on the $n-$fold product of input-observation space $\Omega \defined (\mc{X} \times \mbb{R} )^{n}$. Denote the two probability measures by $\mbb{P}_f$ and $\mbb{P}_{\tilde{f}}$. Then, assuming that the observation noise is i.i.d. $N(0, \sigma^2)$,  it can be shown that 
        \begin{align}
        \label{eq:overview-2}
            D_{KL} \lp \mbb{P}_f, \mbb{P}_{\tilde{f}} \rp \leq \frac{1}{2\sigma^2}\lp \sup_{x \in E} \tilde{f}(x) - f(x) \rp^2 \mbb{E}_f \lb N \rb \leq \frac{ c^2 \Delta^2}{2 \sigma^2} \mbb{E}_f \lb N \rb. 
        \end{align}
        This intuitive statement states that the KL-divergence between $\mbb{P}_f$ and $\mbb{P}_{\tilde{f}}$ can be controlled by two terms: \tbf{(i)} the maximum deviation between $\tilde{f}$ and $f$, a quantity that is bounded by $c\Delta$ by assumption, and \tbf{(ii)} the expected number of queries made by $\mc{A}$ under $f$. This quantity cannot be too large either, as the algorithm $\mc{A}$ is assumed to be $a_0-$consistent, and the region $E$ is at least $\Delta-$suboptimal for $f$ as stated in~\eqref{eq:perturbation-1}.  
        
        \tbf{Step~2: Lower Bound on $\bm{D_{KL}(\mathbb{P}_{f}, \mathbb{P}_{\tilde{f}})}.$ } Since the algorithm $\mc{A}$ is assumed to be $a_0-$consistent, we immediately have the following two statements for any $a>a_0$:
        \begin{align}
            &\mbb{E}_f \lb N \Delta  \rb  \leq \mbb{E}_f \lb \mc{R}_n \lp f, \mc{A} \rp  \rb = o\lp n^{a} \rp, \quad \text{and} \quad  \mbb{E}_{\tilde{f}} \lb \lp n - N \rp  \Delta \rb \leq  \mbb{E}_{\tilde{f}} \lb \mc{R}_n \lp \tilde{f}, \mc{A} \rp \rb = o \lp n^{a} \rp. 
        \end{align}
        Together, these two conditions imply that $\mbb{E}_f[N]$ and $\mbb{E}_{\tilde{f}}[n-N]$ cannot be too large. To make this formal, define a $[0,1]$ valued random variable $Z = N/n$ and let $p = \mbb{E}_f [Z]$ and $q = \mbb{E}_{\tilde{f}}[Z]$. Note that $Z$ denotes the fraction of samples spent in the region $E$ by the algorithm $\mc{A}$.  Then, for large enough values of $n$, and  $\Delta$ fixed, we expect that $p \approx 0$ and $q \approx 1$. This in turn  implies that the KL-divergence between two Bernoulli random variables with expected values $p$ and $q$ respectively is non-zero. More specifically, 
        we can show that there exists a constant $ C>0$ such that 
        \begin{align}
        \label{eq:overview-3}
          0<  C \leq d_{KL}(p, q) = d_{KL} \lp \mbb{E}_{f} \lb \frac{N}{n} \rb, \, \mbb{E}_{\tilde{f}} \lb  \frac{N}{n} \rb \rp, 
        \end{align}
        where $d_{KL}(p,q) = p \log (p/q) + (1-p) \log \lp (1-p)/(1-q) \rp$ denotes the KL-divergence between two Bernoulli random variables.  The next step in obtaining a lower bound on $\mbb{E}_f [N]$ is the observation that 
        \begin{align}
        \label{eq:overview-4}
        d_{KL} \lp \mbb{E}_{f} \lb \frac{N}{n} \rb, \, \mbb{E}_{\tilde{f}} \lb  \frac{N}{n} \rb \rp \leq D_{KL} \lp \mbb{P}_f, \, \mbb{P}_{\tilde{f}} \rp.  
        \end{align}
        This is a consequence  of data-processing inequality, as shown by \citet{garivier2019explore}.

        \tbf{Step~3: Lower bound $\bm{\mbb{E}_f [N]}$.} Finally, we can use~\eqref{eq:overview-4} to link the statements of~\eqref{eq:overview-2} and~\eqref{eq:overview-3}, and obtain the  inequality 
        \begin{align}
            C < \frac{c^2 \Delta^2}{2 \sigma^2} \mbb{E}_f \lb N \rb, \quad \text{which implies} \quad  
            \mbb{E}_f \lb N \rb \geq \frac{2 C \sigma^2}{c^2 \Delta^2}. 
            \label{eq:lower-1}
        \end{align}
        Multiplying this term with $\Delta$ gives us one term in the regret decomposition of~\eqref{eq:regret_decomposition-1}. 
        
    \subsubsection{Constructing an \texorpdfstring{$(E,c, \Delta)$}{(E, c, D)} perturbed function}
    \label{subsubsec:perturbation_specific}
        We now discuss  the details of constructing a function $\tilde{f}$ satisfying the conditions of \Cref{def:perturbation}. Here is the summary for a fixed $a \in (a_0,1)$.
        \begin{itemize}
            \item   The term  $\Delta$ should not be smaller than $n^{-(1-a)}$. 
            \item An appropriate choice of the set $E$ is  a ball $B(z, w)$ for a point $z \in \X \setminus \X^*$ and radius $w>0$. Recall that $X^* = \{x^* \in \X: f(x^*) = f^* \defined \max_{x \in \X} f(x) \}$. 
            \item We define the perturbed function $\tilde{f} = f + \tilde{g}$ where $\tilde{g}(\cdot) = (c+1) g \lp \frac{\cdot - z}{w} \rp$ for some $w>0$. Here $g$ is the bump function introduced in~\Cref{def:bump}. The specific constraints on $w$ are discussed below.
        \end{itemize}
        
        We now discuss these choices in more details. 
        
        \tbf{Choice of $\Delta$.} The term $\Delta$ parametrizes the amount of perturbation between $f$ and $\tilde{f}$.  $\Delta$ should be large enough to ensure that $\tilde{f}$ and $f$ are \emph{distinguishable} from the point of view of the algorithm $\mc{A}$. In particular, fix any $a>a_0$. Then for this value of $a$, $\Delta$ must be larger than $n^{-(1-a)}$ in order to ensure that $\tilde{f}$ and $f$ are sufficiently distinct. This is because, if $\Delta < n^{-(1-a)}$, then the algorithm may spend all $n$ of its samples in the region $E$ under $f$ as well as $\tilde{f}$ without violating the $o(n^a)$ requirement on regret. 
        
        \tbf{Choice of $z$ and $w$.}  The terms $z$ and $w$ must be such that $B(z, w) \subset \widetilde{\X}(f, \Delta, c) \defined \{x \in \X : \Delta \leq f^* -f(x) \leq c \Delta \}$. Thus $z$ and $w$ must be selected to ensure that the ball of radius $w$ around $z$ is fully contained in the \emph{annular} region $\widetilde{\X}(f, \Delta, c)$ in which the sub-optimality of $f$, (i.e., $f^*-f(x)$) is between $\Delta$ and $c\Delta$.

        \tbf{Defining $\tilde{f}$.} Having defined the region $E$, we then construct the perturbed version of $f$, by adding a shifted and scaled bump function to it. In particular, we add $\tilde{g} = (c+1)\Delta g\lp \frac{\cdot - z}{w} \rp$ to $f$. Note that, by assumption,  the RKHS norm of $f$ satisfies $\|f\|_{\rkhs[\kmat]} < M$. Since we require the perturbed function to also lie in the class $\rkhs[\kmat](M)$, a sufficient condition for that is 
        \begin{align}
            &\|\tilde{g}\|_{\rkhs[\kmat]} \leq \frac{(c+1)\Delta}{w^\nu} \|g\|_{\rkhs[\kmat]} = \frac{(c+1)\Delta}{w^\nu}M_\nu \leq M - \|f\|_{\rkhs[\kmat]} \quad 
            \Rightarrow \;  w \geq  \lp \frac{ (c+1)\Delta M_\nu}{M - \|f\|_{\rkhs[\kmat]}} \rp^{1/\nu}. \label{eq:w-1}
        \end{align}

        \begin{remark}
            \label{remark:diameter}
            Note that the expression for $w$ in~\eqref{eq:w-1} implicitly assumes that for this value of $c$ and $\Delta$ the region $\widetilde{\X}(f, \Delta, c)$ is large enough to contain a ball of this~(or larger) radius. Our result, Theorem~\ref{theorem:general_lower}, holds under this assumption. In case this condition is violated, Theorem~\ref{theorem:general_lower} reduces to the trivial lower bound $\mbb{E}_f[\mc{R}_n \lp f, \mc{A} \rp ] \geq 0$. 
        \end{remark}

    \subsection{A general lower bound}
    \label{subsec:general_lower}
        We can now combine the ideas discussed in the previous section to obtain a general instance-dependent lower bound for $a_0$-consistent algorithms. First, we introduce a notion of complexity associated with a function $f \in \rkhs[\kmat](M)$, that will be used to state the main result. 

        \begin{definition}[\ttt{Complexity-Term}]
            \label{def:complexity-term}
            Let $f \in \rkhs[\kmat](M)$  with $\|f\|_{\kmat} = (1-\lambda)M$ for some $\nu>0$, $M>0$ and $\lambda \in (0, 1)$. Fix a $\Delta>0$, and introduce the set $\mc{Z}_k \defined \widetilde{\X}(f, 2^k\Delta, 2) = \{x \in \X: 2^k\Delta \leq f(x^*)-f(x) < 2^{k+1}\Delta \} $. Introduce the radius $w_k = \lp3 \times  2^{k}\Delta M_\nu/(\lambda M) \rp^{1/\nu}$ and let $m_k$ denote the $w_k$-packing number of the set $\mc{Z}_k$. Finally, define the complexity term 
            \begin{align}
                \label{eq:complexity-term} 
                \comp(\Delta, \nu, M, \lambda) \defined \sum_{k \geq 0} \frac{m_k}{2^{k+2}\Delta} > \frac{m_0}{4\Delta}. 
            \end{align}
            In the sequel, we will suppress the $\nu, M$ and $\lambda$ dependence of $\comp$ and simply use the notation $\comp(\Delta)$. 
        \end{definition}

        We  now present an instance-dependent lower bound on the expected cumulative regret of any $a_0$-consistent algorithm $\mc{A}$ in terms of the complexity term introduced above. 
        
        \begin{theorem}
        \label{theorem:general_lower-1} 
            Let $f \in \rkhs[\kmat](M)$ with $\|f\|_{\kmat} = (1-\lambda)M$ for $\nu>0, M>0$ and $\lambda \in (0,1)$. Consider any $a_0$-consistent algorithm over the family of functions $\rkhs[\kmat](M)$, denoted by $\mc{A}$, and fix any $a>a_0$. Then, the expected cumulative regret of $\mc{A}$ on the instance $f$ satisfies
            \begin{align}
                \label{eq:general_lower-1} 
                \mbb{E} \lb \mc{R}_n \lp \mc{A}, f \rp \rb \geq \frac{7 \log 2}{4} \sigma^2 \comp\lp n^{-(1-a)} \rp, 
            \end{align}
            for $n$ large enough~(exact condition in equation \ref{eq:n_large_enough1} in Appendix~\ref{proof:general_lower-1}). 
        \end{theorem}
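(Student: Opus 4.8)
The plan is to instantiate the three-step perturbation argument of \Cref{subsubsec:perturbation_general} at every scale $k \geq 0$ and every element of a packing of $\mc{Z}_k$, and then sum the resulting per-region regret contributions. Fix $a \in (a_0, 1)$ and set $\Delta = n^{-(1-a)}$; this is the smallest value of the perturbation magnitude for which $f$ and a perturbed competitor remain statistically indistinguishable to an $a_0$-consistent algorithm, as explained in the discussion of the choice of $\Delta$. For each $k$, consider the annular region $\mc{Z}_k = \widetilde{\X}(f, 2^k\Delta, 2)$ on which the suboptimality gap is between $2^k\Delta$ and $2^{k+1}\Delta$, and fix a maximal $w_k$-packing $\{z_{k,1}, \dots, z_{k,m_k}\}$ of $\mc{Z}_k$, where $w_k = (3 \cdot 2^k \Delta M_\nu / (\lambda M))^{1/\nu}$ is exactly the radius dictated by the RKHS-budget constraint \eqref{eq:w-1} with $c = 1$ (so that $(c+1)\Delta M_\nu / w_k^\nu = \lambda M = M - \|f\|_{\rkhs[\kmat]}$). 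The balls $B(z_{k,j}, w_k/2)$ are pairwise disjoint by the packing property, and each is contained in $\mc{Z}_k$ (modulo the diameter caveat of \Cref{remark:diameter}), so every point in such a ball is at least $2^k\Delta$-suboptimal for $f$.

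The second step is to lower bound $\mbb{E}_f[N_{k,j}]$, the expected number of queries $\mc{A}$ places in $B(z_{k,j}, w_k/2)$. Here I would apply the perturbation construction with $E = B(z_{k,j}, w_k/2)$, $\Delta_k = 2^k\Delta$, $c = 1$, and $\tilde{g} = 2\Delta_k \, g((\cdot - z_{k,j})/(w_k/2))$ — one checks that $\tilde{f}_{k,j} = f + \tilde{g}$ satisfies (P1)--(P3) of \Cref{def:perturbation}: it agrees with $f$ off $E$, its deviation on $E$ is at most $2\Delta_k = c\Delta_k$, its maximizer moves into $E$ since $f$ was only $(2^{k+1}\Delta = 2\Delta_k)$-suboptimal there and we added a bump of height $2\Delta_k > 2\Delta_k$... — more carefully, one needs the bump height to strictly exceed the suboptimality plus a margin, which is why the construction uses $(c+1)\Delta$ with $c>1$ and the annular width factor $2$; I would track the constants so that (P3) holds with the stated $\Delta_k$. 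Then Steps 1--3 of \Cref{subsubsec:perturbation_general} give
\begin{align}
\mbb{E}_f[N_{k,j}] \;\geq\; \frac{2C\sigma^2}{c^2 \Delta_k^2} \;=\; \frac{2C\sigma^2}{(2^k\Delta)^2},
\end{align}
with $C = \frac{7\log 2}{8}$ the explicit Bernoulli-KL constant coming from $d_{KL}(p,q)$ evaluated at $p \approx 0$, $q \approx 1$ for $n$ large (this is where the ``$n$ large enough'' hypothesis enters: we need $p \leq$ some threshold and $1-q \leq$ the same, using $\mbb{E}_f[N\Delta_k] = o(n^a)$ and $\mbb{E}_{\tilde f}[(n-N)\Delta_k] = o(n^a)$ together with $\Delta_k \geq \Delta = n^{-(1-a)}$).

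The final step is to assemble the bound. Since the balls $B(z_{k,j}, w_k/2)$ over all $j \in [m_k]$ are disjoint and each lies in a region that is $2^k\Delta$-suboptimal, the regret decomposition \eqref{eq:regret_decomposition-1} gives
\begin{align}
\mbb{E}_f[\mc{R}_n(\mc{A}, f)] \;\geq\; \sum_{j=1}^{m_k} 2^k\Delta \cdot \mbb{E}_f[N_{k,j}] \;\geq\; m_k \cdot 2^k\Delta \cdot \frac{2C\sigma^2}{(2^k\Delta)^2} \;=\; \frac{2C\sigma^2 m_k}{2^k\Delta}
\end{align}
for each fixed $k$. To get the full sum $\comp(\Delta) = \sum_{k\geq 0} m_k/(2^{k+2}\Delta)$ rather than a single term, I would observe that the regions $\{B(z_{k,j}, w_k/2) : k \geq 0, j \in [m_k]\}$ are disjoint \emph{across} scales as well (the $\mc{Z}_k$ are disjoint annuli by definition), so the per-$k$ contributions add, yielding $\mbb{E}_f[\mc{R}_n] \geq 2C\sigma^2 \sum_{k\geq 0} m_k/(2^k\Delta) = 8C\sigma^2 \comp(\Delta) = \frac{7\log 2}{4}\sigma^2\comp(\Delta)$ with the stated constant. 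Substituting $\Delta = n^{-(1-a)}$ finishes it.

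The main obstacle I anticipate is making the perturbation construction simultaneously valid at \emph{all} scales with a \emph{single} choice of $\Delta$ and a uniform constant: the RKHS-norm budget $\lambda M$ is a fixed quantity, but the bump height $\sim 2^k\Delta$ grows with $k$, so the required radius $w_k \sim (2^k\Delta)^{1/\nu}$ grows too, and one must verify that $\mc{Z}_k$ is wide enough to contain a ball of radius $w_k/2$ — this is precisely the implicit assumption flagged in \Cref{remark:diameter}, and when it fails the corresponding term $m_k$ is simply zero (empty packing), so the bound degrades gracefully. A secondary technical point is the passage from ``$o(n^a)$ regret'' to a uniform-in-$k$ lower threshold on $n$: since there are infinitely many scales but only finitely many are nonempty for any fixed $f$ (as $f$ is bounded, $f^* - f \leq 2M_K$ say, so $2^k\Delta \leq 2\|f\|_\infty$ forces $k \lesssim \log(1/\Delta)$), the ``$n$ large enough'' condition can be made explicit in terms of this finite range of $k$, matching the reference to equation~\eqref{eq:n_large_enough1}.
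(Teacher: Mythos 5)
Your proposal follows essentially the same route as the paper: the paper first proves a one-scale version (Proposition~\ref{theorem:general_lower}) for a single annulus $\mc{Z}=\{x:\Delta\le f(x^*)-f(x)<c\Delta\}$ by planting a bump $(c+1)\Delta\, g((\cdot-z_i)/w)$ on each ball of a packing, invoking the KL/data-processing argument of Lemma~\ref{lemma:general_lower1}, and summing over the packing; Theorem~\ref{theorem:general_lower-1} is then obtained by applying that result once per scale $k=0,\dots,k_0$ with $\Delta\mapsto 2^k\Delta$, exactly as you propose, with the finiteness of $k_0$ handled by boundedness of $f$ just as in your last paragraph. The one loose end is the one you flagged yourself: with $c=1$ and a bump of height $2\Delta_k$ planted where the suboptimality can be as large as $2\Delta_k$, the perturbed maximum only reaches $f^*$ with no margin, so (P3) fails; the paper closes this by taking $c=2$, so the annulus has suboptimality at most $c\Delta_k=2^{k+1}\Delta$, the bump height $(c+1)\Delta_k=3\cdot 2^k\Delta$ clears it by $\Delta_k$, and the radius constraint \eqref{eq:w-1} yields precisely $w_k=(3\cdot 2^k\Delta M_\nu/(\lambda M))^{1/\nu}$ (your attribution of the factor $3$ to $c=1$ is off). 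This choice also fixes your final constant: the per-ball bound is $\frac{7\log 2}{4}\frac{\sigma^2}{c^2\Delta_k^2}$, and the $c^2=4$ in the denominator is exactly what produces the $2^{k+2}$ in $\comp$ and the stated prefactor $\frac{7\log 2}{4}$, whereas your $c=1$ computation gives $8C\sigma^2\comp=7\log 2\,\sigma^2\comp$, which does not match the claimed constant.
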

        
        \begin{remark}
            \label{remark:general-lower-interpretation}
            The lower bound in~\Cref{theorem:general_lower-1} based on the complexity term of~\Cref{eq:complexity-term} has a natural interpretation. For any $k \geq 0$, the set $\mc{Z}_k$ denotes the `annular' region where the suboptimality $f(x^*)-f(x)$ is between $2^k \Delta_n$ and $2^{k+1}\Delta_n$, with $\Delta_n \defined n^{-(1-a)}$. Then, the term $w_k = \lp 2^{k+1}\Delta_n/(\lambda M) \rp^{1/\nu}$ denotes the radius of the smallest ball that can support a scaled bump function~(see~\Cref{def:bump}) that ensures the resulting perturbed version of $f$ satisfies the properties of~\Cref{def:perturbation}. As discussed earlier in~\Cref{subsubsec:perturbation_general}, for any such perturbation of $f$, the algorithm $\mc{A}$ must spend roughly $1/(2^{k+1}\Delta_n)^2$ samples to distinguish between $f$ and its perturbation. The regret incurred in the process is lower bounded by $2^{k}\Delta_n \times \lp 1/(2^{k+1}\Delta_n)^2\rp = 1/(2^{k+2}\Delta_n)$, that is, suboptimality~($ \geq 2^k\Delta_n$) times the number of queries in that region~($\geq 1/(2^{k+1} \Delta_n)^2$). Since $m_k$  disjoint balls of radius $w_k$ can be packed into $\mc{Z}_k$, the expression of the complexity term in~\eqref{eq:complexity-term} follows. 
        \end{remark}
        
        \Cref{theorem:general_lower-1} follows as a consequence of a more general statement, presented and proved in~\Cref{proof:general_lower-1}. The strict inequality in the definition of the complexity term in~\eqref{eq:complexity-term} immediately implies the following weaker,  but more interpretable,  version of the above statement. 
        
        \begin{corollary}
        \label{corollary:general_lower-2}
            Under the same assumptions as~\Cref{theorem:general_lower-1}, let $m_0$ denote the $w_0\defined \lp 2(n^{-(1-a)})/(\lambda M)\rp^{1/\nu}$ packing number of the set $\mc{Z}_0 \defined \{x \in \X: n^{-(1-a)} \leq f(x^*) - f(x) < 2n^{-(1-a)} \}$. Then, for any $a_0$-consistent algorithm $\mc{A}$, we have the following: 
            \begin{align}
                \mbb{E} \lb \mc{R}_n \lp f, \mc{A} \rp \rb = \Omega \lp \sigma^2 m_0\, n^{(1-a)} \rp. 
            \end{align}
        \end{corollary}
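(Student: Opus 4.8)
The plan is to derive \Cref{corollary:general_lower-2} as an immediate specialization of \Cref{theorem:general_lower-1}, by keeping only the $k=0$ term in the defining sum of the complexity term $\comp$. Recall from \eqref{eq:complexity-term} that $\comp(\Delta) = \sum_{k \geq 0} m_k/(2^{k+2}\Delta)$, and since every summand is nonnegative, we have the pointwise bound $\comp(\Delta) \geq m_0/(4\Delta)$ (this is exactly the strict inequality already recorded in the definition). Substituting $\Delta = \Delta_n \defined n^{-(1-a)}$ into \Cref{theorem:general_lower-1} then gives
\begin{align}
\mbb{E}\lb \mc{R}_n(f, \mc{A}) \rb \geq \frac{7\log 2}{4}\sigma^2 \comp\lp n^{-(1-a)} \rp \geq \frac{7\log 2}{4}\sigma^2 \cdot \frac{m_0}{4\, n^{-(1-a)}} = \frac{7 \log 2}{16}\, \sigma^2\, m_0\, n^{1-a},
\end{align}
for $n$ large enough (the exact threshold being inherited verbatim from \Cref{theorem:general_lower-1}). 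The constant $\tfrac{7\log 2}{16}$ is absorbed into the $\Omega(\cdot)$ notation, yielding $\mbb{E}\lb \mc{R}_n(f,\mc{A}) \rb = \Omega\lp \sigma^2 m_0 n^{1-a}\rp$ as claimed.

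The only bookkeeping point to check is that the definition of $w_0$ and $\mc{Z}_0$ in the corollary statement matches the $k=0$ instance of the definitions in \defref{def:complexity-term}. Setting $k=0$ in \defref{def:complexity-term} gives $\mc{Z}_0 = \widetilde{\X}(f,\Delta,2) = \{x \in \X: \Delta \leq f(x^*)-f(x) < 2\Delta\}$ and $w_0 = (3\Delta M_\nu/(\lambda M))^{1/\nu}$; with $\Delta = n^{-(1-a)}$ this is exactly (up to the $\nu$-dependent constant $3M_\nu$, which the corollary suppresses inside the $\Omega$ and the $\mc{O}$-type constants in the stated radius) the set and radius appearing in \Cref{corollary:general_lower-2}. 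So no new argument is needed beyond dropping all terms $k \geq 1$.

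There is essentially no obstacle here — the content is entirely in \Cref{theorem:general_lower-1}, and the corollary is a one-line weakening obtained by truncating a nonnegative series to its first term. If anything, the only thing worth a sentence of care is making sure the "$n$ large enough" caveat is stated consistently, since the truncation does not change the range of validity: the same lower bound on $n$ from \Cref{theorem:general_lower-1} suffices, because we are only discarding nonnegative quantities from the right-hand side.
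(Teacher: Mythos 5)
Your proposal is correct and matches the paper's own argument exactly: the paper derives \Cref{corollary:general_lower-2} from \Cref{theorem:general_lower-1} precisely by invoking the strict inequality $\comp(\Delta) > m_0/(4\Delta)$ built into \Cref{def:complexity-term} and substituting $\Delta = n^{-(1-a)}$. Your additional remark about the constant in the radius $w_0$ (the corollary's $2$ versus the definition's $3M_\nu$) is a fair observation of a bookkeeping discrepancy that is harmless inside the $\Omega(\cdot)$.
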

        
         \begin{remark}
            \label{remark:near_optimality_dimension}
            Corollary~\ref{corollary:general_lower-2} states that a key quantity characterizing the regret achievable by a uniformly good algorithm is the packing number of an annular near-optimal region associated with the given function. Informally, we can write $m_0 \approx w_0^{-\tilde{d}} \approx n^{(1-a)\tilde{d}/\nu}$, where $\tilde{d} \defined \liminf_{n \to \infty} \log(m_0)/\log(1/w_0)$.  The term $\tilde{d}$ is reminiscent of the concepts of \emph{near-optimality dimension} and \emph{zooming dimension} used in prior works in bandits in metric spaces~\citep{bubeck2011x, kleinberg2019bandits} as well  as in Gaussian Process bandits~\citep{shekhar2018gaussian}.  These works use similar notions to obtain upper bounds on the regret of algorithms that non-uniformly discretize the domain.  
        \end{remark}

        \begin{figure*}[hbt]
            \centering
            \begin{subfigure}[t]{0.5\textwidth}
                \centering
                \includegraphics[height=2.5in, width=\textwidth]{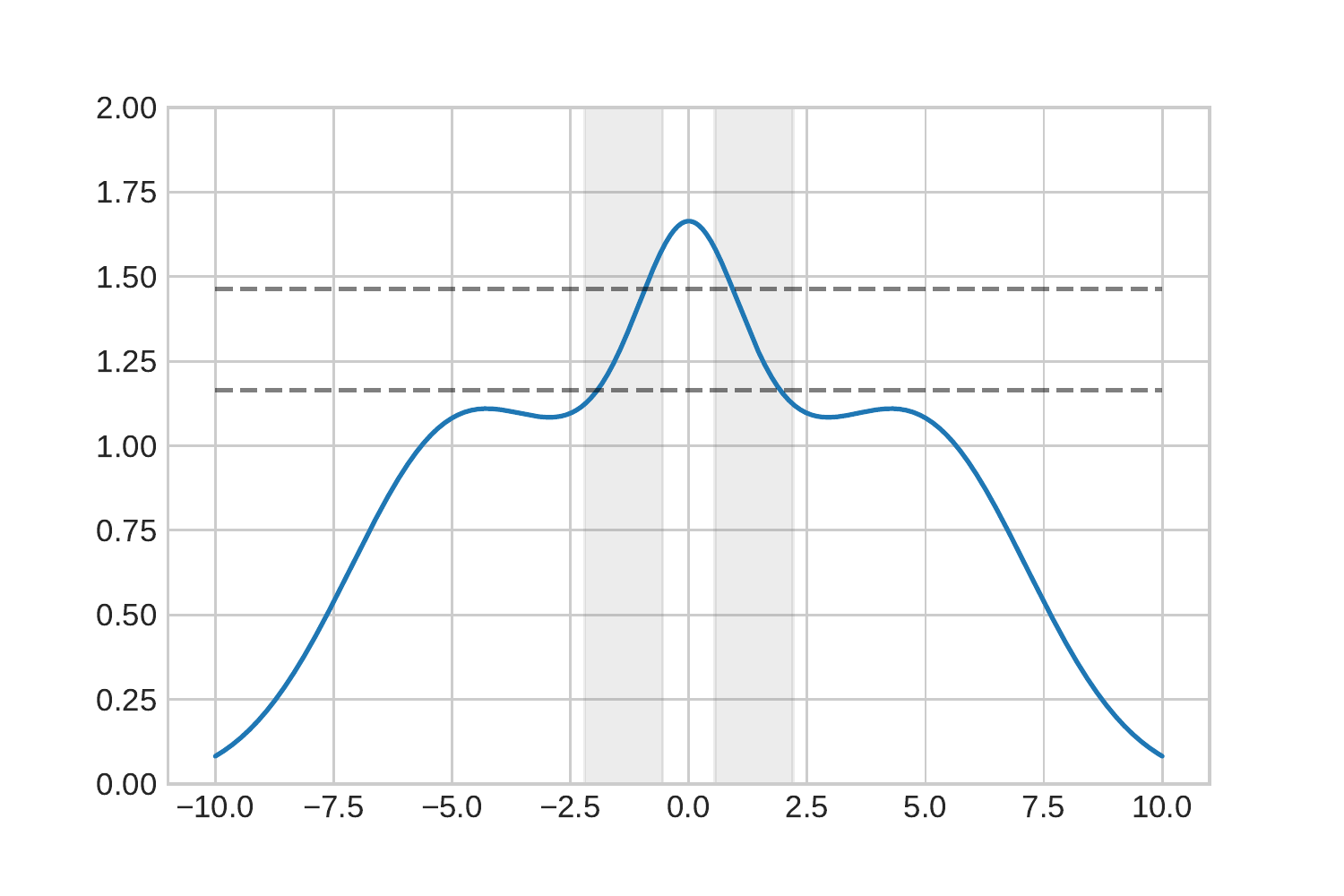}
            \end{subfigure}%
            ~ 
            \begin{subfigure}[t]{0.5\textwidth}
                \centering
                \includegraphics[height=2.5in, width=\textwidth]{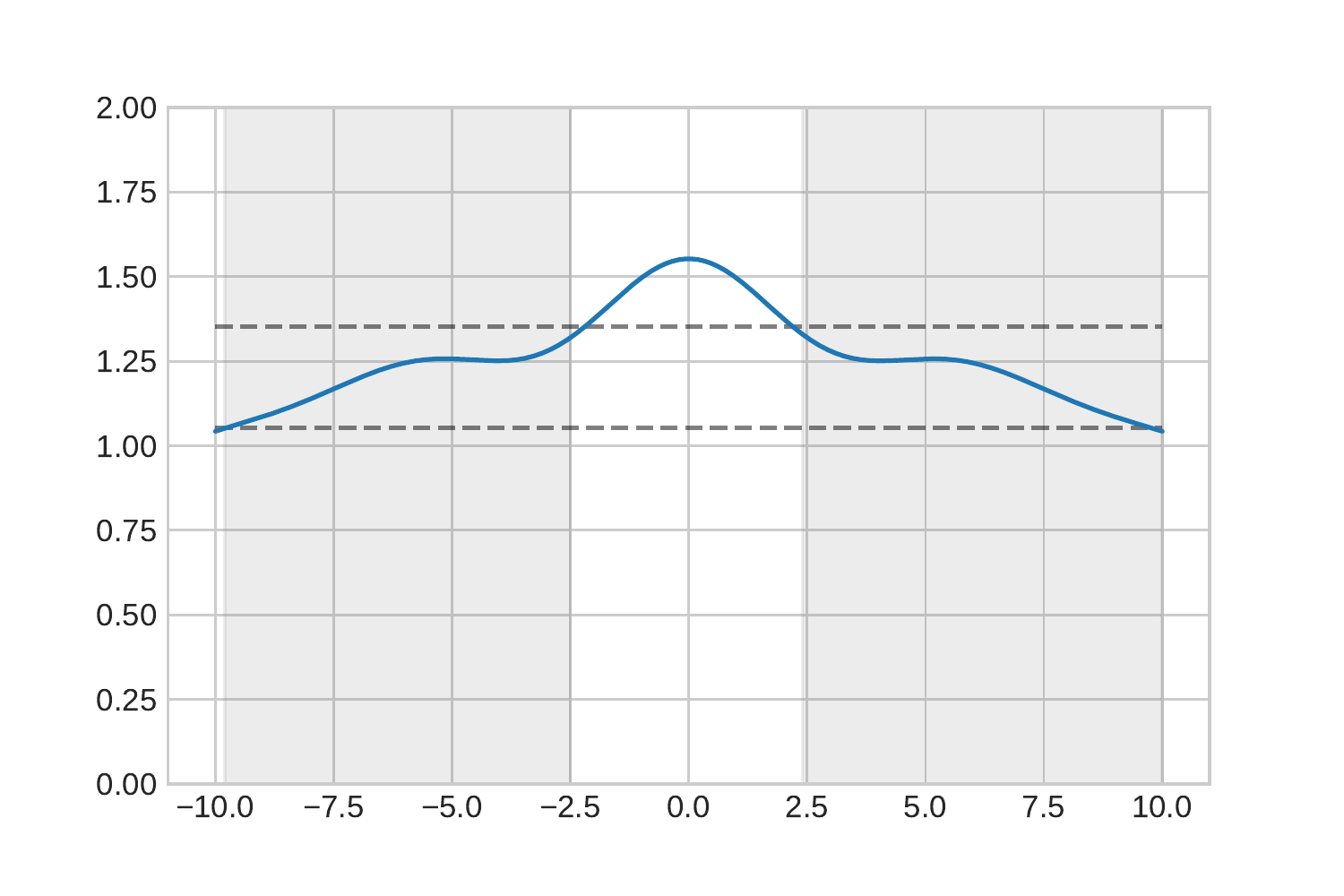}
            \end{subfigure}
            \caption{The gray shaded regions in the two figures depict the annular near-optimal set $\mc{Z}_0$ used in the lower bound in~\Cref{subsec:growth_lower}. As stated in the corollary, the complexity associated with a function can be lower-bounded by the $w_0$-packing number of the set $\mc{Z}_0$. Since the set $\mc{Z}_0$ associated with the figure on the right has a larger volume, it can be packed with more balls of radius $w_0$. Hence, the instance-dependent bound for the figure on the right is larger than the corresponding bound for the figure on the left. }
            \label{fig:toyfigures2}
        \end{figure*}
  
    In the next section, we specialize the above results to functions that satisfy an additional `local growth' condition, for which the complexity terms can be explicitly lower bounded in terms of more interpretable parameters. 
    
\subsection{Lower bound under growth condition}
\label{subsec:growth_lower}
In this section, we consider the class of functions satisfying the following additional assumption.

\begin{customassump}{3}[Growth Condition]
\label{assump:growth} 
We say that the objective function $f$ satisfies the local growth condition with parameters $(\cl, \cu, a, r_0)$ if for all $x \in B(x^*, r_0) \cap \X$, we have $\cl \|x - x^*\|^b \leq f(x*) - f(x) \leq \cu \|x - x^*\|^b$. We shall denote by $\mc{F}\lp \cl, \cu, b, r_0 \rp$ the class of all functions satisfying this property.
\end{customassump}
 Similar conditions have been used in analyzing the performance of first order stochastic optimization algorithms by~\citet{ramdas2013optimal} and in characterizing the minimax rates of active learning algorithms by~\citet{castro2008minimax}. 

As an example, consider the case when the function $f$ has continuous second order derivatives and its optimizer $x^*$ lies in the interior of the domain. Then, if the Hessian of $f$ at $x^*$ is non-singular, then we can find an $r_0>0$ such that for all $x$ in $B(x^*, r_0)$, the spectral norm of the Hessian of $f$ is between $\cl$ and $\cu$. Then the function $f$ satisfies the growth condition with exponent $2$ and constants $\cl$ and $\cu$. We now state the main result of this section. 

\begin{proposition}
\label{theorem:lower_local}
    Introduce the function class $\mc{G} \defined \rkhs[\kmat](M) \cap \mc{F} \lp  \cl, \cu, b, r_0 \rp$ for some $\nu>0$ and $b>\nu$.  Let $\mc{A}$ be an $a_0-$consistent  algorithm $a_0$ for the function class $\rkhs[\kmat](M)$. Then for any $a>a_0$ and $f \in \mc{G}$ with $\|f\|_{\rkhs[\kmat]}<M$, we have the following: 
    \begin{align}
        \liminf_{n \to \infty} \; \frac{ \mbb{E}\lb \mc{R}_n\lp \mc{A}, f \rp \rb }{ n^\alpha   } > 0 \qquad \text{for any } \quad \alpha < \lp 1 - a_0 \rp \lp 1 + \frac{d}{\nu} \lp 1 - \frac{\nu}{b} \rp \rp. 
    \end{align}
\end{proposition}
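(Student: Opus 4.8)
The plan is to apply the general lower bound of \Cref{theorem:general_lower-1} (equivalently \Cref{corollary:general_lower-2}) to a function $f\in\mc{G}$, and then estimate the packing numbers $m_k$ (or just $m_0$) using the two-sided polynomial growth guaranteed by \assumpref{assump:growth}. Fix $a>a_0$ and write $\Delta_n \defined n^{-(1-a)}$. The growth condition pins down the geometry of the near-optimal sets: if $\cl\|x-x^*\|^b \le f(x^*)-f(x) \le \cu\|x-x^*\|^b$ on $B(x^*,r_0)$, then for $\Delta_n$ small enough (so that the relevant radii are $\le r_0$) the annular set $\mc{Z}_k = \{x: 2^k\Delta_n \le f(x^*)-f(x) < 2^{k+1}\Delta_n\}$ contains the Euclidean annulus with inner radius $\rho_k^- \defined (2^k\Delta_n/\cu)^{1/b}$ and outer radius $\rho_k^+ \defined (2^{k+1}\Delta_n/\cl)^{1/b}$ intersected with $\X$. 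I would first handle the $k=0$ term only, which already suffices for the stated rate, and remark that including more terms can only help.

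Next I would lower-bound the packing number $m_0$ of $\mc{Z}_0$ at scale $w_0 = (2\Delta_n M_\nu/(\lambda M))^{1/\nu}$, where $\lambda = 1-\|f\|_{\rkhs[\kmat]}/M \in (0,1)$. The key point is the scaling separation between the two radii involved: the annulus $\mc{Z}_0$ lives at Euclidean scale $\Theta(\Delta_n^{1/b})$, while the packing radius is $w_0 = \Theta(\Delta_n^{1/\nu})$; since $b>\nu$, we have $\Delta_n^{1/b} \gg \Delta_n^{1/\nu}$ as $n\to\infty$, so the annulus is asymptotically \emph{fat} relative to the packing scale. A standard volume argument then gives $m_0 = \Omega\big((\rho_0^+/w_0)^{d}\big)$ up to constants depending on $d,b,\cl,\cu$ (one packs a $d$-dimensional annulus of outer radius $\rho_0^+$ and comparable width by roughly $(\rho_0^+/w_0)^d$ balls of radius $w_0$, using that the width $\rho_0^+-\rho_0^-$ is itself of order $\rho_0^+$ because $\cl<\cu$ forces a fixed multiplicative gap). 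Plugging in, $m_0 = \Omega\big( \Delta_n^{d/b - d/\nu} \big) = \Omega\big( n^{(1-a)(d/\nu - d/b)} \big)$.

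Then \Cref{corollary:general_lower-2} yields
\begin{align}
    \mbb{E}\lb \mc{R}_n(\mc{A}, f)\rb = \Omega\lp \sigma^2 m_0 n^{-(1-a)}\rp = \Omega\lp n^{(1-a)\big(\frac{d}{\nu}-\frac{d}{b}\big)} \cdot n^{-(1-a)} \rp = \Omega\lp n^{(1-a)\big(1 + \frac{d}{\nu}(1-\frac{\nu}{b})\big) \cdot (-1) + \ldots}\rp,
\end{align}
wait — more carefully: the exponent is $(1-a)\big(\tfrac{d}{\nu}-\tfrac{d}{b}\big) - (1-a) \cdot (-1)$? No: $m_0 n^{-(1-a)}$ has exponent $(1-a)(\tfrac d\nu - \tfrac db) - (1-a) = (1-a)\big(\tfrac d\nu - \tfrac db - 1\big)$, which is going the wrong way. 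I need to recheck the sign of the $\Delta_n$ power in $m_0$. Since $w_0 \to 0$, $m_0 = \Omega((\rho_0^+/w_0)^d)$ with $\rho_0^+/w_0 = \Theta(\Delta_n^{1/b - 1/\nu}) = \Theta(\Delta_n^{-(1/\nu - 1/b)})\to\infty$ because $1/\nu > 1/b$; so $m_0 = \Omega(\Delta_n^{-d(1/\nu-1/b)}) = \Omega(n^{(1-a)d(1/\nu - 1/b)})$. Then $m_0 n^{-(1-a)}$ has exponent $(1-a)\big(\tfrac d\nu - \tfrac db\big) - (1-a) = (1-a)\big(\tfrac d\nu - \tfrac db + 1\big)$ if I had $n^{+(1-a)}$; but \Cref{corollary:general_lower-2} has $m_0 n^{(1-a)}$, i.e.\ the \emph{positive} power, so the exponent is $(1-a)\big(1 + \tfrac d\nu - \tfrac db\big) = (1-a)\big(1 + \tfrac d\nu(1 - \tfrac\nu b)\big)$, matching the claim since $\tfrac d\nu - \tfrac db = \tfrac d\nu(1-\tfrac\nu b)$. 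Good. Since this holds for every $a>a_0$ and the exponent is continuous and increasing in $(1-a)$, taking $a \downarrow a_0$ gives the $\liminf$ statement for every $\alpha < (1-a_0)(1 + \tfrac d\nu(1-\tfrac\nu b))$, after noting $\liminf_n \mbb{E}[\mc{R}_n]/n^{\alpha'} > 0$ for $\alpha' = (1-a)(1+\tfrac d\nu(1-\tfrac\nu b)) > \alpha$ implies the same with the smaller exponent $\alpha$.

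**Main obstacle.** The delicate step is the packing-number lower bound for the annular region, specifically verifying that the annulus $\mc{Z}_0 \cap \X$ genuinely contains $\Omega((\rho_0^+/w_0)^d)$ disjoint $w_0$-balls. Two subtleties must be dispatched: (i) the \emph{width} of the annulus must be $\gtrsim w_0$ (not merely the outer radius), which follows from $\rho_0^+ - \rho_0^- \gtrsim \rho_0^+ \gg w_0$ — here $\rho_0^+/\rho_0^- = (2\cu/\cl)^{1/b}$ is a constant strictly exceeding $1$, so the annulus has a fixed relative width, independent of $n$; and (ii) boundary effects, i.e.\ one needs $x^*$ far enough from $\partial\X$, or alternatively one restricts to the portion of the annulus inside $\X$, which still captures a constant fraction of the volume when $\rho_0^+ \le r_0$ and $n$ is large — I would state this as the regime "$n$ large enough" exactly as in the hypotheses of \Cref{theorem:general_lower-1}. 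I would also need the elementary fact that the annulus-set inclusion $\{x\in\X: \rho_0^- \le \|x-x^*\| \le \rho_0^+, \|x-x^*\|\le r_0\} \subseteq \mc{Z}_0$ holds, which is immediate from the two-sided growth bound. Everything else is bookkeeping with the exponents, which I have sketched above.
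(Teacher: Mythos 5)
Your overall strategy coincides with the paper's: invoke the general perturbation lower bound, identify a Euclidean annulus around $x^*$ inside the near-optimal band using the two-sided growth condition, and lower-bound its packing number at scale $w_0=\Theta(\Delta_n^{1/\nu})$ by a volume argument, giving $m_0=\Omega(\Delta_n^{-\frac d\nu(1-\frac\nu b)})$ and hence the claimed exponent. Your final exponent bookkeeping (after the mid-proof correction) is right. However, there is a genuine gap in the geometric step, and it is precisely the point where the paper has to do something you do not.

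The containment you assert is backwards. From $\cl\|x-x^*\|^b\le f(x^*)-f(x)\le\cu\|x-x^*\|^b$, a point at radius $r$ is \emph{guaranteed} to satisfy $f(x^*)-f(x)\ge\Delta_n$ only when $\cl r^b\ge\Delta_n$, i.e.\ $r\ge(\Delta_n/\cl)^{1/b}$, and guaranteed to satisfy $f(x^*)-f(x)<2\Delta_n$ only when $\cu r^b<2\Delta_n$, i.e.\ $r<(2\Delta_n/\cu)^{1/b}$. So the annulus contained in $\mc{Z}_0$ has inner radius $(\Delta_n/\cl)^{1/b}$ and outer radius $(2\Delta_n/\cu)^{1/b}$ — not your $\rho_0^-=(\Delta_n/\cu)^{1/b}$, $\rho_0^+=(2\Delta_n/\cl)^{1/b}$, which describe the annulus that \emph{contains} $\mc{Z}_0\cap B(x^*,r_0)$. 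Consequently your claim that the relative width $\rho_0^+/\rho_0^-=(2\cu/\cl)^{1/b}>1$ is "a fixed multiplicative gap because $\cl<\cu$" is computed for the wrong annulus: the correct ratio is $(2\cl/\cu)^{1/b}$, which is $\le 1$ (an \emph{empty} annulus) whenever $\cu\ge2\cl$. In that regime the "$k=0$ term only" route through \Cref{corollary:general_lower-2}, which is hard-wired to the band $[\Delta_n,2\Delta_n)$, yields nothing.

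The paper's fix is to bypass the corollary and apply the one-step \Cref{theorem:general_lower} with the instance-dependent width parameter $c=c_n\defined 2^{1/d}\cu/\cl$, so that the band $\{\Delta_n\le f(x^*)-f(x)<c_n\Delta_n\}$ is guaranteed to contain the annulus with inner radius $(\Delta_n/\cl)^{1/b}$ and outer radius $(c_n\Delta_n/\cu)^{1/b}=(2^{1/d}\Delta_n/\cl)^{1/b}$, whose radius ratio $2^{1/(db)}$ is a constant strictly larger than $1$ regardless of $\cl,\cu$; the $c_n^{-2}$ penalty in the resulting bound is an $n$-independent constant and does not affect the exponent. To repair your argument you should either (a) switch to this wider band as the paper does, or (b) keep $c=2$ but sum over enough dyadic shells $k=0,\dots,K$ with $2^{K+1}>\cu/\cl$ so that $\cup_k\mc{Z}_k$ covers a non-degenerate annulus, and then argue that some shell captures a constant fraction of the packing. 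Your remaining points — the separation of scales $\Delta_n^{1/b}\gg\Delta_n^{1/\nu}$ from $b>\nu$, the boundary/$r_0$ caveats for $n$ large, and the passage from "for every $a>a_0$" to the stated $\liminf$ claim — are all fine and match the paper.
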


Since the existing algorithms, discussed in \Cref{subsec:related_work_bead}, satisfy the uniform regret condition introduced in Definition~\ref{def:uniform}, we can use Theorem~\ref{theorem:lower_local} to obtain the instance-dependent lower bounds for these algorithms. 

\begin{corollary}
\label{corollary:kernelucb}
    With $\mc{A}$ set to the  \kernelucb algorithm, Theorem~\ref{theorem:lower_local} implies the following for any $\nu>d/2$ and $f \in \mc{G}$: 
    \begin{align}
        \label{eq:lower_kernelucb}
            \liminf_{n \to \infty} \; \frac{ \mbb{E}\lb \mc{R}_n\lp \mc{A}, f \rp \rb }{ n^\alpha   } > 0 \qquad \text{for any } \quad \alpha <\lp 1 - \frac{d}{2\nu} \rp \frac{\nu  + d(1-\nu/b)}{2\nu + d}. 
    \end{align}
\end{corollary}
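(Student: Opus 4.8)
The plan is to simply instantiate \propref{theorem:lower_local} with the value of $a_0$ corresponding to the \kernelucb algorithm and simplify the resulting exponent. By \Cref{remark:uniform_rate_algorithms}, the \kernelucb algorithm of \citet{valko2013finite}, when combined with the information-gain bounds of \citet{vakili2021information}, achieves worst-case regret $\mc{R}_n = \tOh{n^{(\nu+d)/(2\nu+d)}}$ over $\rkhs[\kmat](M)$; consequently it is $a_0$-consistent in the sense of \defref{def:uniform} with $a_0 = a_\nu^* \defined (\nu+d)/(2\nu+d)$. Note that for $\nu > d/2$ we have $a_\nu^* < 1$, so this is a nontrivial value and the hypotheses of \propref{theorem:lower_local} (which require only $\nu > 0$, $b > \nu$, and $a_0$-consistency over $\rkhs[\kmat](M)$) are met for any $f \in \mc{G}$.

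First I would substitute $a_0 = (\nu+d)/(2\nu+d)$ into the exponent appearing in \propref{theorem:lower_local}, namely $(1-a_0)\bigl(1 + \tfrac{d}{\nu}(1 - \tfrac{\nu}{b})\bigr)$. The computation of $1 - a_0$ is immediate: $1 - \tfrac{\nu+d}{2\nu+d} = \tfrac{\nu}{2\nu+d}$. For the second factor, write $1 + \tfrac{d}{\nu}(1 - \tfrac{\nu}{b}) = \tfrac{\nu + d(1 - \nu/b)}{\nu}$. Multiplying the two factors, the $\nu$ in the numerator of $1-a_0$ cancels against the $\nu$ in the denominator of the second factor, leaving $\tfrac{\nu + d(1-\nu/b)}{2\nu+d}$. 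Hence \propref{theorem:lower_local} guarantees $\liminf_{n\to\infty} \mbb{E}[\mc{R}_n(\mc{A},f)]/n^\alpha > 0$ for any $\alpha < \tfrac{\nu+d(1-\nu/b)}{2\nu+d}$.

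It remains to reconcile this with the claimed bound, which carries an extra factor $(1 - \tfrac{d}{2\nu})$. The point is that $a_0$-consistency, and hence \propref{theorem:lower_local}, applies for \emph{any} $a > a_0$, so strictly we get the liminf positivity for every $\alpha < \alpha^*$ with $\alpha^* = \tfrac{\nu+d(1-\nu/b)}{2\nu+d}$; the stated corollary then follows a fortiori since $(1 - \tfrac{d}{2\nu})\alpha^* < \alpha^*$ whenever $\nu > d/2$ (so that $1 - \tfrac{d}{2\nu} \in (0,1)$). Thus asserting positivity of the liminf for all $\alpha < (1-\tfrac{d}{2\nu})\alpha^*$ is a (weaker, more conservative) consequence of \propref{theorem:lower_local}; one may present the corollary exactly as stated by noting this containment of ranges. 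I do not anticipate a genuine obstacle here — the only thing to be careful about is correctly tracking the algebra of the exponent and the direction of the inequality on $\alpha$; the substance is entirely contained in \propref{theorem:lower_local} and in the regret guarantee for \kernelucb recalled in \Cref{remark:uniform_rate_algorithms}.
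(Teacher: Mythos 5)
Your proposal is correct, and it follows the only route available: instantiate Proposition~\ref{theorem:lower_local} with the \kernelucb value $a_0=(\nu+d)/(2\nu+d)$ from Remark~\ref{remark:uniform_rate_algorithms} and simplify. Your algebra is right: $1-a_0=\nu/(2\nu+d)$ and the $\nu$'s cancel, giving the threshold $\alpha^*=\tfrac{\nu+d(1-\nu/b)}{2\nu+d}$, and your ``a fortiori'' step is logically sound since $(1-\tfrac{d}{2\nu})\alpha^*<\alpha^*$ when $\nu>d/2$ and $b>\nu$. The discrepancy you flagged is worth taking seriously rather than papering over, though: the exponent printed in Corollary~\ref{corollary:kernelucb} is exactly what one obtains by substituting the \gpucb/\gpts value $a_0=(\nu+3d/2)/(2\nu+d)$, for which $1-a_0=\tfrac{\nu-d/2}{2\nu+d}=\tfrac{\nu}{2\nu+d}\bigl(1-\tfrac{d}{2\nu}\bigr)$, and the hypothesis $\nu>d/2$ is precisely the condition for that $a_0$ to be below $1$; conversely, Corollary~\ref{corollary:gpucb} carries the exponent that the \kernelucb substitution actually produces. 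The two corollaries' exponents appear to be interchanged in the paper — note also that the swap as printed contradicts the paper's own surrounding claim that algorithms with tighter worst-case guarantees incur \emph{higher} instance-dependent lower bounds. So your derivation gives the correct (and stronger) statement for \kernelucb; the version in the paper is a weaker consequence of it, and your proof of that weaker version is valid, but in a revision you should state the bound with threshold $\tfrac{\nu+d(1-\nu/b)}{2\nu+d}$ and drop the spurious factor $(1-\tfrac{d}{2\nu})$ together with the restriction $\nu>d/2$, which is not needed for \kernelucb.
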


\begin{corollary}
\label{corollary:gpucb}
With $\mc{A}$ set to either \gpucb or \gpts, Theorem~\ref{theorem:lower_local} implies the following for any $\epsilon>0$ and $f \in \mc{G}$: 
\begin{align}
    \label{eq:lower_gpucb}
         \liminf_{n \to \infty} \; \frac{ \mbb{E}\lb \mc{R}_n\lp \mc{A}, f \rp \rb }{ n^\alpha   } > 0 \qquad \text{for any } \quad \alpha < \frac{\nu  + d(1-\nu/b)}{2\nu + d}.    
\end{align}
\end{corollary}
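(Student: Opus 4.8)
\textbf{Proof proposal for Corollary~\ref{corollary:gpucb}.}

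The plan is to derive Corollary~\ref{corollary:gpucb} as a direct instantiation of Proposition~\ref{theorem:lower_local}, exactly as Corollary~\ref{corollary:kernelucb} is obtained for \kernelucb. The key is to identify the correct value of the consistency exponent $a_0$ for \gpucb (and \gpts) and substitute it into the general lower bound. From Remark~\ref{remark:uniform_rate_algorithms}, both \gpucb and \gpts are $a_0$-consistent over $\rkhs[\kmat](M)$ with $a_0 = \min\{1, (\nu + 3d/2)/(2\nu+d)\}$, since their worst-case regret is $\tOh{n^{(3d/2+\nu)/(d+2\nu)}}$ by the bounds of \citet{srinivas2012information} combined with the improved information-gain estimates of \citet{vakili2021information} (and \citet{chowdhury2017kernelized} for \gpts). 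The polylogarithmic factors in these bounds are harmless: for any $a$ strictly larger than the stated exponent, $n^a$ dominates $n^{a_0}\cdot\mathrm{polylog}(n)$, so Definition~\ref{def:uniform} is satisfied.

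\textbf{Main step.} Apply Proposition~\ref{theorem:lower_local} with this $a_0$. The proposition gives, for any $f \in \mc{G}$ with $\|f\|_{\rkhs[\kmat]} < M$, that $\liminf_{n\to\infty} \mbb{E}[\mc{R}_n(\mc{A},f)]/n^\alpha > 0$ for any $\alpha < (1-a_0)(1 + (d/\nu)(1-\nu/b))$. It remains to simplify the quantity $1-a_0$. When $(\nu+3d/2)/(2\nu+d) \geq 1$, i.e. $\nu \leq d/2$, the exponent $a_0$ equals $1$, so $1-a_0 = 0$ and the bound is vacuous — this is why the interesting regime is $\nu > d/2$, though the statement as written holds trivially there too. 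When $\nu > d/2$ we have $a_0 = (\nu+3d/2)/(2\nu+d)$, hence
\begin{align}
    1 - a_0 = \frac{2\nu + d - \nu - 3d/2}{2\nu+d} = \frac{\nu - d/2}{2\nu+d}.
\end{align}
Multiplying by $(1 + (d/\nu)(1-\nu/b)) = (\nu + d(1-\nu/b))/\nu$ gives
\begin{align}
    (1-a_0)\lp 1 + \frac{d}{\nu}\lp 1 - \frac{\nu}{b}\rp\rp = \frac{\nu - d/2}{2\nu+d}\cdot\frac{\nu + d(1-\nu/b)}{\nu} = \lp 1 - \frac{d}{2\nu}\rp\frac{\nu + d(1-\nu/b)}{2\nu+d}.
\end{align}
Comparing with the claimed threshold $\alpha < (\nu + d(1-\nu/b))/(2\nu+d)$ in~\eqref{eq:lower_gpucb}, one sees the corollary as stated omits the factor $(1-d/2\nu)$ and is therefore a (slightly loosened, hence still correct) consequence whenever $1 - d/(2\nu) \leq 1$, which always holds; I would either reproduce the tighter form with the $(1-d/2\nu)$ factor or note that~\eqref{eq:lower_gpucb} follows a fortiori. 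The ``$\epsilon > 0$'' in the statement is just the usual device for passing from ``for all $a > a_0$'' to a clean $\liminf$ bound: pick $a = a_0 + \epsilon$.

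\textbf{Expected obstacle.} There is essentially no analytic difficulty here — the entire content is pushed into Proposition~\ref{theorem:lower_local}. The only point requiring care is bookkeeping around the polylogarithmic slack in the \gpucb/\gpts regret upper bounds when verifying $a_0$-consistency, and the order-of-quantifiers juggling ($\forall a > a_0$ versus the $\liminf$), both of which are routine. A secondary subtlety is confirming that the growth-condition intersection $\mc{G} = \rkhs[\kmat](M) \cap \mc{F}(\cl,\cu,b,r_0)$ is nonempty and that the hypothesis $b > \nu$ of Proposition~\ref{theorem:lower_local} is compatible with the regime of interest; this is inherited directly from the proposition's own setup and needs no extra argument.
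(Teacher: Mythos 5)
Your route is the intended one: Corollary~\ref{corollary:gpucb} is meant to follow by plugging the consistency exponent of \gpucb/\gpts from Remark~\ref{remark:uniform_rate_algorithms} into Proposition~\ref{theorem:lower_local}, and your algebra is correct — with $a_0=(\nu+3d/2)/(2\nu+d)$ (valid only when $\nu>d/2$) one gets the threshold $\lp 1-\tfrac{d}{2\nu}\rp\tfrac{\nu+d(1-\nu/b)}{2\nu+d}$.

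The problem is the last step, where you reconcile this with the printed statement: the direction of your ``a fortiori'' is backwards. The assertion ``$\liminf_n \mbb{E}[\mc{R}_n]/n^{\alpha}>0$ for all $\alpha<T$'' becomes \emph{stronger} as $T$ increases (it must then hold for larger $\alpha$, i.e.\ against a larger normalizer $n^{\alpha}$). Dropping the factor $1-d/(2\nu)\le 1$ therefore \emph{enlarges} the threshold and strengthens the claim, so the printed bound $\alpha<\tfrac{\nu+d(1-\nu/b)}{2\nu+d}$ does not follow from the bound you derived; rather, the implication runs the other way. What your computation actually establishes is~\eqref{eq:lower_gpucb} with the additional factor $\lp 1-\tfrac{d}{2\nu}\rp$ and the proviso $\nu>d/2$ — which is, verbatim, the display printed as Corollary~\ref{corollary:kernelucb}. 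Conversely, the threshold printed in Corollary~\ref{corollary:gpucb} is exactly what Proposition~\ref{theorem:lower_local} yields for $a_0=(\nu+d)/(2\nu+d)$, i.e.\ for \kernelucb. In other words, the two corollaries' right-hand sides appear to have been interchanged in the paper: the surrounding text (``algorithms with tighter uniform regret bounds incur higher instance-dependent lower bounds'') and the placement of the $\nu>d/2$ proviso (needed for \gpucb, not for \kernelucb, whose worst-case bound is always sublinear) both confirm this. So the correct resolution is not to declare the printed statement a weakening of your bound, but to state Corollary~\ref{corollary:gpucb} with the $\lp 1-\tfrac{d}{2\nu}\rp$ factor. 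A minor additional slip: when $\nu\le d/2$ the statement with the positive printed threshold is not ``trivially true''; only the vacuous version with threshold $0$ is.
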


\begin{corollary}
\label{corollary:pigpucb}
With $\mc{A}$ set to the  \pigpucb algorithm, Theorem~\ref{theorem:lower_local} implies the following for any $\nu>0$ and $f \in \mc{G}$: 
\begin{align}
    \label{eq:lower_pigpucb}
        \liminf_{n \to \infty} \; \frac{ \mbb{E}\lb \mc{R}_n\lp \mc{A}, f \rp \rb }{ n^\alpha   } > 0 \qquad \text{for any } \quad \alpha <  \frac{ (d/\nu + 2) \lp \nu + d(1- \nu/b) \rp }{ d(2d + 4) + 4\nu }. 
\end{align}
\end{corollary}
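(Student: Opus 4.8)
\textbf{Proof proposal.} The plan is to derive \Cref{corollary:pigpucb} as an immediate consequence of \Cref{theorem:lower_local} (the Proposition above), by substituting the appropriate consistency exponent $a_0$ for the \pigpucb algorithm and simplifying the resulting rate. Recall from \Cref{remark:uniform_rate_algorithms} that over the function class $\rkhs[\kmat](M)$, the \pigpucb algorithm of \citet{janz2020bandit} satisfies the uniform regret condition of \Cref{def:uniform} with
\begin{align}
    a_0 = \frac{d(2d+3) + 2\nu}{d(2d+4) + 4\nu},
\end{align}
which is exactly the exponent $e_\nu$ in its worst-case regret bound $\mc{R}_n = \tOh{n^{e_\nu}}$. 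The first thing to verify is that this $a_0$ lies in $(0,1)$, so that the lower-bound exponent produced below is strictly positive: a direct computation gives $1 - a_0 = (d + 2\nu)/\bigl(d(2d+4) + 4\nu\bigr) > 0$, while $a_0 > 0$ is immediate.

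Next I would apply \Cref{theorem:lower_local} with this value of $a_0$, to the function class $\mc{G} = \rkhs[\kmat](M) \cap \mc{F}(\cl, \cu, b, r_0)$; this is legitimate since \pigpucb is $a_0$-consistent over $\rkhs[\kmat](M) \supset \mc{G}$, and the proposition requires only $b > \nu$ and $\|f\|_{\rkhs[\kmat]} < M$. \Cref{theorem:lower_local} then yields, for every $f \in \mc{G}$ with $\|f\|_{\rkhs[\kmat]} < M$,
\begin{align}
    \liminf_{n \to \infty}\; \frac{\mbb{E}\lb \mc{R}_n(\mc{A}, f) \rb}{n^\alpha} > 0 \qquad \text{for any}\quad \alpha < (1 - a_0)\lp 1 + \frac{d}{\nu}\lp 1 - \frac{\nu}{b}\rp\rp,
\end{align}
and it remains only to rewrite the right-hand exponent in the stated form.

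The simplification is elementary: I would rewrite the second factor as $1 + \frac{d}{\nu} - \frac{d}{b} = \frac{\nu + d(1 - \nu/b)}{\nu}$, combine it with $1 - a_0 = \frac{d + 2\nu}{d(2d+4) + 4\nu}$, and use the identity $\frac{d + 2\nu}{\nu} = \frac{d}{\nu} + 2$ to obtain
\begin{align}
    (1 - a_0)\lp 1 + \frac{d}{\nu}\lp 1 - \frac{\nu}{b}\rp\rp = \frac{(d/\nu + 2)\lp \nu + d(1 - \nu/b)\rp}{d(2d + 4) + 4\nu},
\end{align}
which is precisely the bound claimed in the corollary. Since the whole argument consists of a substitution followed by routine algebra, there is no real obstacle; the only points requiring care are correctly quoting the $a_0$-consistency of \pigpucb (which rests on the regret guarantee of \citet{janz2020bandit} underlying \Cref{remark:uniform_rate_algorithms}) and checking that $a_0 < 1$ so that the exponent is nontrivial. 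An analogous substitution with the consistency exponents recorded in \Cref{remark:uniform_rate_algorithms} for \kernelucb and for \gpucb/\gpts gives \Cref{corollary:kernelucb} and \Cref{corollary:gpucb} in the same way.
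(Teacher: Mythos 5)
Your proposal is correct and is exactly the argument the paper intends: \Cref{corollary:pigpucb} follows by plugging the \pigpucb consistency exponent $a_0 = (d(2d+3)+2\nu)/(d(2d+4)+4\nu)$ from \Cref{remark:uniform_rate_algorithms} into \Cref{theorem:lower_local} and simplifying via $1-a_0 = (d+2\nu)/(d(2d+4)+4\nu)$, and your algebra checks out. The only cosmetic caveat is that \Cref{theorem:lower_local} also requires $b>\nu$ (and the underlying \pigpucb guarantee is stated for $\nu>1$), but this is an imprecision inherited from the corollary's statement rather than a gap in your argument.
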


The results of the above corollaries are presented for some specific $\nu$ and $b$ values in Figure~\ref{fig:lower_bound}. As we can see, algorithms with tighter uniform regret bounds incur higher instance-dependent lower bounds. Furthermore, the gap between the upper and lower bound decreases with increasing $b$. 

\begin{figure*}[t!]
    \centering
    \begin{subfigure}[t]{0.5\textwidth}
        \centering
        \includegraphics[height=2.5in, width=\textwidth]{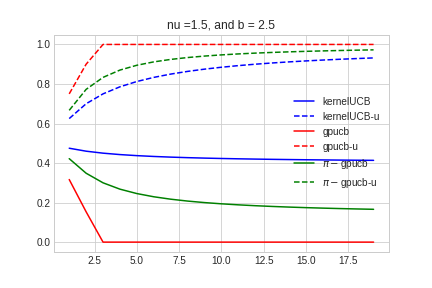}
        \caption{}
    \end{subfigure}%
    ~ 
    \begin{subfigure}[t]{0.5\textwidth}
        \centering
        \includegraphics[height=2.5in, width=\textwidth]{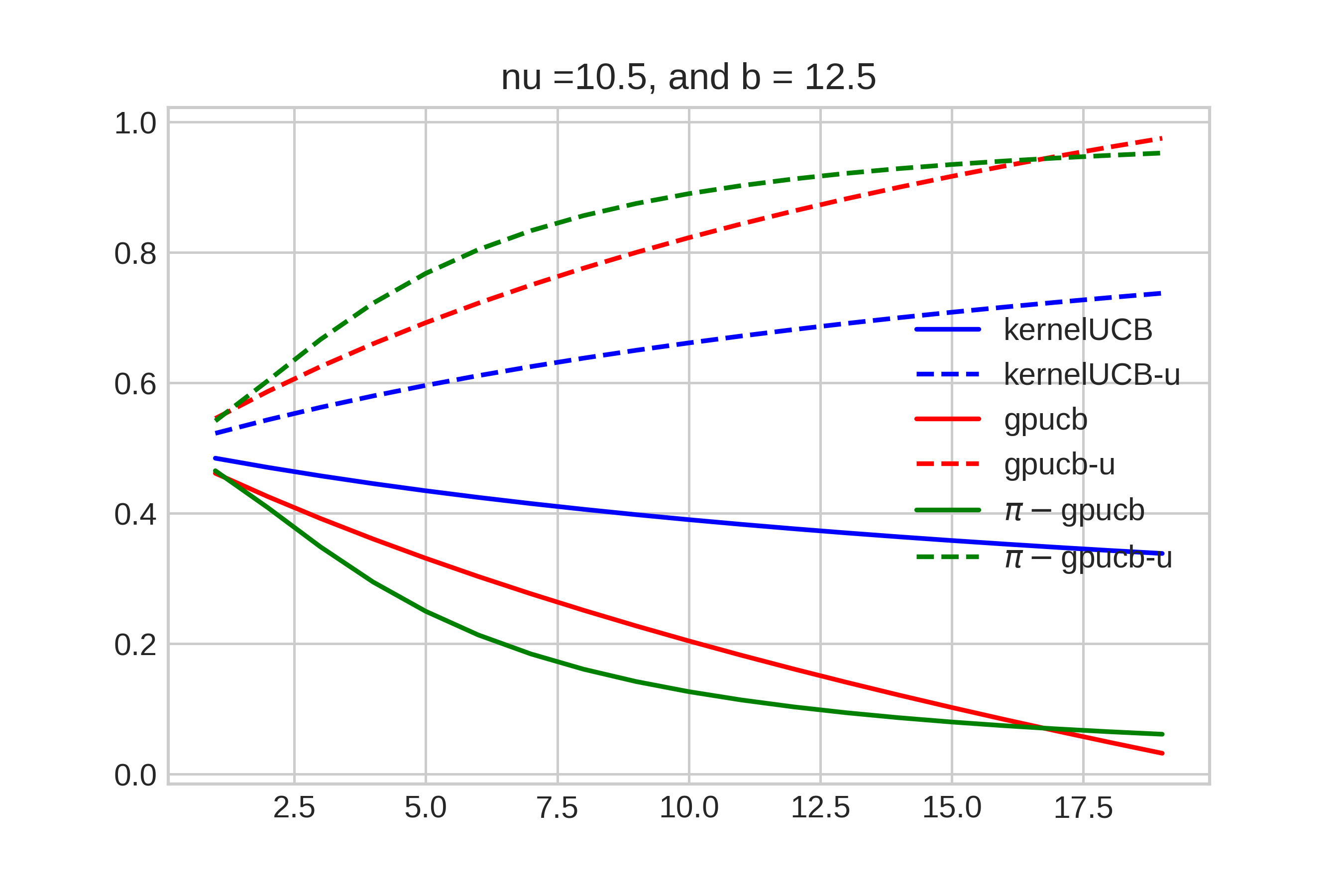}
        \caption{}
    \end{subfigure}
    \caption{The gap between the instance-dependent lower bounds~(solid curves) under growth condition derived in~\Cref{theorem:lower_local}, and the worst-case regret upper bounds~(dashed curves) of different commonly used algorithms for two pairs of $\nu$~(the smoothness parameter) and $b$~(the growth rate parameter) values.}
\end{figure*}

\section{Instance-Dependent Upper Bound}
\label{sec:upper-bound}
As mentioned in the introduction, the theoretical analysis of the existing kernelized bandits algorithms upper bound their regret in terms of quantities such as the maximum information gain, $\igain$, that depend on the entire function class. Hence, such results do not adapt to the hardness of the specific problem instance within the class -- they predict the same upper bound for the `easiest' as well as the `hardest' problem in the class. We take a step towards addressing this issue, and describe a simple algorithm that is minimax near-optimal but also admits tighter upper bounds for easier problem instances.

\begin{definition}[\ttt{Upper-Complexity}]
\label{def:upper-complexity}
Consider a function $f \in \rkhs[\kmat]$ with $\nu>0$, and define $\xi = \min\{1, \nu\}$. For given constants $\Delta, \rho \in (0,1)$ and $c_1, c_2>0$,  introduce the set $\widetilde{\mc{Z}}_k \defined \{x \in \X: f(x^*) -f(x) \leq c_1  (1/\rho)^{k \xi} \Delta \}$ for $k \geq 0$, and let $\widetilde{m}_k$ denote the $2 \widetilde{w}_k \defined 2 c_2\lp (1/\rho)^{k\xi}\Delta\rp^{1/\xi}$ packing number of the set $\widetilde{\mc{Z}}_k$. With these terms introduced,  define the upper-complexity term as follows:
\begin{align}
    \label{eq:upper-complexity}
    \compupper(\Delta, \nu, \rho, c_1, c_2)  \defined   \sum_{k \geq 0}  \frac{\widetilde{m}_k}{ (1/\rho)^{k \xi} \Delta }. 
\end{align}
In the sequel, we will drop the $\nu$, $\rho$, $c_1$ and $c_r2$ dependence of the complexity term, and simple denote it by $\compupper(\Delta)$. 
\end{definition}

\begin{remark}[Comparison of $\comp$ and $\compupper$]
\label{remark:comparison-of-complexity-terms}
    The upper-complexity term introduced above has a similar form as the corresponding lower-complexity term~$(\comp)$, introduced earlier in~\Cref{def:complexity-term}: both complexity measures sum over terms involving a packing number of a near-optimal set in the numerator, and an exponentially growing term times $\Delta$ in the denominator.
    Despite this similarity, the upper-complexity term is, in general, larger than the corresponding lower-complexity term. This is because $\widetilde{w}_k$ is proportional to $\Delta^{1/\xi}$, while $w_k$~(in~\Cref{def:complexity-term}) is proportional to $\Delta^{1/\nu}$. Since $\nu \geq \xi \defined \min \{1, \nu\}$, and $\Delta<1$, the term $\widetilde{m}_k$ represents a much tighter packing than the corresponding term, $m_k$ in~\Cref{def:complexity-term}. Another, less important, factor in $\compupper$ being larger than $\comp$ is that the set $\widetilde{\mc{Z}}_k$ is usually larger than the corresponding `annular' set $\mc{Z}_k$ used in defining $\comp$. 
\end{remark}

We now state the main result of this section stating that there exists a minimax near-optimal algorithm, whose instance-dependent regret can be characterized by the upper-complexity term defined above. The details of the algorithms are presented in~\Cref{sec:algorithms_bead}. 

\begin{theorem}
\label{theorem:general-upper}
    For the class of functions $f \in \rkhs[\kmat](M)$, there exists an algorithm~(denoted by $\mc{A}_1$) that is $a_0$-consistent with $a_0 a_{\nu}^* =(\nu+d)/(\nu+2d)$, and also satisfies the following instance-dependent upper bound on the expected regret for $n$ large enough: 
    \begin{align}
        \label{eq:general-upper-0}
        \mbb{E}\lb \mc{R}_n \lp \mc{A}_1, f \rp \rb = \widetilde{\mc{O}} \lp \compupper \lp \Delta_n \rp \rp,   
    \end{align}
    where $\Delta_n = \min \{n^{-(1-a_{\nu}^*)}, \rho^{H_n \xi}\}$, where $H_n$ is defined precisely in~\Cref{lemma:regret_bound3} in~\Cref{appendix:bead_upper}.  The $\widetilde{\mc{O}}$ term suppresses polylogarithmic factors in $n$. Note that the value of $a_0$ stated above implies that the algorithm $\mc{A}_1$ is minimax near-optimal. 
\end{theorem}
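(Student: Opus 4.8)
The plan is to realize $\mc{A}_1$ as the tree-structured adaptive-partitioning scheme of~\Cref{sec:algorithms_bead}: it maintains a nested partition of $\X=[0,1]^d$ into cells, runs a local polynomial regression (\localpoly, of degree tuned to $\nu$) inside each active cell to obtain a local estimate of $f$ equipped with a statistical-error term \errS and a bias/oscillation term \errD, eliminates a cell once its cell-wise upper confidence bound drops below the running maximum of the cell-wise lower confidence bounds, and refines a surviving cell into its children \child once enough queries have been spent there to drive \errS below the \errD-scale of the next finer resolution. Two ingredients then drive everything: a high-probability event $\calE$ controlling all confidence bounds uniformly, and a decomposition of the regret on $\calE$ by resolution level.

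For $\calE$: using \assumpref{assump:noise-upper} for the statistical part, the fact that an $f\in\rkhs[\kmat](M)$ restricted to a radius-$w$ cell has \localpoly-bias and within-cell oscillation bounded by explicit functions of $w$ with constants depending only on $M,\nu,d$, and a union bound over the at most $\mathrm{poly}(n)$ cells ever instantiated, I would show that on $\calE$, simultaneously over all cells and all rounds $t\le n$, the local estimate is within $\errS+\errD$ of $f$ on the corresponding cell, and that $\mbb{P}(\calE^c)\le 1/n$; the union bound only inflates the widths by polylog-in-$n$ factors, and on $\calE^c$ the regret is at most $n\cdot\sup_x\big(f(\xstar)-f(x)\big)=\mc{O}(n)$, contributing $\mc{O}(1)$ to the expectation, so it suffices to bound the regret on $\calE$.

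On $\calE$ the argument is bookkeeping. \emph{Correctness:} since $f(\xstar)$ sits above the lower bound and below the upper bound of every cell containing $\xstar$, the chain of cells through $\xstar$ survives forever; hence any active level-$h$ cell $E$ (radius $\asymp\widetilde w_h$ in the notation of~\defref{def:upper-complexity}) must have every point suboptimal by $\mc{O}\big((1/\rho)^{h\xi}\Delta\big)$ up to polylog factors — otherwise comparison with the level-$h$ cell through $\xstar$ would have removed $E$ — i.e.\ $E\subseteq\widetilde{\mc{Z}}_h$ for suitable $c_1,c_2$, and, the active level-$h$ cells being disjoint and $\widetilde w_h$-separated, there are at most $\widetilde m_h$ of them. \emph{Cost:} a level-$h$ cell is refined or eliminated after $n_h=\widetilde{\mc{O}}\big(((1/\rho)^{h\xi}\Delta)^{-2}\big)$ queries — the number needed to push \errS below the level-$h$ width — and each query inside it incurs instantaneous regret $\mc{O}\big((1/\rho)^{h\xi}\Delta\big)$; so the regret charged to level $h$ is $\widetilde{\mc{O}}\big(\widetilde m_h\,(1/\rho)^{-h\xi}\Delta^{-1}\big)$, exactly the $h$-th term of $\compupper$. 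Summing over $h=0,\dots,H_n$ (the deepest level the budget permits), with a convergent negligible tail beyond $H_n$, gives $\mbb{E}[\mc{R}_n(\mc{A}_1,f)]=\widetilde{\mc{O}}(\compupper(\Delta_n))$; the budget identity $\sum_{h\le H_n}\widetilde m_h\,n_h\le n$ is what pins down $H_n$ and hence $\Delta_n=\min\{n^{-(1-a_{\nu}^*)},\rho^{H_n\xi}\}$, and is where~\lemmaref{lemma:regret_bound3} does its work. For the minimax claim I would run the same decomposition but replace $\widetilde m_h$ by the crude volumetric bound $\widetilde m_h\le C\,\widetilde w_h^{-d}$ and use the degree-$\nu$ \localpoly bias so that the effective value-certification scale in a radius-$w$ cell is $w^{\nu}$ rather than $w^{\xi}$; the budget constraint then forces the standard tuning that yields $\mbb{E}[\mc{R}_n(\mc{A}_1,f)]=\widetilde{\mc{O}}(n^{a_{\nu}^*})$ for every $f\in\rkhs[\kmat](M)$, which is exactly $a_{\nu}^*$-consistency.

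The crux will be the confidence-width / refine-schedule calibration: one must (i) establish that \errD for a radius-$w$ cell really is of the claimed order uniformly over $\rkhs[\kmat](M)$ — the fact that the \emph{adaptive} bound can only certify the Hölder-$\xi$ within-cell modulus, not the $w^{\nu}$-height bumps available to the adversary in~\defref{def:complexity-term}, is precisely why $\compupper$ is looser than $\comp$; (ii) guarantee that no cell is \emph{over}-sampled, since an $n_h$ off by more than polylog corrupts both the budget identity and the value of $\Delta_n$; and (iii) control the deepest active level $H_n$, coupling the global budget to the per-level packing numbers, so that $\Delta_n$ comes out as stated. The remaining pieces — the union bound, elimination correctness, and the geometric summations — are routine once these are settled.
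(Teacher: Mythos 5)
Your instance-dependent half tracks the paper closely: the high-probability event with a union bound over polynomially many cells, elimination correctness via the surviving chain through $x^*$, the per-cell query count $\widetilde{\mc{O}}\big(((1/\rho)^{h\xi}\Delta)^{-2}\big)$, the packing-number bound on the number of active level-$h$ cells, and the level-by-level summation into $\compupper(\Delta_n)$ are exactly the paper's Lemmas~\ref{lemma:suboptimality}, \ref{lemma:regret_bound2} and~\ref{lemma:regret_bound3}. The paper's local estimator is the regularized GP posterior of \posterior rather than \localpoly, but for this half the substitution is largely cosmetic.

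The genuine gap is in the minimax claim. You propose to obtain $\widetilde{\mc{O}}(n^{a_\nu^*})$ by combining the volumetric bound $\widetilde m_h \le C\,\widetilde w_h^{-d}$ with a degree-$\nu$ \localpoly bias so that a radius-$w$ cell certifies values at scale $w^\nu$. But the certification scale is baked into the refinement schedule, and it cannot be $w^\nu$ for the minimax analysis and $w^\xi$ for the instance-dependent analysis of the \emph{same} algorithm. With the $\xi$-scale refinement that your $\compupper$ derivation requires (and that \Cref{algo:bead} uses: it refines once the posterior width falls below $L(v_1\rho^h)^\xi$), the volumetric count yields only $\widetilde{\mc{O}}\big(n^{(\xi+d)/(2\xi+d)}\big)$, strictly worse than $n^{(\nu+d)/(2\nu+d)}$ whenever $\nu>1$; if instead you refine at the $\nu$-scale, the instance-dependent bound no longer assembles into $\compupper$, whose packing radii are $\Delta^{1/\xi}$ rather than $\Delta^{1/\nu}$. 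The paper avoids this tension by not using a volumetric count for the minimax claim at all: because the within-level sampling rule is pure maximum-posterior-variance, the queries at a level depend only on past queries and not on the observations, giving the conditional independence of Proposition~\ref{prop:conditional_indep}; this yields \kernelucb-style confidence widths (Lemma~\ref{lemma:concentration}) and the information-gain bound $N_h = \widetilde{\mc{O}}\lp \rho^{-2h\xi}\igain \rp$ of Lemma~\ref{lemma:regret_bound1}, whence $\mc{R}_n = \widetilde{\mc{O}}\lp \sqrt{n\igain} \rp = \widetilde{\mc{O}}(n^{a_\nu^*})$ for \matern kernels via the bounds of \citet{vakili2021information}. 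Without this ingredient (or an equivalent spectral argument), your construction does not deliver both halves of the theorem with a single algorithm.
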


We now specialize the above result to the special case in which the functions also satisfy the additional local-growth condition. 

\begin{proposition}
\label{theorem:bead_upper}
Suppose the Assumption~\ref{assump:rkhs} holds with $K = \kmat$ for some $\nu>0$, Assumption~\ref{assump:noise-upper} holds with parameter $\sigma^2$,  and Assumption~\ref{assump:growth} holds with exponent $b$. Then, the cumulative regret of Algorithm~\ref{algo:bead} satisfies the following, with $\xi = \min\{1, \nu\}$: 
\begin{align}
    \label{eq:bead_regret}
    \mbb{E} \lb \mc{R}_n \lp \mc{A}_1, f \rp \rb = \widetilde{\mc{O}}(n^a), \quad \text{where }\;  a \defined   \min \lp  \frac{d+\nu}{d+2\nu}, \; \frac{ d(1-\xi/b)^{+} + \xi}{d(1-\xi/b)^{+} + 2\xi} \rp.
\end{align}
The notation $(z)^+$ refers to $\max \{0, z\}$ and the notation $\tilde{\mc{O}}$ hides the polylogarithmic factors in the upper bound. 
\end{proposition}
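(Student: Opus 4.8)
The plan is to derive \Cref{theorem:bead_upper} as a corollary of \Cref{theorem:general-upper} by evaluating the upper-complexity term $\compupper(\Delta_n)$ explicitly under the additional local-growth condition of \Cref{assump:growth}. First I would translate the growth condition into a volume bound on the near-optimal sets $\widetilde{\mc{Z}}_k$: since $f(x^*)-f(x)\ge \cl\|x-x^*\|^b$ on $B(x^*,r_0)$, the condition $f(x^*)-f(x)\le c_1(1/\rho)^{k\xi}\Delta$ forces $x$ into a ball around $x^*$ of radius at most $O\bigl((c_1(1/\rho)^{k\xi}\Delta/\cl)^{1/b}\bigr)$ (modulo the $r_0$-truncation, which only kicks in for large $k$ and can be absorbed into constants). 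Hence $\widetilde{\mc{Z}}_k$ is contained in a Euclidean ball of radius $R_k \asymp ((1/\rho)^{k\xi}\Delta)^{1/b}$, and a standard volumetric packing bound gives
\begin{align}
\widetilde{m}_k \;\le\; \Bigl(\tfrac{C R_k}{\widetilde{w}_k}\Bigr)^{d} \;\asymp\; \left( \frac{\bigl((1/\rho)^{k\xi}\Delta\bigr)^{1/b}}{\bigl((1/\rho)^{k\xi}\Delta\bigr)^{1/\xi}}\right)^{d} \;=\; \bigl((1/\rho)^{k\xi}\Delta\bigr)^{-d(1/\xi - 1/b)}. \nonumber
\end{align}
If $1/\xi - 1/b \le 0$ (i.e.\ $b \le \xi$, so the $(1-\xi/b)^+$ term vanishes), then $\widetilde{m}_k$ is $O(1)$ instead, which is why the $(\cdot)^+$ appears in the final exponent.

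Next I would plug this bound into the definition of $\compupper(\Delta)$ in \eqref{eq:upper-complexity}:
\begin{align}
\compupper(\Delta) \;=\; \sum_{k\ge 0} \frac{\widetilde{m}_k}{(1/\rho)^{k\xi}\Delta} \;\lesssim\; \sum_{k\ge 0} \frac{\bigl((1/\rho)^{k\xi}\Delta\bigr)^{-d(1/\xi-1/b)^+}}{(1/\rho)^{k\xi}\Delta} \;=\; \Delta^{-1 - d(1/\xi - 1/b)^+}\sum_{k \ge 0} (\rho)^{k\xi\bigl(1 + d(1/\xi-1/b)^+\bigr)}. \nonumber
\end{align}
Since $\rho \in (0,1)$ and the exponent $1 + d(1/\xi-1/b)^+ > 0$, the geometric sum converges to a constant, so $\compupper(\Delta) = O\bigl(\Delta^{-(1 + d(1/\xi - 1/b)^+)}\bigr)$. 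It remains to substitute $\Delta = \Delta_n = \min\{n^{-(1-a_\nu^*)}, \rho^{H_n\xi}\}$ and, in particular, to use that $\Delta_n \ge n^{-(1-a_\nu^*)}$ is impossible so $\Delta_n^{-1} \le n^{1-a_\nu^*}$ up to the $H_n$ correction; tracking the $\rho^{H_n\xi}$ branch (which is where the polylog factors and the second term in the min come from) yields $\mbb{E}[\mc{R}_n] = \widetilde{\mc{O}}\bigl(n^{(1-a_\nu^*)(1 + d(1/\xi-1/b)^+)}\wedge (\text{minimax rate})\bigr)$. A short computation shows $(1-a_\nu^*)(1 + d(1/\xi - 1/b)^+) = \frac{d(1-\xi/b)^+ + \xi}{d(1-\xi/b)^+ + 2\xi}$ after clearing denominators (using $1-a_\nu^* = \xi/(2\xi + d(1-\xi/b)^+)$ in the relevant regime), matching the stated exponent $a$, while the $n^{(d+\nu)/(d+2\nu)}$ term is exactly the minimax guarantee inherited from $a_0$-consistency of $\mc{A}_1$.

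The main obstacle I anticipate is not the packing/volume estimate — that is routine — but the bookkeeping around $\Delta_n$ and $H_n$: \Cref{theorem:general-upper} states the bound in terms of $\Delta_n = \min\{n^{-(1-a_\nu^*)}, \rho^{H_n\xi}\}$ where $H_n$ is a depth parameter of the algorithm defined only in \Cref{lemma:regret_bound3}, and one must verify that under the growth condition $H_n$ is large enough (grows like $\log n$) that the $\rho^{H_n\xi}$ branch does not dominate, or equivalently that the partitioning depth the algorithm reaches matches the resolution dictated by the growth exponent $b$. Concretely, the algorithm only refines down to cells whose contribution is still useful, and the growth condition controls how quickly the near-optimal volume shrinks, so I would need to argue $\rho^{H_n \xi} \lesssim n^{-(1-a_\nu^*)}\cdot\mathrm{polylog}(n)$ in the regime $b > \xi$ and handle the $b \le \xi$ case separately (there the $(\cdot)^+$ collapses and the bound is dominated by the $O(1)$ packing numbers, giving the cleaner $\frac{\xi}{2\xi} = 1/2$-type behaviour folded into the min). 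Apart from that, the only delicate point is the truncation at radius $r_0$ in \Cref{assump:growth}: for $k$ beyond some $k^*(n) = O(\log n)$ the set $\widetilde{\mc{Z}}_k$ stops growing, but since the summand is already geometrically decaying this only changes constants, so I would note it and move on.
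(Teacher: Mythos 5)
Your proposal is correct and follows essentially the same route as the paper: the growth condition confines the near-optimal sets to balls of radius $\asymp(\text{suboptimality})^{1/b}$, a volumetric packing bound gives $\widetilde{m}_k \lesssim ((1/\rho)^{k\xi}\Delta)^{-d(1/\xi-1/b)^+}$, the geometric sum collapses $\compupper(\Delta)$ to $O(\Delta^{-(1+d(1/\xi-1/b)^+)})$, and the budget constraint pins down the reachable depth $H_n$ so that $\rho^{H_n\xi}\asymp n^{-\xi/(2\xi+d(1-\xi/b)^+)}$, yielding the stated exponent (the paper phrases this directly in terms of $N_h$ and $m_h$ via Lemma~\ref{lemma:regret_bound2} rather than through $\compupper$, but Lemma~\ref{lemma:regret_bound3} shows these sums coincide). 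One small caution: your aside that ``$\Delta_n^{-1}\le n^{1-a_\nu^*}$'' has the inequality reversed (the min makes $\Delta_n^{-1}\ge n^{1-a_\nu^*}$), but this does not affect the argument since, as you then correctly note, the relevant branch is $\rho^{H_n\xi}$ and the exponent is computed from the budget constraint on $H_n$.
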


The proof of this result is given in Appendix~\ref{appendix:bead_upper}. In particular, this result implies that for a fixed $b>\nu$, for all values of $d \geq 1$ and $0 < \nu < \frac{1}{1-b}$, the regret achieved by Algorithm~\ref{algo:bead} is tighter than the minimax rate (achieved by \kernelucb). Furthermore, for a fixed $\nu>0$, the amount of possible improvement increases with decreasing values of $b$.  
\begin{figure*}[t!]
    \centering
    \begin{subfigure}[t]{0.5\textwidth}
        \centering
        \includegraphics[height=2.5in, width=\textwidth]{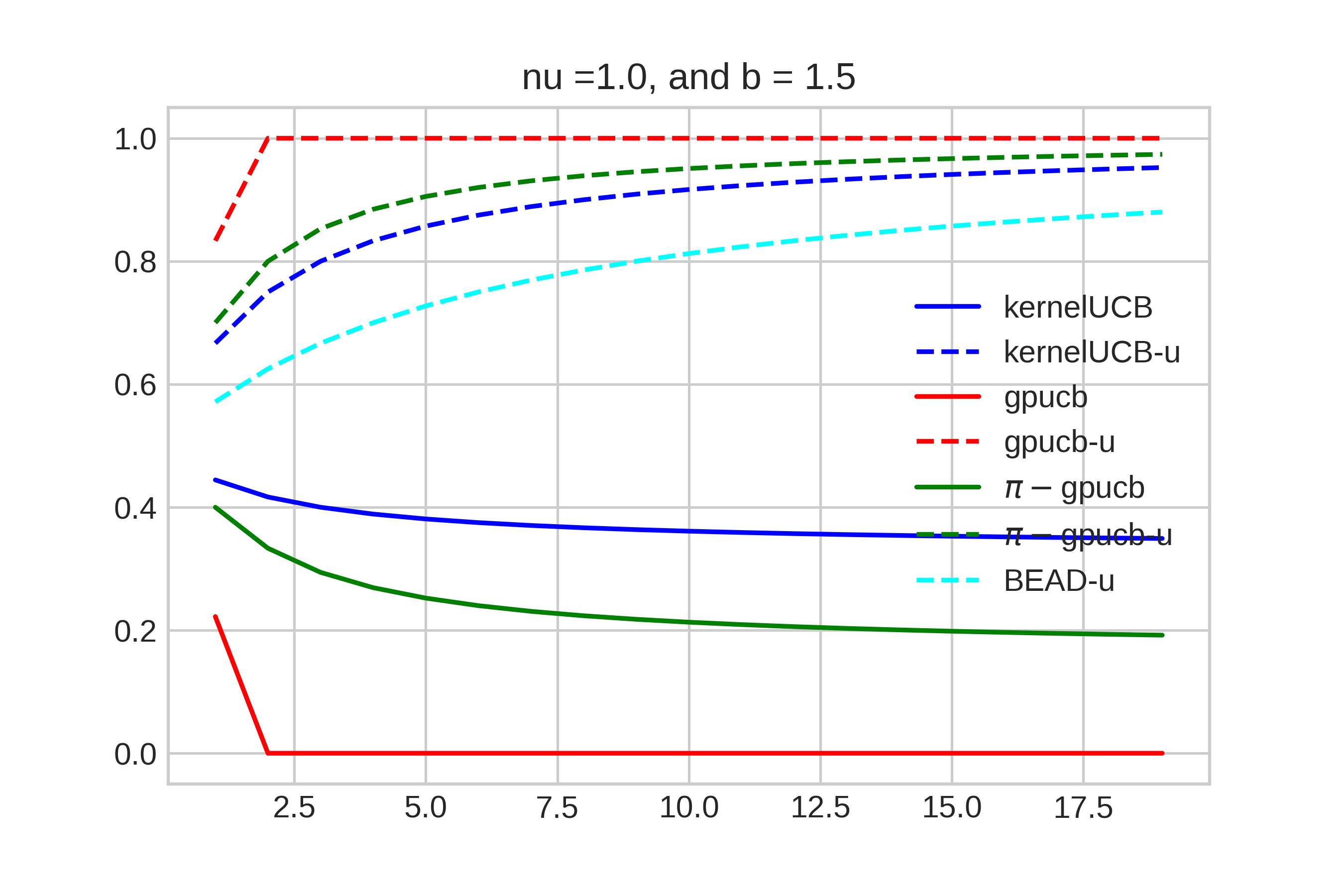}
    \end{subfigure}%
    ~ 
    \begin{subfigure}[t]{0.5\textwidth}
        \centering
        \includegraphics[height=2.5in, width=\textwidth]{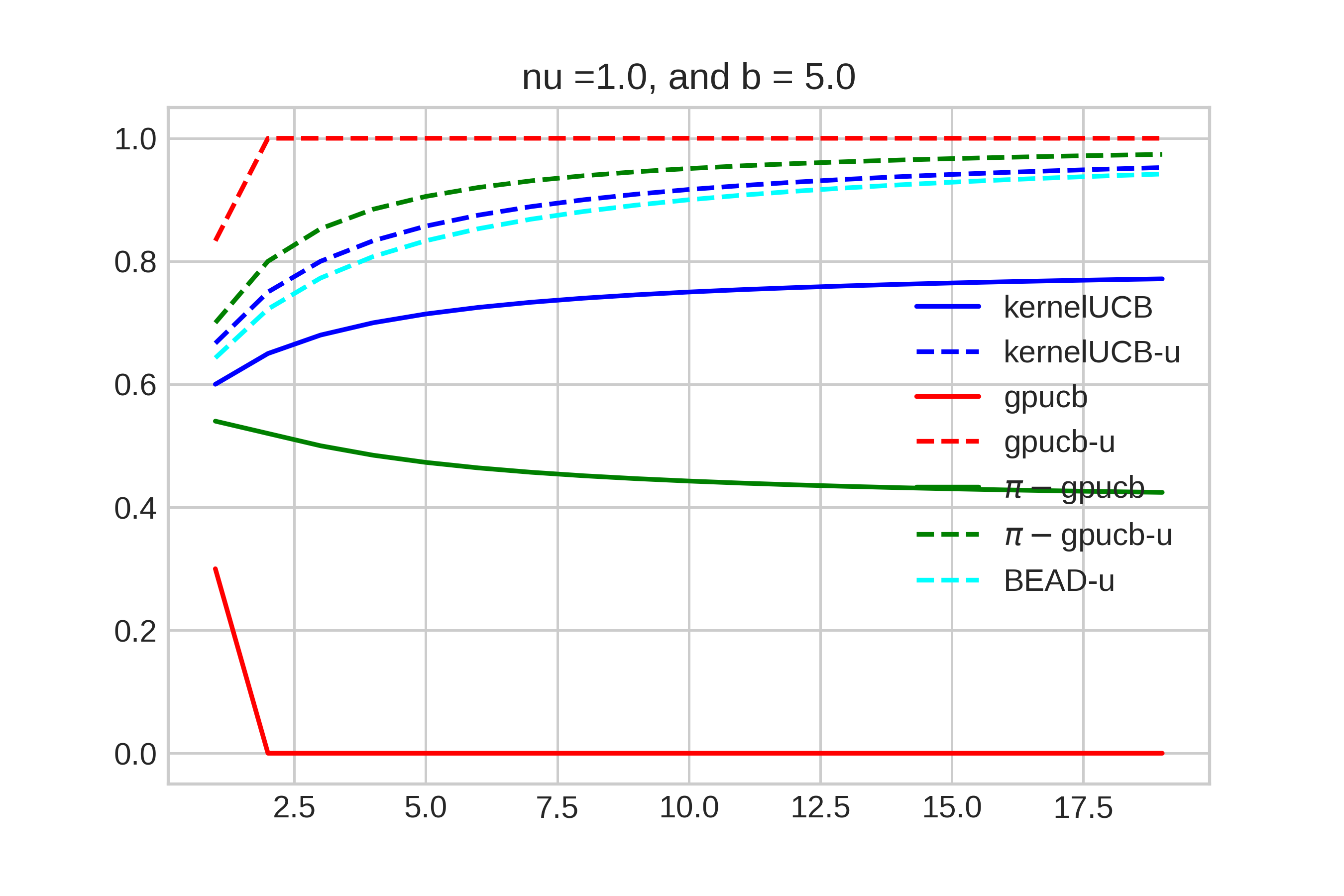}
    \end{subfigure}
    \caption{The light-blue dashed lines in the above figures show the regret bound for Algorithm~\ref{algo:bead} derived in Theorem~\ref{theorem:bead_upper} for two different values of $b$. In both these cases, the regret of Algorithm~\ref{algo:bead} goes beyond the minimax rate (dashed blue line). Furthermore, the amount of improvement over the minimax rate increases with decreasing $b$, i.e., as the problem instance becomes easier.}
    \label{fig:bead}
\end{figure*}

\subsection{Proposed Algorithm}
\label{sec:algorithms_bead}


\begin{algorithm}[htb]
\SetAlgoLined
\SetKwInOut{Input}{Input}\SetKwInOut{Output}{Output}
\SetKwFunction{LocalPoly}{LocalPoly}
\SetKwFunction{Expand}{Expand}
\newcommand\mycommfont[1]{\ttfamily\textcolor{blue}{#1}}
\SetCommentSty{mycommfont}

\KwIn{$n$, the querying budget \\
$\kmat$,  the kernel belonging to the \matern family\\
$M$, upper bound on the RKHS norm \\
$v_1, v_2, \rho_0$, parameters of the tree of partitions \\
$\tau$, regularization parameter used in posterior computation. \\
\BlankLine
}
 \tbf{Initialize:}~ $t=1$,  
 $\texttt{flag} \leftarrow True$, $\mc{E}_t \leftarrow \emptyset$, $\mc{Y}_t \leftarrow \emptyset$, $\mc{P}_t \leftarrow \{x_{0,1}\}$, $L=M\kmat(0)$ and $\xi = \min\{\nu, 1\}$. \\
 \For{$t=1,2,\ldots, n$}{
 \While{$\texttt{flag}$}{
 $\beta_t, \{(\mu_t(x), \sigma_t(x)): x \in \mc{P}_t \} = \posterior\lp \mc{P}_t, \mc{E}_t, \mc{Y}_t, \tau \rp $ \\
$ x_t \in \argmax_{x \in \mc{P}_t}\;   \sigma_t(x), \qquad $
$ U_t = \max_{x \in \mc{P}_t}\;   \sigma_t(x) $ \\
\BlankLine
\uIf{$1/\sqrt{n} < U_t < L \lp v_1 \rho^{h_t} \rp^{\xi}$} 
{  
$\mc{P}_t, h_t \leftarrow \refine(\mc{P}_t, \beta_t,  \{(\mu_t(x), \sigma_t(x)): x \in \mc{P}_t\})$ \\

$\mc{E}_t \leftarrow \emptyset$, $\quad \mc{Y}_t \leftarrow \emptyset$
} 
\Else{
$y_t \leftarrow f(x_t) + \eta_t, \quad$ 
$\mc{E}_t \leftarrow \mc{E}_t \cup \{x_t\}, \quad$
$\mc{Y}_t \leftarrow \mc{Y}_t \cup \{y_t\}$\;
$\ttt{flag} \leftarrow False$
} 
 } 
 
$\ttt{flag} \leftarrow True$\;
} 
 \caption{Breadthwise Exploration with Adaptive Discretization~($\mc{A}_1$)}
 \label{algo:bead}
\end{algorithm}


We first introduce a standard notion of a sequence of nested partitions of the input space, often used in prior works, such as~\citep{bubeck2011x, munos2011optimistic, wang2014bayesian, shekhar2018gaussian}, to design algorithms for zeroth order optimization. 

\begin{definition}[tree of partitions] 
\label{def:tree} 
We say that a sequence of subsets of $\X$, denoted by $\lp \X_h \rp_{h\geq 0}$, forms a tree of partitions of $\X$, if it satisfies the following properties: 
\begin{itemize}\itemsep0em 
\item For all $h \geq 0$, we  have $\X_h = \{x_{h,i} : 1 \leq i\leq 2^i\}$. Furthermore, for every $x_{h,i} \in\X_h$ is associated a cell $\X_{h,i}$. For $i \neq j$, $\X_{h,i}$ and $\X_{h,j}$ are disjoint, and $\cup_{i=1}^{2^h}\X_{h,i} = \X$. Moreover, for any $h,i$ we have $\{ x_{h+1, 2i-1}, x_{h+1, 2i} \} \in \X_{h+1} \cap \X_{h,i}$. 

\item There exist constants $0<v_1 \leq 1 \leq v_2$ and $\rho \in(0,1)$ such that 
\begin{align}
    B(x_{h,i}, v_2 \rho^h) \subset \X_{h,i} \subset B(x_{h,i}, v_1\rho^h). 
\end{align}
\end{itemize}
\end{definition}

Next, we introduce two subroutines that will be employed by the algorithm. The first subroutine, called~\posterior, computes the posterior mean and posterior covariance function of the Gaussian Process~(GP) model used for approximating the unknown objective function in the algorithm. 

\begin{definition}[\posterior]
\label{def:compute_posterior}
Given a subset $\mc{P}_t \subset \X$, a multi-set of points belonging to $\mc{P}_t$, denoted by $\mc{E}_t = \{x_1, \ldots, x_{t-1}\}$, at which the function $f$ was evaluated,  the corresponding noisy function evaluations $\mc{Y}_t=\{y_1, \ldots, y_{t-1}\}$ and a constant $\tau>0$, the \posterior subroutine returns the terms $\beta_t$, and $\{(\mu_t(x), \sigma_t(x)) : x \in \mc{P}_t \}$, which are defined as follows: 
\begin{align}
    & \Sigma_t = K_t + \tau I, \qquad \text{where} \quad K_t = [K(x_i, x_j)]_{x_i,x_j \in \mc{E}_t} \\
    & k_t(x) = [K(x, x_1), \ldots, K(x, x_{t-1})]^{T}, \qquad \bm{y_t} = [y_1, y_2, \ldots, y_{t-1}]^{T}\\
    &\beta_t = \sqrt{ 2 \log \lp |\mc{P}_t| n^3 \rp / t} \\
    & \mu_t(x) = k_t(x)^{T} \Sigma_t^{-1} \bm{y_t}   \\
    & \sigma_t(x) = \tau^{-1/2} \sqrt{ K(x,x) - k_t(x)^{T} \Sigma_t^{-1} k_t(x)}. 
\end{align}
\end{definition}

As we describe later, our algorithm proceeds by adaptively discretizing the input space based on the observations gathered -- the granularity of the partition becoming finer in the near-optimal regions of the input space, aimed at mimicking the discretization of the space involved in defining the complexity term in~\Cref{def:complexity-term}. Our second subroutine, called~\refine, presents the formal steps involved in updating the discretization used in the algorithm. 

\begin{definition}[\refine]
\label{def:refine_partition}
\sloppy The \refine~subroutine takes in as inputs, $\mc{P}_t, h_t, \beta_t$ and $\{ (\mu_t(x), \sigma_t(x) ) : x \in \mc{P}_t \}$; and returns an updated partition $\mc{P}_t$ and level $h_t$. First compute $l_t \defined \min_{x \in \mc{P}_t} \mu_t(x) - \beta_t \sigma_t(x)$ and define $\tilde{\mc{P}}_t = \{x \in \mc{P}_t : \mu_t(x) + \beta_t \sigma_t(x) > l_t \}$. Next, it defines the new $\mc{P}_t$ as $\mc{P}_t = \cup_{x_{h_t,i} \in \tilde{\mc{P}}_t } \{x_{h_t+1, 2i-1}, \, x_{h_t+1, 2i} \}$, and updates $h_t \leftarrow h_t+1$. 
\end{definition}

We now present an outline of the steps of our proposed algorithm below. The formal pseudocode is in~\Cref{algo:bead}. 

\noindent{\emph{Outline of \algoref{algo:bead}.}} At any time $t \geq 1$, the algorithm maintains a set of \emph{active points} denoted by $\mc{P}_t$. These points satisfy the following two properties: \tbf{(i)} $\mc{P}_t \subset \mc{X}_{h_t}$ for some $h_t \geq 0$; i.e., all the active points lie in the same `depth' (i.e., $h_t$) of the tree of partitions, and \tbf{(ii)} any optimizer $x^*$ of $f$ must lie in the region $\cup_{x_{h_t, i} \in \mc{P}_t} \X_{h_t, i}$. The algorithm evaluates the function at points in the active set, and computes the posterior mean and standard deviation by calling the \posterior subroutine. The algorithm then compares the maximum posterior standard deviation (for points in $\mc{P}_t$) with an upper bound on the variation in function value in the cell $\mc{X}_{h_t, i}$ associated with an active point $x_{h_t, i} \in \mc{P}_t$. If the maximum posterior standard deviation is larger than $L(v_1\rho^h)^\xi$, then the algorithm evaluates the function at the corresponding active point with the largest $\sigma_t(x)$. Otherwise, it concludes that the active points in $\mc{P}_t$ have been sufficiently well explored, and it moves to the next level of the partition tree by calling the \refine subroutine. The above process continues until the querying budget is exhausted.  

\begin{remark}
\label{remark:bead}
As the name suggests, Algorithm~\ref{algo:bead} carefully combines two key ideas from kernelized bandits literature: \tbf{(i)}  it divides the evaluated points into subsets~(according to their level $h$ in the tree) which satisfy a conditional independence property, similar to \kernelucb of \citet{valko2013finite}, and \tbf{(ii)} adaptively partitions the input space to zoom into the near-optimal regions, similar to the algorithms in \citep{shekhar2018gaussian, shekhar2020multi}. The first property allows us to construct tighter confidence intervals, which results in the algorithm achieving the minimax regret rate. As a result, applying Theorem~\ref{theorem:general_lower} to this algorithm provides us with the best (i.e., the highest) instance-dependent lower bounds. Additionally, the second property allows the algorithm to exploit the `easier' problem instances when the objective function satisfies the growth condition with small $b$, and results in improved regret bound in these problem instances. This is shown in Figure~\ref{fig:bead}. 
\end{remark}

\begin{figure}
    \centering
    \includegraphics[width=\textwidth, height=3in]{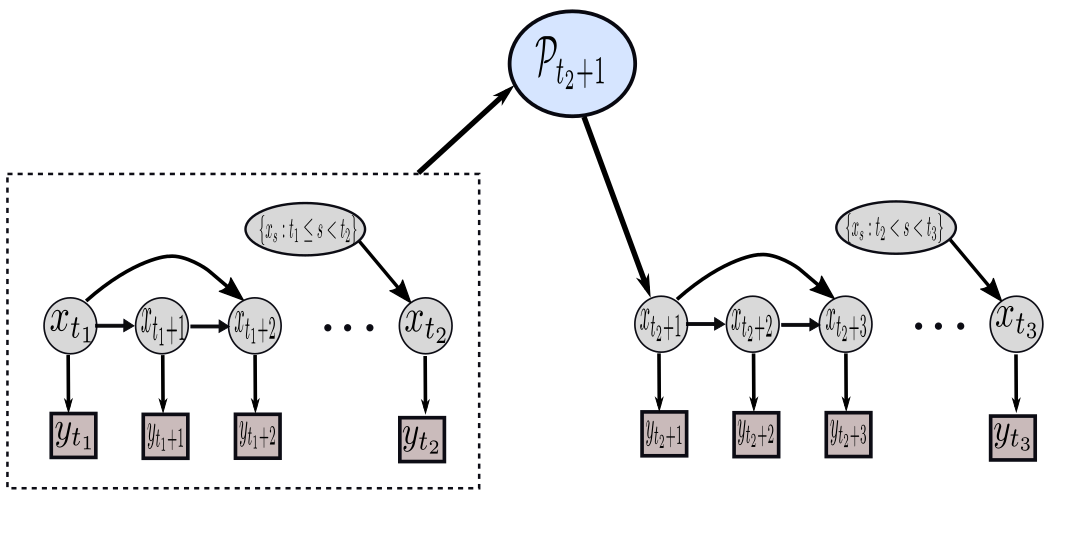}
    \caption[Dependence structure of the query points]{ 
    Suppose Algorithm~\ref{algo:bead} queries points from level $h$ of the tree in the time interval $\{t_1, \ldots, t_2\}$, refines the partition at time $t_2+1$ and then queries points from the level $h+1$ in the time interval $\{t_2+1, \ldots, t_3\}$, for some positive integers $t_1<t_2<t_3$. Then the above diagram shows the dependence structure among the queried points and the observations. Notably, similar to the \kernelucb algorithm, the breadth-wise exploration achieves the conditional independence among the observations $\{y_s: t_1\leq s \leq t_2\}$ given $\{x_s: t_1 \leq s \leq t_2\}$. This is because we only employ the posterior standard-deviation in the exploration within a  level $h$ of the tree, and the fact that conditional variance does not depend on the observations. Finally, we note that the cell refinement rule uses information from all the $(x_s, y_s)$ pairs at level $h$ to get the new active set $\mc{P}_{t_2+1}$. 
    }
    \label{fig:conditional_independence}
\end{figure}

\section{Conclusion}
\label{sec:conclusion_bead}
In this paper, we initiated the instance-dependent analysis of the kernelized bandits problem. We first obtained a general complexity measure that characterizes the fundamental hardness of a specific problem instance. Then, we specialized this result to a smaller class of problems satisfying a local growth condition, to obtain explicit lower bounds in terms of the budget $n$. Finally, we introduced a new algorithm that achieves the best of both worlds: it matches the worst case performance limit~(modulo polylogarithmic terms) established by prior work, but also has the ability to adapt to the easier problem instances.  

The results of this paper lead to several interesting questions for future work, and we describe two key directions below:
\begin{itemize}
    \item A natural task is to investigate whether we can design an algorithm that is both minimax near-optimal and instance-optimal for the \matern family. More specifically, with $a_0 \defined (d+\nu)/(d+2\nu)$,  can we design an algorithm that satisfies $\sup_{f \in \rkhs[\kmat](M)}\mbb{E}[\mc{R}_n \lp \mc{A}, f \rp ] = \widetilde{\mc{O}}(n^{a_0})$, and $\mbb{E}\lb \mc{R}_n(\mc{A}, f)\rb = \widetilde{\mc{O}}\lp \comp(\Delta_n) \rp$ with $\Delta_n = n^{-(1-a)}$ for any $a>a_0$ simultaneously~(recall that $\comp$ denotes the complexity term introduced in~\Cref{def:complexity-term}). From a technical point of view, achieving this will be significantly aided by deriving tight time and $\X$-uniform confidence intervals for the GP model. 
    
    \item  Another interesting line of work is to adapt the ideas in our lower bound construction to problems such as kernel level-set estimation,  and optimization in other function spaces\citep{singh2021continuum, liu2021smooth}. The lower-bound technique of our paper can be easily generalized to these cases, as we discuss briefly in~\Cref{appendix:extensions}. However, designing algorithms that match the so-obtained instance-dependent lower bounds may require new techniques.
\end{itemize}

\newpage 
\bibliography{ref}


\newpage 
\begin{appendix}

\section{Proof of Theorem~\ref{theorem:general_lower-1}}
\label{proof:general_lower-1}

    \subsection{An intermediate one-step result}
    \label{subsec:intermediate}
        \begin{proposition}
        \label{theorem:general_lower}
            Consider the kernelized bandit problem with a budget $n$ and objective function $f \in \rkhs[\kmat](M)$ with $\|f\|_{\rkhs[\kmat]} = (1-\lambda)M$ for some $\lambda \in (0,1)$. 
            Let $\mc{A}$ denote an $a_0$-consistent algorithm for the class $\rkhs[\kmat](M)$, and fix an $a>a_0$. For constants $\Delta \geq 16n^{-(1-a)}$ and $c>1$, introduce the set $\mc{Z} = \{x \in \X: \Delta \leq f(x^*) - f(x) < c \Delta \}$, and with $w= \lp (c+1)\Delta M_\nu /(M \lambda) \rp^{1/\nu}$,  use $m(\mc{Z}, w)$ to denote the $2w$ packing number of $\mc{Z}$. 
           Then, the following is true for $n$ large enough~(exact condition in equation \ref{eq:n_large_enough1} below): 
            \begin{align}
                \label{eq:general_lower} 
                & \mbb{E} \lb \mc{R}_n \lp \mc{A}, f \rp \rb  \geq \frac{7 \log 2}{4} \;     \frac{ m(\mc{Z}, w)\,\sigma^2}{c^2 \Delta}.
            \end{align}
        \end{proposition}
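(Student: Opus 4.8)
\textbf{Proof plan for Proposition~\ref{theorem:general_lower}.}
The plan is to realize the ``one perturbation'' argument of \Cref{subsubsec:perturbation_general} simultaneously on a packing of the annular set $\mc{Z}$, and then add up the resulting regret contributions. First I would fix a maximal $2w$-packing of $\mc{Z}$, say with centers $z_1,\dots,z_m$ where $m = m(\mc{Z},w)$, so that the balls $B(z_j,w)$ are pairwise disjoint and each is contained in $\mc{Z}$ (using \Cref{remark:diameter} to assume $\mc{Z}$ is wide enough; otherwise the bound is vacuous since the right side is a finite quantity but we only claim a lower bound on an expectation that could just be compared against $0$ --- actually here we need each $B(z_j,w)\subset\X$, which holds by the packing property). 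For each $j$, define $\tilde f_j = f + \tilde g_j$ with $\tilde g_j(\cdot) = (c+1)\Delta\, g\!\left(\frac{\cdot - z_j}{w}\right)$, where $g$ is the bump function of \Cref{def:bump}. I would check that $\tilde f_j$ is an $(E_j, c, \Delta)$-perturbation of $f$ with $E_j = B(z_j,w)$: property (P1) is immediate since $g$ is supported on the unit ball; (P2) and the displays \eqref{eq:perturbation-1}--\eqref{eq:perturbation-3} follow because on $E_j$ we have $0 \le \tilde g_j \le (c+1)\Delta$, the original suboptimality on $\mc{Z}$ is between $\Delta$ and $c\Delta$, and the bump lifts the value at $z_j$ above $f^*$ by at least $\Delta$ (since $(c+1)\Delta - c\Delta = \Delta$), hence $\tilde x_j^* = z_j \in E_j$ and every point outside $E_j$ is at least $\Delta$-suboptimal for $\tilde f_j$; the RKHS-norm bound $\|\tilde f_j\|_{\rkhs[\kmat]} \le M$ holds precisely because of the choice $w = ((c+1)\Delta M_\nu/(M\lambda))^{1/\nu}$, which makes $\|\tilde g_j\|_{\rkhs[\kmat]} \le (c+1)\Delta M_\nu / w^\nu = \lambda M = M - \|f\|_{\rkhs[\kmat]}$.

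Next I would run \textbf{Steps 1--3} of \Cref{subsubsec:perturbation_general} verbatim for each $j$. Let $N_j$ be the number of queries made by $\mc{A}$ in $E_j$. Step~1 gives $D_{KL}(\mbb{P}_f, \mbb{P}_{\tilde f_j}) \le \frac{c^2\Delta^2}{2\sigma^2}\mbb{E}_f[N_j]$ via the chain rule for KL divergence along the induced measures of \Cref{def:induced_probability} together with the Gaussian-noise KL formula (\Cref{assump:noise-lower}). Step~2: set $p_j = \mbb{E}_f[N_j/n]$, $q_j = \mbb{E}_{\tilde f_j}[N_j/n]$; $a_0$-consistency of $\mc{A}$ on both $f$ and $\tilde f_j$ forces, for $n$ large (this is exactly where the threshold $\Delta \ge 16 n^{-(1-a)}$ and the to-be-displayed condition \eqref{eq:n_large_enough1} enter), that $p_j$ is close to $0$ and $q_j$ close to $1$, so by monotonicity of $d_{KL}$ and Pinsker-type bookkeeping $d_{KL}(p_j,q_j) \ge \frac{7\log 2}{4} \cdot \tfrac12$ or some such explicit constant $C$; the factor $\tfrac{7\log2}{4}$ in the statement is the bookkeeping constant I would track carefully. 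Combined with the data-processing inequality $d_{KL}(p_j,q_j) \le D_{KL}(\mbb{P}_f,\mbb{P}_{\tilde f_j})$ (\citet{garivier2019explore}), Step~3 yields $\mbb{E}_f[N_j] \ge \frac{2C\sigma^2}{c^2\Delta^2}$ with $C = \frac{7\log2}{8}$ (so that $2C = \frac{7\log2}{4}$).

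Finally, since the balls $E_j$ are disjoint and each is at least $\Delta$-suboptimal for $f$, the regret decomposition \eqref{eq:regret_decomposition-1} gives $\mbb{E}_f[\mc{R}_n(\mc{A},f)] \ge \sum_{j=1}^m \Delta\, \mbb{E}_f[N_j] \ge m \cdot \Delta \cdot \frac{2C\sigma^2}{c^2\Delta^2} = \frac{7\log 2}{4}\,\frac{m(\mc{Z},w)\,\sigma^2}{c^2\Delta}$, which is the claimed bound. I expect the main obstacle to be \textbf{Step~2}: making rigorous and quantitative the intuition ``$p_j \approx 0$, $q_j \approx 1$'' and extracting the explicit constant. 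This requires turning the asymptotic $o(n^a)$ statements of $a_0$-consistency into a nonasymptotic inequality valid for all $n$ beyond an explicit threshold depending on $f$, $a$, $\Delta$, $\sigma$, and the RKHS parameters --- pinning down \eqref{eq:n_large_enough1}. A secondary subtlety is uniformity of the ``$n$ large enough'' threshold over all $m$ perturbations: since the $\tilde f_j$ all share the same perturbation magnitude $c\Delta$ and the same bump radius $w$, a single threshold suffices, but I would make sure the consistency constants do not degrade with $j$ (they do not, because the $\tilde f_j$ lie in the fixed class $\rkhs[\kmat](M)$ and the argument only uses the worst-case bound defining $a_0$-consistency, not instance-specific rates).
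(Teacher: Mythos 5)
Your proposal follows essentially the same route as the paper's proof: a maximal $2w$-packing of $\mc{Z}$, one scaled bump perturbation $f + (c+1)\Delta\, g((\cdot-z_j)/w)$ per packing center with the RKHS-norm budget $\lambda M$ absorbed by the choice of $w$, the KL upper bound via the chain rule and Gaussian noise, the lower bound on $d_{KL}(p_j,q_j)$ from $a_0$-consistency together with the data-processing inequality of Garivier et al., and a final summation over the disjoint balls via the regret decomposition. The quantitative gap you flag in Step~2 is resolved in the paper exactly as you anticipate, by taking $n$ large enough that $\mbb{E}[\mc{R}_n]\le 2n^a$ for $f$ and all $m$ perturbations simultaneously, which yields $1-p_j\ge 7/8$ and $1/(1-q_j)\ge n^{1-a}\Delta/2$ and hence the constant $\tfrac{7\log 2}{4}$.
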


        \begin{proof}
            Let $\{z_i: 1 \leq i \leq m(\mc{Z}, w)\}$ denote the points that form the maximal $2w$ packing set of $\mc{Z}$ with cardinality $m = m(\mc{Z}, w)$. By definition of $\mc{Z}$, the region $B(z_i, w)$ is at least $\Delta$-suboptimal for $f$. Building upon this fact, the proof of the result follows in these three steps: 
            \begin{itemize}
                \item First, we show that we can construct $m$ perturbed functions, denoted by $\{f_i: 1 \leq i \leq m(\mc{Z}, w)\}$, as introduced in~\Cref{def:perturbation}.  The function $f_i$ differs from $f$ only in the region $B(z_i, w)$, and in fact, it achieves its maximum value in that region. 
                
                \item Next, for each such perturbed function, we obtain a lower bound on the number of samples that the algorithm $\mc{A}$ must spend in $B(z_i, w)$.  
                
                \item Finally, the result follows by using the regret decomposition~\Cref{eq:regret_decomposition-1}, again using the fact that the points in $B(z, w)$ are $\Delta$-suboptimal for $f$. 
            \end{itemize}
        
            We now present the details of the steps outlined above. For every $i \in \{1, \ldots, m\}$, define the function  $f_i = f + g_{i}$, where $g_{i} = (c + 1)\Delta g\lp \frac{\cdot - z_i}{w} \rp$ is the bump function introduced in~\Cref{def:bump}. 
            Now the choice of the radius (or scale parameter) $\wni$ in the definition of $f_i$ implies that 
            \begin{align}
                \label{eq:thm1_proof_eq1}
                &\|g_{i} \|_{\rkhs[\kmat]} \leq \frac{ (c + 1)\Delta }{w^\nu} M_\nu \leq \lambda M. 
            \end{align}
        
            This implies the following: 
            \begin{itemize}
                \item The function $f_i$ satisfies $\|f_i\|_{\rkhs[\kmat]} \leq \|f\|_{\rkhs[\kmat]} + \|g_i\|_{\rkhs[\kmat]} \leq M$. Thus,  the function $f_i$ lies in the class $\rkhs[\kmat](M)$, and hence $\mc{A}$ achieves a regret $o\lp n^a \rp$ for any $a > a_0$ on $f_i$ for all $1 \leq i \leq m$. 
                
                \item The functions $f$ and $f_i$ have well separated optimal regions. More formally, if $x^*$ and $x^*_i$ denote the maximizers of $f$ and $f_i$ respectively, the following are true: 
                \begin{align}
                    &f(x^*) - f(x) \geq  \Delta, \quad \text{for all } x \in B(z_i, w), \\
                    &f_i(x^*_i) - f_i(x)  \geq \Delta,  \quad \text{for all } x \not \in B(z_i, w). 
                \end{align}
                
                \item The functions $f$ and $f_i$ differ from each other only in the region $B(z_i, w)$ and furthermore, they satisfy the following uniform deviation bound: 
                \begin{align}
                    \sup_{x \in \X} |f(x) - f_i(x)|\leq c\Delta.
                \end{align}
            \end{itemize}
 
            To summarize the above three points, the function $f_i$ is a $\big( B(z_i, w), \Delta, c \big)$-perturbation of $f$. Next, we show that any $a_0$-consistent algorithm must allocate at least a certain number of points to the region $B(z_i, w)$ when the true underlying function is $f$, in order to gather enough evidence to reject $f_i$. 
           
            \begin{lemma}
            \label{lemma:general_lower1}
                Let $N_i(\mc{A}, n)$ denote the number of times the algorithm $\mc{A}$ queries the oracle at points in the region $B(z_i, w)$. Then we have the following bound: 
                \begin{align}
                    & \mbb{E}_f \lb N_i(\mc{A}, n) \rb  \geq \frac{2 \sigma^2}{ c^2\Delta^2}\lp  \lp 1 - p_{n,i} \rp \log \lp \frac{1}{1 - q_{n,i}} \rp - \log 2 \rp, \quad \text{ where} \label{eq:general_lower1}  \\
                    & p_{n,i} \defined \frac{\mbb{E}_f \lb N_i(\mc{A}, n) \rb  }{n}  \quad \text{ and } \quad  q_{n,i} \defined \frac{\mbb{E}_{f_i} \lb N_i(\mc{A}, n) \rb  }{n}.
                \end{align}
                In the above display, $\mbb{E}_f$ denotes the expectation w.r.t. the probability measure induced by the pair $(f, \mc{A})$, and similarly $\mbb{E}_{f_i}$ denotes the probability measure induced by the pair $(f_i, \mc{A})$ for $1 \leq i \leq m_n$. 
            \end{lemma}
           
            The proof of~\eqref{eq:general_lower1} follows by relating the regret incurred by $\mc{A}$ on $f$ and $f_i$ respectively to a pair of multi-armed bandit problems with $(m_n+1)$ arms, and then applying the \emph{fundamental information inequality} \citep[\S~2]{garivier2019explore}.     The details of this proof are deferred to Appendix~\ref{appendix:general_lower2}
            
            Next, we simply the expression obtained in~\Cref{lemma:general_lower1} by appealing to the $a_0$-consistency of the algorithm $\mc{A}$. In the process, we also clarify  the meaning of the assumption that ``$n$ is large enough" in the statement of Theorem~\ref{theorem:general_lower}. In particular, we require that $n$ is large enough to ensure the following to hold simultaneously 
            \begin{align}
                &\mbb{E}_f \lb \mc{R}_n \lp \mc{A}, f \rp \rb \leq 2 n^a, \quad \text{ and }   \quad \mbb{E}_{f_i} \lb \mc{R}_n \lp \mc{A}, f_i \rp \rb \leq 2 n^a, \quad \text{ for all } i \in [m_n]. \label{eq:n_large_enough1}
            \end{align}
            
            Next, we observe that 
            \begin{align}
                1 - p_{n,i} & = 1 - \frac{ \mbb{E}_f \lb N_i(\mc{A}, n)\rb}{n}  = 1 - \frac{ \Delta \mbb{E}_f \lb N_i(\mc{A}, n)\rb }{\Delta n}  \\
                & \geq 1 - \frac{  \mbb{E}_f \lb \mc{R}_n(\mc{A}, f)\rb }{\Delta n}  \geq 1 - \frac{ \mbb{E}_f \lb \mc{R}_n(\mc{A}, f)\rb }{ 16 n^{1- (1-a)}}  \label{eq:step1} \\
                & \geq \frac 7 8. \label{eq:p_ni}
            \end{align}
            In the above display,\\
            \eqref{eq:step1} uses the fact that $  N_i \lp \mc{A}, n \rp  \Delta \leq \mc{R}_n \lp \mc{A}, f \rp$, and that $\Delta \geq 16 n^{-(1-a)}$,\\
            \eqref{eq:p_ni} uses the assumption made in~\eqref{eq:n_large_enough1} that $n$ is large enough to ensure that $\mbb{E}_0 \lb \mc{R}_n \lp \mc{A}, f \rp \rb \leq 2 n^a$. 
            
            Similarly, for the $q_{n,i}$ dependent term, we have 
            \begin{align}
                \frac{1}{1- q_{n,i}} & = \frac{n}{n - \mbb{E}_{f_i} N_i(\mc{A}, n)]} = \frac{n \Delta}{ \lp n - \mbb{E}_{f_i} N_i(\mc{A}, n)] \rp  \Delta}  
                 \geq \frac{ n \Delta}{ \mbb{E}_{f_i} \lb \mc{R}_n \lp \mc{A}, f_i \rp \rb }  \geq \frac{ n \Delta} { 2 n^a } \label{eq:step3}  \\
                &= \frac{ n^{1-a} \Delta}{2}.  \label{eq:q_ni}
            \end{align}
            In the above display,~\eqref{eq:step3} uses the assumption  that $n$ is large enough to ensure that $\mbb{E}_{f_i} \lb \mc{R}_n \lp \mc{A}, f_i \rp \rb \leq 2 n^a$. 
           
            Putting~\eqref{eq:p_ni} and~\eqref{eq:q_ni} back in~\eqref{eq:general_lower1}, we get 
            \begin{align}
                \mbb{E}_f \lb N_i \lp \mc{A}, n \rp \rb \geq \frac{2 \sigma^2}{  c^2 \Delta^2} \lp \frac 7 8 \log \lp \frac{ \Dni n^{1-a}}{2\, 2^{8/7}} \rp \rp   \geq \frac{7 \log 2 }{4} \frac{ \sigma^2}{c^2 \Delta^2}.  
            \end{align}
            
            Finally, the result stated in~\eqref{eq:general_lower} follows by repeating the argument of Lemma~\ref{lemma:general_lower1} for all the different values of $i \in \{1, \ldots, m\}$, and noting that $\mbb{E}_f \lb \mc{R}_n \lp \mc{A}, f \rp \rb \geq \sum_{i=1}^{m} \Delta \mbb{E}_{f} \lb N_i(\mc{A}, n ) \rb$ from the decomposition inequality~\eqref{eq:regret_decomposition-1}. 
        \end{proof}

    \subsubsection{Proof of Lemma~\ref{lemma:general_lower1}}
    \label{appendix:general_lower2}
        To prove this result, we need to introduce some additional notation. We use $\mc{H}_t$ to denote the observations up to, and including, time $t$ for $t \in \{1, 2, \ldots, n\}$. For a given $n \geq 1$, introduce the sample space $\Omega = \lp \X \times \Y\rp^{n}$, and let $\mc{F}_0$ denote a sigma algebra of subsets of $\Omega$. For a given function $f$ and an querying strategy $\mc{A}$, we use $\mbb{P}^{(f)}_{\mc{A}}$ to denote the probability measure on $\Omega$ induced by the pair $(f, \mc{A})$. We will drop the $\mc{A}$ dependence of $\mbb{P}^{(f)}_{\mc{A}}$, and simply use $\mbb{P}^{(f)}$ in the sequel. 
        
        The first step is to obtain an upper bound on the KL-divergence between the measures $\mbb{P}^{(f)}$ and $\mbb{P}^{(f_i)}$ induced on the space $\Omega$, for a common algorithm $\mc{A}$. In particular, suppose $(X_1, Y_1, \ldots, X_n, Y_n)$ denote the query-observation pairs collected by the algorithm $\mc{A}$ up to time $n$. Then we have the following: 
        \begin{align}
            \mc{D}_n &\defined  D_{KL} \lp \mbb{P}^{(f)}, \, \mbb{P}^{(f_i)} \rp   \\
            & = \mc{D}_{n-1} + D_{KL} \lp \mbb{P}_{X_n, Y_n \vert \mc{H}_{n-1}}^{(f)}, \mbb{P}_{X_n, Y_n \vert \mc{H}_{n-1}}^{(f_i)} \vert \mc{H}_{n-1} \rp \label{eq:1_lemma_proof}\\ 
            & = \mc{D}_{n-1} + D_{KL} \lp \mbb{P}^{(f)}_{X_n \vert \mc{H}_{n-1}}, \mbb{P}^{(f_i)}_{X_n \vert \mc{H}_{n-1}} \rp + 
            D_{KL} \lp \mbb{P}_{Y_n \vert \mc{H}_{n-1}, X_n}^{(f)}, \mbb{P}_{Y_n \vert \mc{H}_{n-1}, X_n}^{(f_i)} \vert \mc{H}_{n-1}, X_n \rp \label{eq:2_lemma_proof}\\ 
            & = \mc{D}_{n-1} + 0 +      D_{KL} \lp \mbb{P}_{Y_n \vert \mc{H}_{n-1}, X_n}^{(f)}, \mbb{P}_{Y_n \vert \mc{H}_{n-1}, X_n}^{(f_i)} \vert \mc{H}_{n-1}, X_n \rp \label{eq:3_lemma_proof}\\ 
            & = \mc{D}_{n-1} + \mbb{E}_f \lb \frac{ \lp f(X_n) - f_i(X_n) \rp^2}{2 \sigma^2} \rb \label{eq:4_lemma_proof}\\
            & \leq \mc{D}_{n-1} + \mbb{E}_f \lb \indi{X_n \in B(z_i, w)} \frac{ c^2 \Delta^2}{2 \sigma^2} \rb \label{eq:5_lemma_proof}
        \end{align}
        
        In the above display, 
        \begin{itemize}
            \item 
        \eqref{eq:1_lemma_proof} and~\eqref{eq:2_lemma_proof} follow from the chain rule for KL-divergence~\citep[Theorem~2.2]{polyanskiy2014lecture}, 
        \item 
        \eqref{eq:3_lemma_proof} uses the fact that conditioned on $\mc{H}_{n-1}$, the distribution of $X_n$ is the same for both the problem instances, that is they are both selected according to the mapping $A_n : \lp \mc{X} \times \mc{Y} \rp^{n-1} \mapsto \mc{X}$, where $\mc{A} = \lp A_t \rp_{t=1}^n$ is the common strategy,
        \item \eqref{eq:4_lemma_proof} uses the fact that condition on $X_n$, $Y_n$ is distributed as $N\lp f(X_n), \sigma^2\rp$ and $N\lp f_i(X_n), \sigma^2 \rp$ under the two distributions $\mbb{P}^{(f)}$ and $\mbb{P}^{(f_i)}$ respectively, and 
        \item \eqref{eq:5_lemma_proof} uses the fact that, by construction, $f$ and $f_i$ only differ in the region $B(z_i, w)$, and furthermore, in this region we have $\max_{x \in B(z_i, w)} |f(x) - f_i(x)|\leq c \Delta$
        \end{itemize}
        
        Repeating the steps involved in obtaining~\eqref{eq:5_lemma_proof} $n-1$ times, we get the following upper bound on the term $\mc{D}_n$: 
        \begin{align}
            \mc{D}_n & \leq \frac{ c^2 \Delta^2}{2 \sigma^2} \sum_{t=1}^n \mbb{E}_f \lb \indi{X_t \in B\lp z_i, w \rp } \rb = \frac{ c^2 \Delta^2}{2 \sigma^2}\mbb{E}_f \lb \sum_{t=1}^n \indi{X_t \in B\lp z_i, w \rp } \rb  \\
            & = \frac{ c^2 \Delta^2}{2 \sigma^2} \; \mbb{E}_f \lb N_i\lp \mc{A}, n \rp \rb. 
        \end{align}
        Recall that the term $N_i(\mc{A}, n)$ denotes the number of times the algorithm $\mc{A}$ queries points from the region $B(z_i, w)$ in the first $n$ rounds. 
        
        Now, suppose $Z: \Omega \mapsto [0,1]$ be any measurable $[0,1]$ valued random variable. Then by \citep[Lemma~1]{garivier2019explore}, we get the following result: 
        \begin{align}
            D_{KL} \lp \mbb{P}^{(f)}, \, \mbb{P}^{(f_i)} \rp & \geq kl \lp \mbb{E}_f \lb Z \rb, \, \mbb{E}_{f'} \lb Z \rb \rp, 
        \end{align}
        where $kl(p, q)$ for $p,q \in [0,1]$ denotes the KL-divergence between two Bernoulli random variables with means $p$ and $q$ respectively. 
        
        To complete the proof, we select $Z \defined \frac{ N_i\lp \mc{A}, n \rp   }{n}$, and using the fact~\citep[Eq.~(11)]{garivier2019explore} that $kl(p, q) \geq -(1-p) \log (1-q) - \log 2$, we get the required inequality 
        \begin{align}
            & \mbb{E}_f \lb N_i (\tilde{\mc{A}}, n ) \rb \frac{ c^2 \Dni^2}{2 \sigma^2}  \geq - \lp 1- p_{n,i} \rp \log \lp 1 - q_{n,i} \rp  - \log 2,\quad \text{where} \\
          & p_{n,i} = \mbb{E}_f \lb Z \rb, \quad \text{ and } q_{n,i} = \mbb{E}_{f_i} \lb Z \rb.  
        \end{align}
        
        
        \begin{remark}
        \label{remark:gaussian_noise}
            The only point at which we exploit the assumption that the observation noise is distributed as $N(0, \sigma^2)$ (i.e., Assumption~\ref{assump:noise-lower}) is in obtaining the inequality~\eqref{eq:4_lemma_proof}. Due to this assumption on the noise, we get a closed form expression for an upper bound on the KL-divergence in~\eqref{eq:5_lemma_proof}, i.e., $\lp c^2 \Dni^2 \rp / 2 \sigma^2$. In general, if we only assumed that the observation noise was $\sigma^2$ sub-Gaussian, then the same result would hold true with the previous closed-form upper bound replaced by the expression $\sup_{x \in B(z_i, w)}\; D_{KL} \lp \mbb{P}^{(f)}_{Y_n \vert X_n = x}, \, \mbb{P}^{(f_i)}_{Y_n \vert X_n = x} \rp$. 
        \end{remark}
    \subsection{Concluding~Theorem~\ref{theorem:general_lower-1} from~Proposition~\ref{theorem:general_lower}}
    \label{subsec:obtianing-general}
        \Cref{theorem:general_lower-1} follows by repeated application of the result in~\Cref{theorem:general_lower} to different regions of the input space. More specifically, introduce the following notation: 
        \begin{itemize}
            \item As in the previous section, we fix an $a>a_0$, and choose $c=2$ and  $\Delta = 16n^{-(1-a)}$. 
            \item For $k \geq 0$, set $\mc{Z}_k = \{x \in \X: 2^k\Delta \leq f(x^*) - f(x) < 2^{k+1}\Delta \}$. With $w_k \defined \lp (3\times2^k\Delta M_\nu)/(\lambda M) \rp^{1/\nu}$, we  use $m_k$ to denote the $2w_k$ packing number of $\mc{Z}_k$. 
        \end{itemize}
        
        Since, $f \in \rkhs[\kmat](M)$, we know that the set $\mc{Z}_k$ is an empty set for all $k$ larger than a finite value $k_0$. More specifically, we have $k_0 = \left \lceil \log_2 \lp  \frac{ \sup_{x\in \X}f(x^*) - f(x)}{\Delta} \rp \right \rceil \leq \left \lceil \log_2 \lp \frac{ 2M\kmat(0)}{\Delta} \rp \right \rceil$. 
       
       Finally, the statement of~\Cref{theorem:general_lower-1} follows by $k_0+1$ repeated applications of the intermediate statement proved in~\Cref{theorem:general_lower} using $\mc{Z} = \mc{Z}_k$, $\Delta = 2^k\Delta$, $c=2$, and $w_k = \lp (3\times2^k\Delta M_\nu)/(\lambda M) \rp^{1/\nu}$ for $k=0, 1, \ldots, k_0$.

\section{Proof of Theorem~\ref{theorem:lower_local}}
\label{appendix:lower_local}
    To prove this statement, we appeal to the one-step result obtained in~\Cref{theorem:general_lower}. In particular, we apply~\Cref{theorem:general_lower} with the following parameters: 
    \begin{itemize}
        \item We set $\Delta = \Delta_n = 16n^{-(1-a)}$, and $c = c_n \defined \frac{2^{1/d} \cu}{\cl}$ where $\cu$ and $\cl$ are the parameters introduced in~\Cref{assump:growth}. 
        \item The set $\mc{Z}$ of~\Cref{theorem:general_lower} now becomes $\{x \in \mc{X}: \Delta_n \leq f(x^*)-f(x) <c_n \Delta_n\}$.  
        \item We set the radius of the balls to $w = w_n = \frac{ (c_n+1) \Delta_n M_{\nu}}{M \lambda}$, where $\lambda = 1 - \|f\|_{\rkhs[\kmat]}/M$, and use $m(\mc{Z}, w_n)$ to denote the $2w_n$ packing number of the set $\mc{Z}$. 
    \end{itemize}
    
    With these parameters,~\Cref{theorem:general_lower} gives us the following lower bound on the regret: 
    \begin{align}
        \label{eq:proof-local-1}
        \mbb{E}\lb \mc{R}_n \lp \mc{A}, f \rp \rb = \Omega \lp \frac{ \sigma^2 m(\mc{Z}, w_n)}{\Delta_n c_n^2} \rp. 
    \end{align}
    
    To conclude the statement of~\Cref{theorem:lower_local}, we will show that $m = m(\mc{Z}, w_n) = \Omega \lp  \Delta_n^{\frac{d}{\nu} \lp 1 - \frac{\nu}{b} \rp} \rp$. First, we introduce the following terms:
    \begin{align}
        \label{eq:1_thm1}
        r_0 \defined \lp \frac{\Dn}{\cl} \rp^{1/b}, \qquad r_1 \defined \lp \frac{c_n \Dn}{\cu} \rp^{1/b} \quad \text{where }\;\; c_n = \frac{2^{1/d} \cu}{\cl}\quad \text{as before}. 
    \end{align}
    
    Next, we use~\Cref{assump:growth} to obtain the following result about $\mc{Z}$. 
 
    \begin{lemma}
    \label{lemma:step1_thm1}
        With $r_0$ and $r_1$ introduced in~\eqref{eq:1_thm1} and $\widetilde{\X}(f, \Delta, c)$ defined above,  we have 
        \begin{align}
            \label{eq:2_thm1}
            \mc{Z} \; \supset \; B(x^*, r_0, r_1) \; \defined \; \{x \in \X : r_0 \leq \|x-x^*\| < r_1 \} 
        \end{align}
    \end{lemma}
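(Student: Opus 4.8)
The plan is a direct two-sided verification: the radii $r_0$ and $r_1$ in~\eqref{eq:1_thm1} have been chosen precisely so that, when the sandwich bound of \Cref{assump:growth} is evaluated on the sphere of radius $r_0$ (resp. $r_1$) about $x^*$, it reproduces exactly the left (resp. right) endpoint of the value window $\{\Dn \le f(x^*)-f(x) < c_n\Dn\}$ defining the set $\mc{Z}$ used in the application of \Cref{theorem:general_lower}. So I would fix an arbitrary $x \in \X$ with $r_0 \le \|x-x^*\| < r_1$ and argue that $x \in \mc{Z}$.

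For the lower bound, I first note that since $\|x-x^*\| < r_1 = (c_n\Dn/\cu)^{1/b}$ and $\Dn = 16 n^{-(1-a)} \downarrow 0$, for $n$ large enough $x$ lies inside the neighbourhood of $x^*$ on which \Cref{assump:growth} is active; hence $f(x^*)-f(x) \ge \cl\|x-x^*\|^b \ge \cl r_0^b = \cl\cdot(\Dn/\cl) = \Dn$. For the upper bound, the same assumption gives $f(x^*)-f(x) \le \cu\|x-x^*\|^b < \cu r_1^b = \cu\cdot(c_n\Dn/\cu) = c_n\Dn$. Combining the two yields $\Dn \le f(x^*)-f(x) < c_n\Dn$, i.e. $x\in\mc{Z}$, which is the claimed inclusion $\mc{Z} \supset B(x^*,r_0,r_1)$.

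Two minor points require attention rather than any real work. First, one must make sure the growth bound genuinely applies at \emph{every} point of the annulus, i.e. that $r_1$ does not exceed the validity radius appearing in \Cref{assump:growth}; since $r_1 = (c_n\Dn/\cu)^{1/b}\to 0$, this holds for all $n$ past a threshold, consistent with the ``$n$ large enough'' hypothesis already carried through \Cref{theorem:general_lower}. Second, I would check the annulus is non-degenerate, $r_0 < r_1$: substituting $c_n = 2^{1/d}\cu/\cl$ gives $r_1 = (2^{1/d}\Dn/\cl)^{1/b} = 2^{1/(db)}\,r_0 > r_0$. This identity is the real reason for the particular choice of $c_n$: it forces $\mathrm{vol}(B(x^*,r_0,r_1)) \gtrsim r_0^d$, which is exactly the geometric estimate needed in the remainder of the proof of \Cref{theorem:lower_local} to lower bound the $2w_n$-packing number $m(\mc{Z},w_n)$ by $\Omega\big(\Dn^{(d/\nu)(1-\nu/b)}\big)$. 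I do not anticipate a genuine obstacle here; the only thing that could derail the argument is a mismatch between the validity radius of \Cref{assump:growth} and the range of $x$ under consideration, which the ``$n$ large enough'' clause absorbs.
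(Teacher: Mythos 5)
Your proposal is correct and matches the paper's own proof essentially verbatim: both arguments fix $x$ in the annulus and apply the two sides of the growth condition at radii $r_0$ and $r_1$, which by construction evaluate to $\Dn$ and $c_n\Dn$ respectively. Your two added sanity checks (that $r_1\to 0$ keeps the annulus inside the validity radius of Assumption~\ref{assump:growth}, and that $r_1 = 2^{1/(db)} r_0 > r_0$) are correct and slightly more careful than the paper, which omits them.
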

    
    \begin{proof}
        Suppose $x \in B(x^*, r_0, r_1)$. Then we have the following: 
        \begin{align}
            &\|x-x^*\| \geq r_0 \defined \lp \frac{\Dn}{\cl} \rp^{1/b}\\
            \Rightarrow\; & f(x^*) -f(x) \geq \cl r_0^b \\
            \Rightarrow\; & f(x^*) -f(x) \geq \Dn. \label{eq:2_thm10}
        \end{align}
    Similarly, we also have the following: 
    \begin{align}
        & \|x-x^*\| \leq r_1 \defined \lp \frac{ c_n \Dn}{\cu} \rp^{1/b}, \\
        \Rightarrow\; & f(x^*) - f(x) \leq \cu r_1^b \\
        \Rightarrow\; & f(x^*) - f(x) \leq \cu \lp \frac{ c_n \Dn}{\cu} \rp  \\
        \Rightarrow\; & f(x^*) - f(x) \leq  c_n \Dn. \label{eq:3_thm1}
    \end{align}
    Together,~\eqref{eq:2_thm10} and~\eqref{eq:3_thm1} imply that if $x \in B(x^*, r_0, r_1)$ then $x \in \mc{Z}$. 
    \end{proof}
    
    The above statement implies that the $2w_n$ packing number of $\mc{Z}$ can be lower-bounded by the $2w_n$ packing number of the smaller set $B(x^*, r_0, r_1)$. Let us denote the $2w_n$-packing number of $B(x^*, r_0, r_1)$ with $\widetilde{m}$. Then, by using the fact that the $2w_n$ packing number is lower bounded by the $2w$ covering number, and employing the standard volume arguments~\citep[Lemma~5.13]{van2014probability}, we conclude that there exists a constant $0<C_1 < \infty$ such that 
    \begin{align}
        m(\mc{Z}, w_n) \geq m\lp B(x^*, r_0, r_1), w_n\rp \geq C_1 \lp \frac{ \Dn^{1/b}}{\Dn^{1/\nu}} \rp^{d} = C_1 \Dn ^{\frac{d}{\nu}\lp 1 - \frac{\nu}{b} \rp }. 
    \end{align}
    Plugging this back in~\eqref{eq:proof-local-1} gives us the required result.

\section{Proof of Theorem~\ref{theorem:bead_upper}}
\label{appendix:bead_upper}
First, we introduce  a general class of kernels~(that includes the \matern family), for which our regret bound will be valid. 
\begin{definition}
\label{def:kernel_class} 

We use $\mc{K}$ to represent the class of isotropic kernel functions ~(i.e, $K$ such that $K(x,z)$ depends only on $\|x-z\|$) which satisfy the property that $\sqrt{K(x,z) + K(z,x) - 2(x,z)} \leq C_K \|x-z\|^\xi$ for some $C_K>0$ and $\xi \in(0, 1]$ for all $x, z \in \X$.
\end{definition}

Next, we present a simple embedding result which is crucial in the adaptive partitioning approach used in our algorithms. 
\begin{proposition}
\label{prop:embed}
If a function $f \in \rkhs(M)$ for some $0<M<\infty$ and $K \in \mc{K}$, then we have $\|f(x) - f(z) \| \leq M C_K \|x - z\|^\xi$ for all $x, z \in \X$.  In particular, this H\"older smoothness property is satisfied by elements of \matern RKHS~($\kmat$) with $\xi = \min \{ \nu, 1\}$. 
\end{proposition}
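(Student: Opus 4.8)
The plan is to prove the embedding bound as an immediate consequence of the reproducing property together with Cauchy--Schwarz in the RKHS, and then to verify that \matern kernels belong to the class $\mc{K}$ of \defref{def:kernel_class} with the claimed exponent $\xi = \min\{\nu,1\}$. First I would write, using the reproducing property (\defref{def:rkhs}) and linearity of the inner product, that for any $x,z \in \X$,
\[
f(x) - f(z) = \langle f, K(\cdot, x) - K(\cdot, z) \rangle_{\rkhs},
\]
so that by Cauchy--Schwarz $|f(x)-f(z)| \leq \|f\|_{\rkhs}\, \|K(\cdot,x) - K(\cdot,z)\|_{\rkhs}$. Then I would expand the squared RKHS norm of the difference of canonical features,
\[
\|K(\cdot,x) - K(\cdot,z)\|_{\rkhs}^2 = K(x,x) - 2K(x,z) + K(z,z),
\]
which, by the symmetry of $K$, equals $K(x,z) + K(z,x) - 2K(x,z)$ rewritten appropriately; invoking the defining inequality of $\mc{K}$ in \defref{def:kernel_class}, this is bounded by $C_K^2 \|x-z\|^{2\xi}$. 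Combining with $\|f\|_{\rkhs} \leq M$ yields $|f(x)-f(z)| \leq M C_K \|x-z\|^\xi$, which is the first claim.

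For the second claim, I would need to show that the \matern kernel $\kmat$ lies in $\mc{K}$ with $\xi = \min\{\nu,1\}$; equivalently, using the computation above, that $K(x,x) - 2\kmat(x,z) + K(z,z) = 2\big(1 - \kmat(x,z)\big) \leq C_K^2\|x-z\|^{2\xi}$ (after normalizing so $\kmat(x,x)=1$, or carrying the constant $\kmat(0)$ through). The key fact is the behaviour of the \matern correlation function near the origin: for $r = \|x-z\|$ small, $1 - \kmat(r) \asymp r^{2\nu}$ when $0 < \nu < 1$, and $1 - \kmat(r) \asymp r^2$ when $\nu \geq 1$ — in both cases $1 - \kmat(r) = \Oh{r^{2\xi}}$ with $\xi = \min\{\nu,1\}$. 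For $r$ bounded away from zero (which on $\X = [0,1]^d$ means $r \leq \sqrt d$), $1 - \kmat(r)$ is bounded and $r^{2\xi}$ is bounded below, so the inequality persists after adjusting $C_K$. I would cite the standard expansions of the \matern kernel (e.g.\ via its spectral density or the small-argument asymptotics of the Bessel function $J_\nu$ in \defref{def:matern_family}) from \citet{stein2012interpolation} or \citet{kanagawa2018gaussian} rather than re-deriving them.

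The main obstacle is the case analysis in the near-origin asymptotics of the \matern correlation function, particularly getting a clean \emph{uniform} constant $C_K$ valid over the entire compact domain rather than just asymptotically as $r \to 0$. This requires combining the local expansion with a compactness argument on the complement of a small neighbourhood of the diagonal; the bound $\|x-z\| \leq \sqrt d$ on $\X$ makes this routine but it is the one place where care is needed. The first part of the proposition (the embedding from $\mc{K}$ membership) is entirely elementary once the RKHS machinery of Definitions~\ref{def:positive_definite}--\ref{def:rkhs} is in place.
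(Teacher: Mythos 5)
Your argument is correct and is surely the intended one: the paper states Proposition~\ref{prop:embed} without proof, and the reproducing-property-plus-Cauchy--Schwarz computation
$|f(x)-f(z)| \le \|f\|_{\rkhs}\,\lVert K(\cdot,x)-K(\cdot,z)\rVert_{\rkhs} = \|f\|_{\rkhs}\sqrt{K(x,x)+K(z,z)-2K(x,z)}$
is the standard route (note that the displayed expression in \defref{def:kernel_class}, $K(x,z)+K(z,x)-2(x,z)$, contains a typo --- as written it vanishes by symmetry --- and you have resolved it to the only sensible reading, namely the RKHS distance between canonical features). One caveat on the \matern verification: the small-$r$ behaviour of the \matern correlation is $1-\kmat(r)\asymp r^{2\nu}$ for $\nu\in(0,1)$ and $\asymp r^{2}$ for $\nu>1$, but at the borderline value $\nu=1$ it is $\asymp r^{2}\log(1/r)$, so the claimed exponent $\xi=\min\{\nu,1\}$ fails by a logarithmic factor exactly at $\nu=1$. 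This defect is inherited from the proposition as stated (and is harmless downstream, where polylogarithmic factors are absorbed into $\tOh{\cdot}$), but your assertion that $1-\kmat(r)\asymp r^{2}$ when $\nu\ge 1$ should either exclude or separately treat $\nu=1$ (e.g., by taking any $\xi<1$ there). Your compactness remark for $r$ bounded away from zero is the right way to obtain a single uniform constant $C_K$ on $\X=[0,1]^d$.
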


We begin with the following independence result about the points queried by the algorithm. 

\begin{proposition}
\label{prop:conditional_indep} 
Suppose $\mc{P}_t \subset \mc{X}_h$ is the active set of points at some time $t$. Let $t_0<t$ denote the time at which a point from $\mc{P}_t$ was first queried, and let $\mc{E}_t$ denote the multi-set of points $\{x_{t_0}, \ldots, x_{t-1}\}$ queried by the algorithm. Then the collection of random variables $(y_{t_0}, \ldots, y_{t-1} )$ are mutually independent, conditioned on the observations $x_{t_0}, \ldots, x_{t-1}$. 
\end{proposition}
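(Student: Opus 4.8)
The plan is to exploit the one structural feature of \algoref{algo:bead} that makes the claim true: as long as the algorithm operates at a fixed level $h$, it keeps the active set $\mc{P}_t$ frozen and selects each new query point by maximizing the posterior standard deviation $\sigma_t(\cdot)$ over $\mc{P}_t$; and by the formula in \defref{def:compute_posterior}, $\sigma_t(x) = \tau^{-1/2}\sqrt{K(x,x) - k_t(x)^{T} \Sigma_t^{-1} k_t(x)}$ depends on the previously queried \emph{locations} and the kernel only --- never on the observed values $\mc{Y}_t$ (these enter only through $\mu_t$ and $\beta_t$, neither of which is consulted within a level). So I would argue that the whole list of level-$h$ query points is a deterministic function of whatever was known when level $h$ started, after which the level-$h$ observations are just that fixed list of points read off through fresh, independent noise.

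Concretely, I would fix $h$, the active set $\mc{P}_t \subset \mc{X}_h$, and $t_0 < t$ as in the statement. On $\{t_0, \dots, t-1\}$ no \refine call occurs (a \refine would replace $\mc{P}_t$ by a set at level $h+1$, contradicting the choice of $t_0$ and $t$), so $\mc{P}_s = \mc{P}_{t_0}$ and $\mc{E}_s = \{x_{t_0}, \dots, x_{s-1}\}$ for every such $s$, and $\mc{P}_{t_0}$ is measurable with respect to $\mc{H}_{t_0-1}$ since it is output by a \refine that uses only level-$(h-1)$ data. I would then show by induction on $s \in \{t_0, \dots, t-1\}$ that $x_s$ is $\mc{H}_{t_0-1}$-measurable. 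Base case: at time $t_0$ the data set is empty, so $\sigma_{t_0}(x) = \tau^{-1/2}\sqrt{K(x,x)}$ and $x_{t_0} \in \argmax_{x \in \mc{P}_{t_0}} \sigma_{t_0}(x)$ (with any fixed tie-breaking rule) is a function of $\mc{P}_{t_0}$ alone. Inductive step: \posterior builds $\Sigma_s$, $k_s(\cdot)$ and hence $\sigma_s(\cdot)$ from $\mc{E}_s = \{x_{t_0}, \dots, x_{s-1}\}$ and $K$ only, and the selection rule $x_s \in \argmax_{x \in \mc{P}_s} \sigma_s(x)$ uses $\sigma_s$ but not $\mu_s$; hence $x_s$ is a function of $\mc{P}_{t_0}$ and $x_{t_0}, \dots, x_{s-1}$, all $\mc{H}_{t_0-1}$-measurable by hypothesis. (The choice between the \refine and the query branch at each such step is governed only by $U_s = \max_{x \in \mc{P}_s} \sigma_s(x)$, again data-independent, which is consistent with the fixed interval $\{t_0,\dots,t-1\}$.)

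To conclude, I would write $y_s = f(x_s) + \eta_s$. By Assumption~\ref{assump:noise-lower} (resp.\ Assumption~\ref{assump:noise-upper}) the noise variables $\eta_{t_0}, \dots, \eta_{t-1}$ are i.i.d.\ --- in particular mutually independent --- and jointly independent of $\mc{H}_{t_0-1}$. Since each $x_s$ is a deterministic $\mc{H}_{t_0-1}$-measurable quantity, conditionally on $\mc{H}_{t_0-1}$ the vector $(y_{t_0}, \dots, y_{t-1})$ is distributed as the product $\prod_{s=t_0}^{t-1} \mathrm{Law}(f(x_s) + \eta_s)$, and this conditional law depends on $\mc{H}_{t_0-1}$ only through $(x_{t_0}, \dots, x_{t-1})$; by the tower property it therefore coincides with the conditional law given $\sigma(x_{t_0}, \dots, x_{t-1})$, which is exactly the asserted conditional independence. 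The only real content --- and the reason the statement holds at all --- is the observation used in the induction that the within-level exploration consults $\sigma_t$ rather than $\mu_t$, so I expect verifying this data-independence of $\sigma_t$ and its propagation through the induction to be the main, and essentially the sole, obstacle; everything else is bookkeeping about the algorithm's control flow.
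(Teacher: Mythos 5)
Your proposal is correct and rests on exactly the observation that drives the paper's own proof: within a level the query rule consults only $\sigma_t$, which is a function of the queried locations and the kernel but not of the observed values $\mc{Y}_t$. The paper formalizes this as a chain-rule factorization of the joint density of $(x_{t_0}, y_{t_0}, \ldots, x_{t-1}, y_{t-1})$ followed by cancellation of the $x$-marginal (its step (a) is precisely your data-independence claim), whereas you establish $\mc{H}_{t_0-1}$-measurability of the query points by induction and then pass from conditioning on $\mc{H}_{t_0-1}$ to conditioning on $\sigma(x_{t_0}, \ldots, x_{t-1})$ via the tower property; the two write-ups are equivalent in substance, and yours is if anything more explicit about the role of the pre-$t_0$ history entering through $\mc{P}_{t_0}$.
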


\begin{proof}
The proof proceeds as follows: 
\begin{align}
    P\lp y_{t_0}, \ldots, y_{t-1} \vert x_{t_0}, \ldots, x_{t-1} \rp &= \frac{P \lp y_{t_0}, \ldots, y_{t-1}, x_{t_0}, \ldots, x_{t-1} \rp }{P\lp y_{t_0}, \ldots, y_{t-1} \rp} \\
    & =\frac{ \prod_{s=t_0}^{t-1} P\lp x_{s} \vert x_{t_0}, \ldots, x_{s-1}, y_{t_0}, \ldots, y_{s-1} \rp  P \lp y_{s} \vert x_{t_0}, \ldots, x_s, y_{t_0}, \ldots, y_{s-1}  \rp }{ P\lp x_{t_0}, \ldots, x_{t-1} \rp } \\
    & \stackrel{(a)}{=} \frac{ \prod_{s=t_0}^{t-1} P\lp x_{s} \vert x_{t_0}, \ldots, x_{s-1}\rp  P \lp y_{s} \vert x_s \rp }{ P\lp x_{t_0}, \ldots, x_{t-1} \rp } \\
    & \stackrel{}{=} \frac{   P\lp x_{t_0}, \ldots x_{t- 1} \rp \lp \prod_{s=t_0}^{t-1} P \lp y_{s} \vert x_s \rp \rp }{ P\lp x_{t_0}, \ldots, x_{t-1} \rp } \\
    & \stackrel{(b)}{=} \prod_{s=t_0}^{t-1} P\lp y_s | x_{t_0}, \ldots, x_{t-1} \rp. 
\end{align}
In the above display,\\
\tbf{(a)} uses the fact that at any $s \geq t_0$, the query point $x_s$ only depends on the previous query points $x_{t_0}, \ldots, x_{s-1}$ and not on the observations; and the fact that conditioned on $x_s$, the observation $y_s$ is independent of the query points $x_{t_0}, \ldots, x_{s-1}$ and observations $y_{t_0}, \ldots, y_{s-1}$. \\
\tbf{(b)}  uses the fact that conditioned on $x_s$ the observation $y_s$ is independent of $x_{t_0}, \ldots, x_{s-1}, x_{s+1}, \ldots, x_{t-1}$. 

The equality \tbf{(b)} implies the conditional independence of the observations given the query points belonging to the current active set $\mc{P}_t$, as required. 
\end{proof}

Having obtained the conditional independence property of the query points, we now present the key concentration result that leads to the required regret bounds. 

\begin{lemma}
\label{lemma:concentration} 
For some $t \geq 1$, let $\mc{P}_t$, $t_0$ and $\mc{E}_t$ be the same as in Proposition~\ref{prop:conditional_indep}. Then, for a given $\delta \in (0,1)$, the following is true: 
\begin{align}
    & P \bigg( \exists x \in \mc{P}_t, \text{ s.t. } |f(x) - \mu_t(x)| > \beta_t \sigma_t(x) \bigg) \leq \delta_t \defined \frac{6 \delta}{t^2 \pi^2 }, \label{eq:concentation1} \\
    \text{ where } & \beta_t = \sqrt{ 2 \sigma^2 \log \lp \frac{ |\mc{P}_t| \pi^2 t^2}{3 \delta}\rp } \label{eq:beta_t}  
\end{align}
Recall that the terms $\mu_t(\cdot)$ and $\sigma_t(\cdot)$ represent the posterior mean and standard-deviation functions computed by the subroutine \posterior introduced in Definition~\ref{def:compute_posterior}. 
\end{lemma}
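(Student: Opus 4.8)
\emph{Proof plan.} The plan is to condition on the block of observations collected at the current level of the partition tree, peel off a deterministic (regularized-) interpolation bias, and reduce the remaining statement to a one-line sub-Gaussian tail bound for a fixed linear functional of conditionally independent noise. Fix $t$ and let $t_0<t$, $\mc{E}_t=\{x_{t_0},\dots,x_{t-1}\}$ be as in Proposition~\ref{prop:conditional_indep}, i.e.\ the observations gathered since the last call to \refine, all lying at the current level $h$. Write $\bm{y_t}=\bm{f_t}+\bm{\eta_t}$ with $\bm{f_t}=(f(x_{t_0}),\dots,f(x_{t-1}))^{T}$ and $\bm{\eta_t}$ the noise vector. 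Conditionally on the history $\mc{H}_{t_0-1}$ and on $\mc{E}_t$, the set $\mc{P}_t$ and the quantities $K_t$, $k_t(x)$, $\Sigma_t=K_t+\tau I$ and $\sigma_t(x)$ are all fixed (the posterior standard deviation does not use the observations), while by Proposition~\ref{prop:conditional_indep} together with Assumption~\ref{assump:noise-upper} the coordinates of $\bm{\eta_t}$ are independent, mean zero and $\sigma^2$-sub-Gaussian. Since $\mu_t(x)=k_t(x)^{T}\Sigma_t^{-1}\bm{y_t}$, I would split
\begin{align}
    f(x)-\mu_t(x)=\big(f(x)-k_t(x)^{T}\Sigma_t^{-1}\bm{f_t}\big)-k_t(x)^{T}\Sigma_t^{-1}\bm{\eta_t}.
\end{align}

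\emph{The stochastic term.} Conditionally, $k_t(x)^{T}\Sigma_t^{-1}\bm{\eta_t}$ is a weighted sum of independent $\sigma^2$-sub-Gaussian variables with weight vector $v\defined\Sigma_t^{-1}k_t(x)$, hence $\sigma^2\|v\|^2$-sub-Gaussian. The key algebraic step is the variance inequality
\begin{align}
    \|v\|^2=k_t(x)^{T}\Sigma_t^{-2}k_t(x)\;\leq\;\tau^{-1}\big(K(x,x)-k_t(x)^{T}\Sigma_t^{-1}k_t(x)\big)=\sigma_t(x)^2,
\end{align}
which I would establish by simultaneously diagonalizing $K_t=U\Lambda U^{T}$ and checking it coordinatewise, where it collapses to $\lambda_i(\lambda_i+2\tau)\leq(\lambda_i+\tau)^2$, i.e.\ $0\leq\tau^2$, using positive definiteness of $K$ on $\{x,x_{t_0},\dots,x_{t-1}\}$ to handle the $\lambda_i=0$ directions (equivalently, via the feature-space push-through identity). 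A sub-Gaussian tail bound then gives, for each fixed $x$,
\begin{align}
    P\big(\,|k_t(x)^{T}\Sigma_t^{-1}\bm{\eta_t}|>\beta_t\sigma_t(x)\ \big|\ \mc{H}_{t_0-1},\mc{E}_t\,\big)\;\leq\;2\exp\!\big(-\beta_t^2/(2\sigma^2)\big)\;=\;\frac{6\delta}{|\mc{P}_t|\pi^2 t^2},
\end{align}
the last equality being exactly the defining choice $\beta_t=\sqrt{2\sigma^2\log(|\mc{P}_t|\pi^2 t^2/(3\delta))}$.

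\emph{The bias term and conclusion.} For the deterministic term I would invoke the standard regularized-interpolation estimate $|f(x)-k_t(x)^{T}\Sigma_t^{-1}\bm{f_t}|\leq\|f\|_{\rkhs}\sqrt{\tau}\,\sigma_t(x)\leq M\sqrt{\tau}\,\sigma_t(x)$, obtained from the identity $f(x)-k_t(x)^{T}\Sigma_t^{-1}\bm{f_t}=\tau\langle(\Phi\Phi^{T}+\tau I)^{-1}\phi(x),f\rangle$ and Cauchy--Schwarz; for the (small) regularization level $\tau$ used by the algorithm this is of lower order and is folded into $\beta_t\sigma_t(x)$ (and in any case it is dominated, in the downstream regret analysis, by the cell-variation term $L(v_1\rho^{h})^{\xi}$ that the stopping rule controls, cf.\ Proposition~\ref{prop:embed}). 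A union bound over the (at most $|\mc{P}_t|$) points of $\mc{P}_t$, followed by taking expectations over $\mc{H}_{t_0-1}$ and $\mc{E}_t$ via the tower property, yields the advertised bound $\delta_t=6\delta/(\pi^2 t^2)$.

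\emph{Main obstacle.} The crux is the variance inequality $\|\Sigma_t^{-1}k_t(x)\|^2\leq\sigma_t(x)^2$: this is precisely what converts the generic conditional sub-Gaussian parameter into the posterior-variance width appearing in the confidence bound, and it is where the $\tau^{-1/2}$ normalization built into the definition of $\sigma_t$ in \posterior is used. A secondary but genuinely necessary point is careful bookkeeping of the conditioning: because \refine empties the data buffers, the conditional-independence structure of Proposition~\ref{prop:conditional_indep} applies only to the block of observations gathered at the current level, so one must also condition on the prior history and make sure the union bound uses the (random, but history-measurable) cardinality $|\mc{P}_t|$ consistently with the stated form of $\beta_t$.
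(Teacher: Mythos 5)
Your proof is correct and follows essentially the same route as the paper's: the decomposition into a stochastic term $k_t(x)^{T}\Sigma_t^{-1}\bm{\eta_t}$ and a bias term is exactly the representation $f(x)-\mu_t(x)=\sum_i A_i(y_i-f(x_i))+B_x$ that the paper imports from Lemma~2 of \citet{valko2013finite}, and the remaining steps (conditional independence from Proposition~\ref{prop:conditional_indep}, Chernoff bound with variance proxy $\sigma^2\sigma_t(x)^2$, union bound over $\mc{P}_t$) coincide with the paper's. The only difference is that you verify the key weight-norm inequality $\|\Sigma_t^{-1}k_t(x)\|^2\leq\sigma_t(x)^2$ directly by diagonalization rather than citing it, and you are explicit (where the paper is silent) that the deterministic bias $B_x$ must be absorbed into the confidence width.
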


\begin{proof}
To prove this result, we rely on the following facts derived by \citet{valko2013finite} while proving their Lemma~2. For $x \in \mc{P}_t$, there exists $A_{t_0}, \ldots, A_{t-1} \in \mbb{R}$~(depending on $x$) such that the following holds: 

\begin{align}
& f(x) - \mu_t(x) = \sum_{i=t_0}^{t-1} A_i \lp y_i - f(x_i) \rp + B_x \\
\text{ with }  & \sum_{i=t_0}^{t-1} A_i^2 \leq \sigma_t(x)^2   \quad \text{ and } \quad |B_x| \leq \tau^{-1/2} M \sigma_t(x),  
\end{align}
for all $x \in \mc{P}_t$. Recall that $\tau$ is the regularization parameter used in the subroutine \posterior, while $M$ is the upper bound on the RKHS norm of $f$. 

Now, we use the conditional independence property derived in Proposition~\ref{prop:conditional_indep} along with the conditional $\sigma^2-$sub-Gaussianity of the observation noise to get the required concentration result. 

In particular, for a given $t \geq 1$ and a fixed $x \in \mc{P}_t$, we have 
\begin{align}
   P \lp  f(x) - \mu_t(x) > \beta_t \sigma_t(x) \rp & = \mbb{E}\lb P\lp \sum_{i=t_0}^{t-1} A_i \lp y_i - f(x_i) \rp > \beta_t \sigma_t(x) | \lp x_i \rp_{i=t_0}^{t-1} \rp \rb \label{eq:conc1}\\
   & \leq \mbb{E} \lb \mbb{E} \lb \exp \lp \lambda \sum_{i=t_0}^{t-1}A_i (y_i-f(x_i))  \rp e^{-\lambda \beta_t \sigma_t(x)} \left. \right\vert \lp x_i \rp_{i=t_0}^{t-1} \rb \rb \label{eq:conc2}\\
   & = \mbb{E} \lb \prod_{i=t_0}^{t-1} \mbb{E} \lb \exp \lp \lambda A_i \lp y_i - f(x_i) \rp \rp \vert (x_i)_{i=t_0}^{t-1} \rb  \rb \label{eq:conc3}\\
   & \leq \exp \lp \frac{\lambda^2 \sigma^2}{2} \sum_{i=t_0}^{t-1}A_i^2 - \lambda \beta_t \sigma_t(x) \rp \label{eq:conc4}\\
   & \leq \exp \lp \frac{\lambda^2 \sigma^2}{2} \sigma_t(x)^2 - \lambda \beta_t \sigma_t(x) \rp \label{eq:conc5}\\
   & = \exp \lp - \frac{ \beta_t^2}{2 \sigma^2} \rp. \label{eq:conc6}
\end{align}
In the above display, \\
\eqref{eq:conc2} follows by an application of Chernoff's inequality with some constant $\lambda>0$ to be selected later, \\
\eqref{eq:conc3} follows from the conditional independence property derived in Proposition~\ref{prop:conditional_indep}, \\
\eqref{eq:conc4} uses the fact that, conditioned on $x_i$, the random variable $y_i - f(x_i)$ is zero-mean $\sigma^2$ sub-Gaussian, \\
\eqref{eq:conc5} uses the fact that $\sum_{i=t_0}^{t-1}A_i^2 \leq \sigma_t(x)^2$, and \\
\eqref{eq:conc6} follows by selecting $\lambda = \beta_t/(\sigma^2 \sigma_t(x))$. 

Repeating the argument of the previous display with $\mu_t(x) - f(x)$, in the place of $f(x) - \mu_t(x)$, gives us that 
\begin{align}
    P \lp |f(x) - \mu_t(x)| > \beta_t \sigma_t(x) \rp \leq 2 \exp \lp - \frac{\beta_t^2}{2\sigma^2} \rp. 
\end{align}

Next, using the fact that $\beta_t > \sqrt{ 2 \sigma^2 \log \lp \frac{|\mc{P}_t|\pi^2 t^2 }{3 \delta}\rp }$, we have by a union bound over $x \in \mc{P}_t$: 
\begin{align}
    P \lp \exists x \in \mc{P}_t, \text{ s.t. } |f(x) - \mu_t(x)| > \beta_t \sigma_t(x) \rp  & \leq \sum_{x \in \mc{P}_t} 2  \exp \lp - \frac{\beta_t^2}{2\sigma^2} \rp \\
    & < 2 \exp \lp - \log \lp \frac{ |\mc{P}_t| \pi^2 t^2}{3 \delta} \rp \rp \\
    & =   \frac{6 \delta}{\pi^2 t^2} \defined \delta_t. 
\end{align}
\end{proof}

Since $\sum_{t=1}^n \delta_t < \delta$, we note that the following event $\mc{E}$, occurs with probability at least $1-\delta$ 
\begin{align}
\label{eq:event_conc}
    \mc{E} \defined \cap_{t =1 }^n \cap_{x \in \mc{P}_t} \{ |f(x) - \mu_t(x)| \leq \beta_t \sigma_t(x) \}. 
\end{align}
Throughout the rest of the proof, we will work under the event $\mc{E}$ with $\delta = 1/n$. Hence, the expected regret of the algorithm $\mc{A}_1$ can then be upper bounded by 
\begin{align}
\label{eq:regret-event}
    \mbb{E}[\mc{R}_n (\mc{A}_1, f)] &= \mbb{E}[\mc{R}_n (\mc{A}_1, f) \indi{\mc{E}}] + \mbb{E}[\mc{R}_n (\mc{A}_1, f) \indi{\mc{E}^c}] \\
    & \leq \mbb{E}[\mc{R}_n (\mc{A}_1, f) \indi{\mc{E}}]  + \mbb{P}\lp \mc{E}^c \rp \times n \times \sup_{x \in \X} (f(x^*) -f(x)) \\ 
    & \leq \mbb{E}[\mc{R}_n (\mc{A}_1, f) \indi{\mc{E}}]  + 2 \|f\|_{\rkhs[\kmat]} \text{diam}(\X) \\
    & \leq \mbb{E}[\mc{R}_n (\mc{A}_1, f) \indi{\mc{E}}]  + \mc{O}(1). 
\end{align}
Since the second term is upper bounded by a constant, it suffices to show that the required upper bounds hold for the regret incurred under the event $\mc{E}$. 

We now obtain a result about the sub-optimality of the points queried by the algorithm. 
\begin{lemma}
\label{lemma:suboptimality}
Suppose event $\mc{E}$ introduced in~\eqref{eq:event_conc} occurs, and the algorithm queries a point $x_t \in \mc{P}_t \subset \mc{X}_h$ for some $h \geq 1$. Then we have
\begin{align}
    f(x^*) - f(x_t) \leq \lp 7 L v_1^\xi \rho^{-\xi} \rp \rho^{h \xi} = \mc{O} \lp \rho^{h \xi} \rp. 
\end{align}
\begin{proof}
Since $h \geq 1$, the set $\mc{P}_t$ must have been formed by a call to the \refine subroutine. Let $x_t = x_{h,i}$ for some $i \in \{1,\ldots, 2^h\}$ and furthermore, denote its parent node by $x_{h',i'}$ where $h'=h-1$ and $i'=\lceil i/2 \rceil$. Assume that the active set $\mc{P}_t$ was formed by a call to the \refine subroutine at some time $t_0 < t$. Then the following must be true: 
\begin{align}
    f(x_{h,i}) = f(x_t) & \geq f(x_{h',i'}) - 2 \overbrace{L (v_1\rho^{h'})^\xi}_{\defined V_{h'}} \geq \mu_{t_0}(x_{h',i'}) - 2\beta_{t_0} \sigma_{t_0} (x_{h',i'}) - V_{h'} \label{eq:subopt1}\\ 
    & \geq \mu_{t_0}(x_{h',i'}) + \beta_{t_0} \sigma_{t_0} (x_{h',i'}) - 3 V_{h'} \label{eq:subopt2} \\
    & \geq \max_{x  \in \mc{P}_{t_0}} \lp \mu_{t_0}(x) - \beta_{t_0} \sigma_{t_0}(x) \rp - 3V_{h'} \label{eq:subopt3}\\
    & \geq \lp f(x^*) - 4V_{h'} \rp - 3V_{h'} 
     = f(x^*) - 7V_{h'}. \label{eq:subopt4} 
\end{align}
In the above display, 
\begin{itemize}
    \item 
the first inequality in \eqref{eq:subopt1} uses the fact that $f$ is $(L, \xi)$ H\"older continuous, while that second inequality uses the fact that under the event $\mc{E}$, we have $f(x_{h', i'}) \geq \mu_{t_0}(x_{h',i'}) - \beta_{t_0} \sigma_{t_0}(x_{h',i'})$, 
\item 
\eqref{eq:subopt2} follows by adding and subtracting $2\beta_{t_0}\sigma_{t_0}(x_{h',i'})$, and then using the fact that $2 \beta_{t_0} \sigma_{t_0}(x_{h',i'})$ must be smaller than $2V_{h'} \defined L (v_1\rho^{h'})^\xi$, due to line~6 in Algorithm~\ref{algo:bead},
\item  \eqref{eq:subopt3} then uses the fact $\mu_{t_0}(x_{h',i'})  + \beta_{t_0} \sigma_{t_0}(x_{h',i'})$ must be larger than the highest lower bound, $\max_{x \in \mc{P}_{t_0}} \mu_{t_0}(x) - \beta_{t_0} \sigma_{t_0}(x)$ in order for $x_{h,i}$ to be included in the updated $\mc{P}_{t_0}$ returned by \refine, 
\item and finally, \eqref{eq:subopt4} uses the fact that $f(x^*) - 2V_{h'} \geq \max_{x \in \mc{P}_{t_0}} \mu_{t_0}(x) - \beta_{t_0} \sigma_{t_0}(x) $. To see this,  suppose $x_{h', j}$ denotes the point in $\mc{P}_{t_0}$ such that $x^* \in \X_{h',j}$. Then we must have the following:
\begin{align}
   \max_{x \in \mc{P}_{t_0}} \mu_{t_0}(x) - \beta_{t_0} \sigma_{t_0}(x) \geq & \mu_{t_0}\lp x_{h', j} \rp - \beta_{t_0} \sigma_{t_0} \lp x_{h', j} \rp \geq f(x_{h', j}) - 2 \beta_{t_0}(x_{h', j})  \\
    \geq & f(x^*) - 2V_{h'} - 2 \beta_{t_0}(x_{h', j})  
    \geq  f(x^*) - 4V_{h'}. 
\end{align}
\end{itemize}
\end{proof}
\end{lemma}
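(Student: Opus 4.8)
The plan is to trace a single point $x_t=x_{h,i}$ that is queried from a level-$h$ active set back to the call of \refine that created that set, and to lower-bound $f(x_t)$ using the confidence bounds that were valid at that moment. Throughout I would condition on the good event $\mc{E}$ of~\eqref{eq:event_conc}, on which $|f(x)-\mu_s(x)|\le\beta_s\sigma_s(x)$ for every $s$ and every active $x$, and I would use that $f$ is $(L,\xi)$-Hölder with the constant $L=M\kmat(0)$ fixed in \algoref{algo:bead} (by \propref{prop:embed}), together with the tree geometry of \defref{def:tree}, namely that a level-$h'$ cell lies in the ball of radius $v_1\rho^{h'}$ about its centre.

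\emph{Setup.} Since $h\ge 1$, $\mc{P}_t$ is not the initial singleton, so there is a last time $t_0<t$ at which \refine was invoked, acting on some active set $\mc{P}_{t_0}\subset\X_{h-1}$; the point $x_{h,i}$ was inserted only because its parent cell $x_{h-1,i'}$, with $i'=\lceil i/2\rceil$, survived the pruning step, i.e.\ $x_{h-1,i'}\in\tilde{\mc{P}}_{t_0}$. By \defref{def:refine_partition} this survival means $\mu_{t_0}(x_{h-1,i'})+\beta_{t_0}\sigma_{t_0}(x_{h-1,i'})\ge\max_{x\in\mc{P}_{t_0}}\big(\mu_{t_0}(x)-\beta_{t_0}\sigma_{t_0}(x)\big)$. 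I would also use the invariant of \algoref{algo:bead} (property (ii) of the active set in the algorithm's outline) that the union of active cells always contains $x^*$, giving an active centre $x_{h-1,j}$ with $x^*\in\X_{h-1,j}$. Writing $V\defined L(v_1\rho^{h-1})^{\xi}$, Hölder continuity and cell containment give $|f(x_t)-f(x_{h-1,i'})|\le 2V$ and $|f(x^*)-f(x_{h-1,j})|\le V$.

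\emph{The descent chain.} On $\mc{E}$ I would chain the following: Hölder moves the target to the parent, $f(x_t)\ge f(x_{h-1,i'})-2V$; $\mc{E}$ replaces $f(x_{h-1,i'})$ by its lower confidence bound $\mu_{t_0}(x_{h-1,i'})-\beta_{t_0}\sigma_{t_0}(x_{h-1,i'})$; rewriting this as the upper confidence bound minus $2\beta_{t_0}\sigma_{t_0}(x_{h-1,i'})$ and invoking the survival inequality passes to $\max_x(\mu_{t_0}(x)-\beta_{t_0}\sigma_{t_0}(x))\ge\mu_{t_0}(x_{h-1,j})-\beta_{t_0}\sigma_{t_0}(x_{h-1,j})$; $\mc{E}$ again lower-bounds this by $f(x_{h-1,j})-2\beta_{t_0}\sigma_{t_0}(x_{h-1,j})$; and Hölder at $x^*$ lower-bounds $f(x_{h-1,j})$ by $f(x^*)-V$. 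Collecting, $f(x^*)-f(x_t)\le 3V+2\beta_{t_0}\sigma_{t_0}(x_{h-1,i'})+2\beta_{t_0}\sigma_{t_0}(x_{h-1,j})$. To close, note that \refine is only entered once the test on line~6 of \algoref{algo:bead} holds, which (read as a bound on the confidence width $\beta_t\sigma_t$) forces $\beta_{t_0}\sigma_{t_0}(x)\le V$ for every active $x\in\mc{P}_{t_0}$, in particular for $x_{h-1,i'}$ and $x_{h-1,j}$. Hence $f(x^*)-f(x_t)\le 7V$, and since $V=L v_1^{\xi}\rho^{-\xi}\cdot\rho^{h\xi}$ this is exactly the claimed $\big(7Lv_1^{\xi}\rho^{-\xi}\big)\rho^{h\xi}=\mc{O}(\rho^{h\xi})$; with the literal reading of line~6 the same argument loses only a $\sqrt{\log n}$ factor, harmless downstream.

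\emph{Main obstacle.} The delicate ingredient is the invariant that \refine never discards the cell containing $x^*$: on $\mc{E}$ that cell has an upper confidence bound of at least $f(x^*)-V$, while the pruning threshold $\max_x(\mu_{t_0}(x)-\beta_{t_0}\sigma_{t_0}(x))$ is at most $f(x^*)$, so survival of that cell hinges on the $V$-sized discretization slack being dominated — which holds precisely because \refine fires only after every active cell has been explored to $\beta_{t_0}\sigma_{t_0}\le V$. I would therefore establish this invariant first, by induction over the successive \refine calls: the base case is the initial single cell, and the inductive step is exactly the upper-confidence-bound-versus-threshold comparison just described. With the invariant in hand, the descent chain above is then pure arithmetic on Hölder moduli and confidence widths.
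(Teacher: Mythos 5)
Your proof is correct and takes essentially the same route as the paper's: trace $x_t$ back to the \refine call at time $t_0$, use H\"older continuity to pass to the parent centre, the survival test to compare against the pruning threshold, the invariant that the cell containing $x^*$ is never discarded, and the stopping rule to bound every confidence width by $V_{h'} = L(v_1\rho^{h-1})^{\xi}$, arriving at the identical $7V_{h'}$ bound. Your explicit treatment of the induction behind the ``optimal cell survives'' invariant, and of the missing $\beta_t$ factor in the literal reading of line~6, is if anything more careful than the paper's own write-up.
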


The previous lemma, gives us a bound on the suboptimality of any point queried by the algorithm at time $t$ in terms of the parameter $\rho$ and the depth of the cell $h$. Now, let $N_h$ denote the number of times the algorithm queries a point at level $h$, i.e., lying in the subset $\mc{X}_h$. Then we have the following regret decomposition (assuming event $\mc{E}$ defined in ~\eqref{eq:event_conc} occurs): 
\begin{align}
   \mc{R}_n(\mc{A}_1, f) = \sum_{t=1}^n f(x^*) - f(x_t) = \mc{O}\lp  \sum_{h \geq 0} N_h \rho^h \rp . 
\end{align}

To complete the proof, it remains to get an upper bound on the term $N_h$, which we do in two ways: one in terms of the maximum information gain $\igain$, and the other in terms of the upper-complexity term~$\compupper$ introduced in~\Cref{def:upper-complexity}. 


We now present the $\igain$ based upper bound on $N_h$. As an immediate consequence of this bound, we also observe that $\mc{A}_1$ is minimax near-optimal. 
\begin{lemma}
\label{lemma:regret_bound1}
The number of queries made by $\mc{A}_1$ at level $h$ of the tree satisfies $N_h = \widetilde{\mc{O}} \lp \rho^{-2h\xi} \igain \rp$. As a consequence of this, we obtain the following upper bound on the regret: 
\begin{align}
    \label{eq:regret_bound1}
    \mbb{E} \lb \mc{R}_n(\mc{A}_1, f) \rb = \tOh{ \sqrt{n \igain} } = \tOh{ n^{a_{\nu}^*}}, \quad \text{with} \quad a_{\nu}^* = \frac{\nu+d}{2\nu+d}. 
\end{align}
\end{lemma}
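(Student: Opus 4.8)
The plan is to first bound, for each level $h$ of the partition tree, the number $N_h$ of queries that $\mc{A}_1$ makes at level $h$, and then to convert this into the regret bound by combining it with the per-query suboptimality estimate of Lemma~\ref{lemma:suboptimality} and summing over $h$. Throughout I would condition on the event $\mc{E}$ of \eqref{eq:event_conc} with $\delta = 1/n$, so that by \eqref{eq:regret-event} it suffices to bound $\mc{R}_n(\mc{A}_1, f)\indi{\mc{E}}$; write $V_h \defined L(v_1\rho^h)^\xi = \widetilde{\mc{O}}(\rho^{h\xi})$ for the refinement threshold at level $h$, and $\gamma_n$ for the maximum information gain of $\kmat$.

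Fix a level $h$ and let $t_1 < \dots < t_{N_h}$ list the (consecutive) times at which $\mc{A}_1$ queries a point of $\mc{X}_h$. Two observations control $N_h$. First, between the \refine call that enters level $h$ and the one that leaves it, the active set $\mc{P}_t$ is fixed and only observations are added, so $U_t = \max_{x\in\mc{P}_t}\sigma_t(x)$ is non-increasing along $t_1,\dots,t_{N_h}$; moreover $\sigma_{t_j}(x_{t_j}) = U_{t_j}$, and the refinement rule forces $U_{t_j} \ge V_h$ unless $U_{t_j} \le 1/\sqrt n$. Since one query multiplies $U_t$ by at least a constant $c = c(\tau,\kmat(0)) > 0$ (a rank-one-update computation on the $\tau$-regularized posterior variance), the case $U_{t_j}\le 1/\sqrt n$ can occur at level $h$ only once $V_h = \mc{O}(1/\sqrt n)$, i.e. $\rho^{-2h\xi} = \Omega(n)$, at which point the claimed bound $\widetilde{\mc{O}}(\gamma_n\rho^{-2h\xi})$ is $\Omega(\gamma_n n) \ge n \ge N_h$ and holds trivially. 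Second, for the remaining (``bulk'') levels, since \posterior recomputes the posterior from the level-$h$ observations alone, the standard information-gain argument applies within level $h$: the matrix-determinant telescoping identity gives $\sum_{j=1}^{N_h}\log\!\big(1 + \sigma_{t_j}^2(x_{t_j})\big) = \log\det(I + \tau^{-1}K_{N_h}) \le 2\gamma_{N_h}$, and since $\sigma_{t_j}^2(x_{t_j}) \le \kmat(0)/\tau$, the inequality $z \le \tfrac{z_{\max}}{\log(1+z_{\max})}\log(1+z)$ on $[0,z_{\max}]$ with $z_{\max} = \kmat(0)/\tau$ upgrades this to $\sum_{j=1}^{N_h}\sigma_{t_j}^2(x_{t_j}) = \mc{O}(\gamma_{N_h}) = \mc{O}(\gamma_n)$. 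As each term is $\ge V_h^2$, we get $N_h = \mc{O}(\gamma_n V_h^{-2}) = \widetilde{\mc{O}}(\gamma_n\rho^{-2h\xi})$ in all cases.

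Next I would combine this with Lemma~\ref{lemma:suboptimality}, which gives $f(x^*) - f(x_t) = \mc{O}(\rho^{h\xi})$ for any point queried at level $h$ on $\mc{E}$, so $\mc{R}_n(\mc{A}_1,f)\indi{\mc{E}} = \mc{O}\!\big(\sum_{h\ge 0} N_h\rho^{h\xi}\big)$. Let $h^\star$ be the smallest $h$ with $\rho^{h^\star\xi} \le \sqrt{\gamma_n/n}$ and split the sum at $h^\star$. For $h\le h^\star$, the bound on $N_h$ gives $N_h\rho^{h\xi} = \widetilde{\mc{O}}(\gamma_n\rho^{-h\xi})$, a geometrically increasing series dominated by its last term $\widetilde{\mc{O}}(\gamma_n\rho^{-h^\star\xi}) = \widetilde{\mc{O}}(\sqrt{n\gamma_n})$; for $h > h^\star$, using $\rho^{h\xi}\le\sqrt{\gamma_n/n}$ and $\sum_h N_h = n$, the tail is at most $\sqrt{\gamma_n/n}\cdot n = \sqrt{n\gamma_n}$. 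Hence $\mc{R}_n(\mc{A}_1,f)\indi{\mc{E}} = \widetilde{\mc{O}}(\sqrt{n\gamma_n})$, and \eqref{eq:regret-event} gives $\mbb{E}[\mc{R}_n(\mc{A}_1,f)] = \widetilde{\mc{O}}(\sqrt{n\gamma_n}) + \mc{O}(1)$. Finally, plugging in the \matern information-gain bound $\gamma_n = \widetilde{\mc{O}}(n^{d/(2\nu+d)})$ of \citet{vakili2021information} yields $\sqrt{n\gamma_n} = \widetilde{\mc{O}}(n^{(\nu+d)/(2\nu+d)}) = \widetilde{\mc{O}}(n^{a_{\nu}^*})$, which simultaneously establishes minimax near-optimality through the lower bound of \citet{scarlett2017lower}.

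The hard part will be making the potential bound $\sum_j\sigma_{t_j}^2(x_{t_j}) = \mc{O}(\gamma_n)$ fully rigorous for the \emph{regularized} posterior variance $\sigma_t(x) = \tau^{-1/2}\sqrt{\kmat(x,x) - k_t(x)^{T}\Sigma_t^{-1}k_t(x)}$ returned by \posterior: one must relate $\sum_j\sigma_{t_j}^2(x_{t_j})$ to $\log\det(I+\tau^{-1}K_{N_h})$ and then to $\gamma_{N_h}$ under its conventional normalization, absorbing the $\tau$-versus-$\sigma^2$ discrepancy into polylogarithmic factors, and verify that the reset of $\mc{E}_t,\mc{Y}_t$ at each \refine call genuinely makes this argument apply level-by-level with $\gamma_{N_h}\le\gamma_n$, and that the interleaved \refine steps (which occur only at level boundaries) do not disturb the monotonicity of $U_t$ within a level. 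Everything else --- the geometric summation and the substitution of the \matern bound --- is routine.
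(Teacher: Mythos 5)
Your proposal is correct and follows essentially the same route as the paper: the paper also bounds $N_h$ by combining the refinement threshold $\sigma_{t}(x_t)\ge V_h \approx L(v_1\rho^h)^\xi$ with the level-restricted information-gain potential (citing Lemmas~4 and~5 of \citet{valko2013finite}, which is exactly the $\log\det$ telescoping you spell out), obtaining $N_h=\widetilde{\mc{O}}(\rho^{-2h\xi}\igain)$, and then sums $\rho^{h\xi}N_h$ over levels. The only difference is cosmetic: for the final summation the paper bounds $\sum_h\beta_n\sqrt{\igain N_h}\le h_{\max}\sqrt{\igain n}$ with $h_{\max}=\mc{O}(\log n)$, whereas you split the sum at $h^\star$ with $\rho^{h^\star\xi}\approx\sqrt{\igain/n}$; both give $\widetilde{\mc{O}}(\sqrt{n\igain})$, and your explicit treatment of the $U_t\le 1/\sqrt{n}$ edge case is a detail the paper glosses over.
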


\begin{proof}
This result follows by using \citep[Lemma~4~and~5]{valko2013finite} to get that $N_h = \tOh{ \rho^{-h\xi} \beta_n \sqrt{ \igain N_h} }$. Dividing both sides by $\sqrt{N_h}$ and taking the square gives the upper bound on $N_h$. 

Next, under event $\mc{E}$, we have $\mc{R}_n = \mc{O} \lp \sum_{h = 0}^{h_{\max}} \rho^{h \xi} N_h \rp = \tOh{ \sum_{h=0}^{h_{\max}} \beta_n \sqrt{\igain N_h}  } = \tOh{ h_{\max} \sqrt{\igain n}  } = \tOh{ \sqrt{\igain n}}$.  In the last two equalities, we used the fact that $\beta_n = \mc{O}\lp \sqrt{\log n}\rp$ and $h_{\max} = \log n$, and hence are absorbed by the hidden polylogarithmic leading constant in the notation $\tOh{\cdot}$. 

\end{proof}

\begin{lemma}
\label{lemma:regret_bound2}
Assume that the event $\mc{E}$ holds, and let $N_h$ denote the number of queries made by $\mc{A}_1$ at level $h$.  Introduce the set $W_h \defined \{x \in \X: f(x^*) - f(x) \leq (7Lv_1^{\xi} \rho^{-\xi}) \rho^{h\xi} \}$, and let $m_h = m\lp W_h, 2v_2 \rho^h \rp $ denote the $2 v_2\rho^h$-packing number of the set $W_h$. Then, $N_h$ is upper bounded by $\tOh{ \rho^{-2h\xi} m_h}$. 
\end{lemma}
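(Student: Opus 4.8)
## Proof plan for Lemma~\ref{lemma:regret_bound2}

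\textbf{Overall strategy.} The plan is to bound $N_h$ by a product of two factors: the number of distinct active cells at level $h$ that the algorithm ever queries (a ``spatial'' factor), and the number of queries the algorithm spends exhausting a single such cell before the refinement trigger fires (a ``temporal'' factor). The spatial factor will be controlled by the packing number $m_h$ of the near-optimal set $W_h$, using Lemma~\ref{lemma:suboptimality}; the temporal factor will be controlled by $\rho^{-2h\xi}$ using the same kind of information-gain / variance-shrinkage argument already used in Lemma~\ref{lemma:regret_bound1} (i.e.\ the $\sigma_t$-based exploration drives the maximum posterior standard deviation down to the threshold $L(v_1\rho^h)^\xi$ in a bounded number of steps).

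\textbf{Step 1: every queried level-$h$ cell is near-optimal.} By Lemma~\ref{lemma:suboptimality}, any point $x_t = x_{h,i}$ queried by $\mc{A}_1$ under event $\mc{E}$ satisfies $f(x^*) - f(x_t) \leq (7Lv_1^\xi \rho^{-\xi})\rho^{h\xi}$, hence $x_{h,i} \in W_h$. The centers of distinct active cells at level $h$ are separated by at least $2v_2\rho^h$ (since $B(x_{h,i}, v_2\rho^h) \subset \X_{h,i}$ and the cells $\X_{h,i}$ are disjoint). Therefore the set of level-$h$ cell-centers ever queried forms a $2v_2\rho^h$-separated subset of $W_h$, so its cardinality is at most $m_h = m(W_h, 2v_2\rho^h)$. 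Call this set of active level-$h$ points $\mc{P}^{(h)}$, so $|\mc{P}^{(h)}| \leq m_h$.

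\textbf{Step 2: bounded queries per level-$h$ episode.} The algorithm queries points from $\mc{X}_h$ only during a single contiguous block of rounds (one ``episode'' at depth $h$, between two consecutive \refine calls), with active set $\mc{P}_t \subseteq \mc{P}^{(h)}$ throughout; let this block have $N_h$ rounds in total. Within this block, exactly as in Lemma~\ref{lemma:regret_bound1}, the exploration rule picks $x_t \in \argmax_{x\in\mc{P}_t}\sigma_t(x)$ and the block continues only while $U_t = \max_{x\in\mc{P}_t}\sigma_t(x) > L(v_1\rho^h)^\xi$ (and $U_t > 1/\sqrt{n}$). Using \citep[Lemma~4~and~5]{valko2013finite} with $|\mc{P}_t| \leq m_h$ active points, the cumulative sum of posterior standard deviations over the block is $\widetilde{\mc{O}}(\sqrt{\igain_{N_h} N_h})$; combined with the fact that each queried point has $\sigma_t(x_t) > L(v_1\rho^h)^\xi = \Omega(\rho^{h\xi})$ until termination, we get $N_h \rho^{h\xi} = \widetilde{\mc{O}}(\sqrt{\igain N_h})$, i.e.\ $N_h = \widetilde{\mc{O}}(\rho^{-2h\xi}\igain)$ as in Lemma~\ref{lemma:regret_bound1}. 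But now I would instead localize this bound: since the confidence widths decay at a rate governed by how many distinct points are being probed, and there are at most $m_h$ of them each needing its variance driven below $\Theta(\rho^{2h\xi})$, a direct accounting (sum of $\tau^{-1}(K(x,x) - k_t(x)^\top\Sigma_t^{-1}k_t(x))$ telescoping, or the elliptical-potential argument restricted to the $m_h$-point active set) gives that each active point can be the argmax at most $\widetilde{\mc{O}}(\rho^{-2h\xi})$ times before its posterior standard deviation drops below the threshold. Summing over the at most $m_h$ active points yields $N_h = \widetilde{\mc{O}}(\rho^{-2h\xi} m_h)$.

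\textbf{Main obstacle.} The delicate point is Step 2: making precise that the per-point query count is $\widetilde{\mc{O}}(\rho^{-2h\xi})$ \emph{uniformly}, rather than only getting the aggregate $\widetilde{\mc{O}}(\rho^{-2h\xi}\igain_{N_h})$ bound and then bounding $\igain_{N_h}$ crudely. The clean way is to note that for a fixed active point $x$, the posterior standard deviation $\sigma_t(x)$ is nonincreasing in $t$ (adding observations only shrinks conditional variance) and, more importantly, that once $\sigma_t(x) \leq L(v_1\rho^h)^\xi$ it stays below; and each time $x$ is selected, the new observation at $x$ reduces $\sigma_t(x)^2$ by a definite relative amount, so the number of selections of $x$ before its standard deviation crosses the threshold is $\widetilde{\mc{O}}(\log(1/\rho^{h\xi})/\log(\cdot)) \cdot \rho^{-2h\xi}$-type — one must be careful that a \emph{single} threshold-crossing of the \emph{maximum} is what terminates the episode, but since only $m_h$ points are in play and each contributes $\widetilde{\mc{O}}(\rho^{-2h\xi})$ selections, the total is as claimed. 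I expect the cleanest writeup to piggyback entirely on the Valko et al.\ machinery (their effective-dimension bound with the active set of size $m_h$ replacing the whole domain), which is exactly the trick already invoked in Lemma~\ref{lemma:regret_bound1}, so the residual work is bookkeeping rather than a new idea.
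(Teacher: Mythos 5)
Your decomposition is exactly the paper's: bound $N_h$ as (number of distinct active points at level $h$) $\times$ (number of queries each such point can receive before the \refine trigger fires), with the first factor controlled by $m_h$ via Lemma~\ref{lemma:suboptimality} plus the $2v_2\rho^h$-separation of cell centers. Your Step~1 matches the paper verbatim in substance. The difference is in Step~2, where you correctly identify the obstacle (the aggregate $\tOh{\rho^{-2h\xi}\igain}$ bound is too weak and must be localized) but leave the mechanism hedged between an elliptical-potential argument and a vague ``definite relative variance reduction'' claim. The paper resolves this with a single elementary fact that sidesteps all of the \citet{valko2013finite} machinery: by \citep[Proposition~3]{shekhar2018gaussian}, after a point $x_{h,i}$ has been evaluated $s$ times one has $\sigma_t(x_{h,i}) \leq \tau/\sqrt{s}$, so after $n_{h,i} = \lceil \beta_n^2\tau^2/(L^2 v_1^{2\xi}\rho^{2h\xi})\rceil = \tOh{\rho^{-2h\xi}}$ self-evaluations the point falls below the threshold $L(v_1\rho^h)^\xi$ and is never again the $\argmax$ of $\sigma_t$ before refinement; the worry you raise about the episode terminating only when the \emph{maximum} crosses the threshold is therefore a non-issue, since each point is selected only while its own $\sigma_t$ exceeds the threshold. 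With that fact in hand your argument closes and gives $N_h \leq n_{h,i}\, m_h = \tOh{\rho^{-2h\xi} m_h}$ exactly as in the paper; without it, your Step~2 as written is not yet a proof.
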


\begin{proof}
Next, suppose  a point $x_{h,i} \in \mc{X}_h$ is evaluated $n_{h,i}$ times before a call to \refine is made. Then we must have that 
\begin{align}
\label{eq:nhi}
    n_{h,i} = \left\lceil \frac{ \beta_n^2 \tau^2}{L^2 v_1^{2\xi} \rho^{2h\xi}} \right \rceil. 
\end{align}
This is due to the following fact: suppose that at time $t$, the point $x_{h,i}$ has been evaluated $s$ times. Then by \citep[Proposition~3]{shekhar2018gaussian} we know that the posterior standard deviation at $x_{h,i}$ must satisfy $\sigma_t(x_{h,i}) \leq \tau/\sqrt{s}$. Plugging $s \leftarrow n_{h,i}$ from~\eqref{eq:nhi} in this bound implies that after $n_{h,i}$ evaluations, the condition $\beta_t \sigma_{t}(x_{h,i}) < L(v_1 \rho^h)^{\xi}$ is satisfied, and hence the point $x_{h,i}$ will not be evaluated anymore. Furthermore, we also know that if $\mc{P}_t \subset \mc{X}_h$, then it also satisfies the following two properties: \tbf{(i)} $\mc{P}_t \subset \mc{Z}_h \defined \left \{x \in \mc{X}: f(x^*) - f(x) \leq \lp 7 L v_1^\xi \rho^{-\xi} \rp \rho^{h\xi} \right\}$, and \tbf{(ii)} any two points in $\mc{P}_t$ are separated by a distance of $2v_2 \rho^{h}$. Together, these two facts imply that $\mc{P}_t$ must be a packing set of $\mc{Z}_h$, and thus we can upper bound $|\mc{P}_t|$ with $m_h$, the $2v_2\rho^h$ packing number of $\mc{Z}_h$. As a consequence, we have $N_h \leq n_{h,i} m_h = \tOh{ \rho^{-2h\xi} \rho^{-\dopt} }$.  
\end{proof}

It remains to show that the expected regret of $\mc{A}_1$ is upper bounded by the upper-complexity term $\compupper$. 

\begin{lemma}
\label{lemma:regret_bound3} 
    Introduce the term $H_n = \max\{H: \sum_{h=0}^H \rho^{-2h \xi} m_h \leq n\}$, and define $\Delta_n = \min \{n^{-(1-a_{\nu}^*)}, \rho^{H_n \xi} \}$. Then, we have 
    \begin{align}
        \mbb{E}[\mc{R}_n \lp \mc{A}_1, f \rp] = \tOh{\compupper(\Delta_n)}. 
    \end{align}
\end{lemma}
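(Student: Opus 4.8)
The plan is to run the entire argument on the high-probability event $\mc{E}$ of~\eqref{eq:event_conc} taken with $\delta = 1/n$; by~\eqref{eq:regret-event} it then suffices to bound $\mbb{E}\lb \mc{R}_n(\mc{A}_1,f)\,\indi{\mc{E}}\rb$, the complement contributing only $\Oh{1}$. On $\mc{E}$, \Cref{lemma:suboptimality} says that every point queried from level $h$ of the partition tree is $\Oh{\rho^{h\xi}}$-suboptimal, so, writing $N_h$ for the number of queries made at level $h$, the regret decomposes level-wise as $\mc{R}_n(\mc{A}_1,f)\,\indi{\mc{E}} = \Oh{\sum_{h\ge 0}\rho^{h\xi}N_h}$. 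Substituting the per-level bound $N_h = \tOh{\rho^{-2h\xi}m_h}$ from~\Cref{lemma:regret_bound2}, where $m_h$ is the $2v_2\rho^h$-packing number of $W_h = \{x: f(x^*)-f(x)\le (7Lv_1^\xi\rho^{-\xi})\rho^{h\xi}\}$, reduces the task to showing $\sum_{h=0}^{H^*}\rho^{-h\xi}m_h = \tOh{\compupper(\Delta_n)}$, where $H^*$ is the deepest level the algorithm ever reaches.

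First I would pin down $H^*$. To finish exploring all levels $0,1,\dots,H$ the algorithm spends $\tOh{\sum_{h=0}^{H}\rho^{-2h\xi}m_h}$ queries (each active point at level $h$ is queried $\lceil \beta_n^2\tau^2/(L^2 v_1^{2\xi}\rho^{2h\xi})\rceil$ times before a refinement, there are at most $m_h$ such points, and $\beta_n = \tOh{1}$), so comparing against the budget $n$ shows, up to the $\tOh{1}$ slack, that $H^*$ agrees with $H_n = \max\{H : \sum_{h=0}^H\rho^{-2h\xi}m_h\le n\}$ to within an additive $\tOh{1}$; hence $\rho^{H^*\xi} = \widetilde{\Theta}(\rho^{H_n\xi})$. (The explicit threshold $1/\sqrt{n}$ in line~6 of~\Cref{algo:bead} prevents refinement past the level where a cell's posterior uncertainty drops below the effective noise level; a short calculation shows that level is never shallower than the budget-determined one, so the algorithm is effectively always budget-limited.) Since $\Delta_n = \min\{n^{-(1-a_{\nu}^*)},\rho^{H_n\xi}\}\le \rho^{H_n\xi} = \widetilde{\Theta}(\rho^{H^*\xi})$, I may fix an integer $J\ge 0$ with $\rho^{(H^*+J)\xi}\asymp\Delta_n$.

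The last step is a change of summation index. Set $k = J + (H^*-h)$, so $k$ ranges over $\{J,\dots,J+H^*\}$ as $h$ ranges over $\{0,\dots,H^*\}$. From $\rho^{(H^*+J)\xi}\asymp\Delta_n$ one gets $(1/\rho)^{k\xi}\Delta_n\asymp\rho^{h\xi}$, hence $1/((1/\rho)^{k\xi}\Delta_n)\asymp\rho^{-h\xi}$; the set $\widetilde{\mc{Z}}_k = \{x: f(x^*)-f(x)\le c_1(1/\rho)^{k\xi}\Delta_n\}$ becomes (up to a constant factor in the threshold) $W_h$ for $c_1 = 7L v_1^\xi\rho^{-\xi}$; and the packing radius $2\widetilde{w}_k = 2c_2((1/\rho)^{k\xi}\Delta_n)^{1/\xi}$ becomes $\Theta(\rho^h)$, matching $2v_2\rho^h$ for $c_2 = \Theta(v_2)$. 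Thus $\widetilde{m}_k\asymp m_h$ for these choices of the free constants in~\Cref{def:upper-complexity}, and so $\sum_{h=0}^{H^*}\rho^{-h\xi}m_h = \Theta\big(\sum_{k=J}^{J+H^*}\widetilde{m}_k/((1/\rho)^{k\xi}\Delta_n)\big) \le \Theta(\compupper(\Delta_n))$, the last step just dropping the nonnegative tail of the defining series. Together with the reduction to $\mc{E}$ this gives $\mbb{E}\lb\mc{R}_n(\mc{A}_1,f)\rb = \tOh{\compupper(\Delta_n)}$.

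I expect the characterization of $H^*$ to be the main obstacle: understanding precisely how the two halting mechanisms of~\Cref{algo:bead} (exhaustion of the budget $n$ versus the uncertainty floor $1/\sqrt{n}$) interact, carrying the $\beta_n^2 = \tOh{1}$ factor through the geometric budget accounting so that $H^*$ and $H_n$ differ by only an additive $\tOh{1}$ (so that $\rho^{H^*\xi}$ and $\rho^{H_n\xi}$ differ by only a polylogarithmic factor), and verifying that the constants the algorithm manufactures out of $L,v_1,v_2,\rho$ are indeed the constants $c_1,c_2$ that appear in the definition of $\compupper$ so that the sets $W_h$ and the packing radii line up with $\widetilde{\mc{Z}}_k$ and $\widetilde{w}_k$. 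The reduction to $\mc{E}$ and the reindexing are routine once these are settled.
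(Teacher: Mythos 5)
Your proposal is correct and follows essentially the same route as the paper's proof: restrict to the event $\mc{E}$, decompose the regret level-wise via \Cref{lemma:suboptimality}, plug in the per-level bound $N_h = \tOh{\rho^{-2h\xi}m_h}$ from \Cref{lemma:regret_bound2}, and reindex $h \mapsto k$ so that $W_h$ and the packing radius $2v_2\rho^h$ are identified (up to constants) with $\widetilde{\mc{Z}}_k$ and $2\widetilde{w}_k$, yielding $\compupper(\Delta_n)$. Your additional care in relating the deepest level actually reached, $H^*$, to the budget-defined $H_n$ (the paper simply truncates the sum at $H_n$ without comment) is a legitimate refinement of a point the paper glosses over, not a divergence in approach.
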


\begin{proof}
    We will work under the event $\mc{E}$ introduced in~\eqref{eq:event_conc}, that occurs with probability at least $1-1/n$. 
    The proof of this result follows by employing the upper bound on $N_h$ derived in~\Cref{lemma:regret_bound2}, and rearranging the resulting terms to form the upper-complexity term. 
    
In particular, we note that $\mc{R}_n \lp \mc{A}_1, f\rp \leq \sum_{h = 0}^{H_n}\tOh{ \rho^{h \xi } N_h}$. Now, due to the upper bound on $N_h$ obtained in~\Cref{lemma:regret_bound2}, we have $\mc{R}_n \lp \mc{A}_1, f \rp = \tOh{ \sum_{h= 0}^{H_n}\rho^{-h\xi} m_h}$. 
To rewrite this in terms of $\compupper$, note that $\Delta_n \leq \rho^{H_n \xi}$, and for $k \geq 0$, define 
\begin{align}
\widetilde{\mc{Z}}_k &= W_{H_n-k} = \{x: f(x^*) - f(x) \leq ( 7 L v_1^\xi \rho^{-\xi} ) \rho^{(H_n-k)\xi} \} \\
& = \{x: f(x^*) - f(x) \leq ( 7 L v_1^\xi \rho^{-\xi} ) \rho^{-\xi k} \rho^{H_n\xi} \} \\
& = \{x: f(x^*) - f(x) \leq ( 7 L v_1^\xi \rho^{-\xi} ) \rho^{-\xi k} \Delta_n \}. 
\end{align}
Having defined $\widetilde{\mc{Z}}_k$, we note that the term $\widetilde{m}_k$ in the definition of $\compupper$ is the same as $m_{H_n-h} = m\lp \widetilde{\mc{Z}}_k, 2v_2 \rho^{H_n-k} \rp$-packing number of $\widetilde{\mc{Z}}_k$. Finally, noting that $\rho^{H_n-k}\xi = (\rho^{-\xi k}) \Delta_n$, we have the following: 
\begin{align}
    \sum_{h=0}^{H_n} \rho^{-\xi h} m_h = \sum_{k=0}^{H_n}  \rho^{-\xi (H_n - k)} \widetilde{m}_k  = \sum_{k=0}^{H_n} \frac{\widetilde{m}_k}{ (1/\rho^{\xi})^k \Delta_n} = \compupper(\Delta_n). 
\end{align}
This completes the proof. 
\end{proof}

\section{Proof of~\Cref{theorem:bead_upper}}
As shown in~\eqref{eq:regret-event}, it suffices to get the bound on the regret under the $1-1/n$ probability event $\mc{E}$ introduced in~\eqref{eq:event_conc}. 
When the objective function, $f$, satisfies the local growth condition with exponent $b$, we can show that the regions $W_h = \{x : f(x^*) - f(x) \leq (7 L v_1^{\xi} \rho^{- \xi}) \rho^{h \xi} \}$ is contained in a ball centered at the optimal point $x^*$. In particular, due to the local-growth condition, it follows that $W_h \subset B\lp x^*, c' \rho^{h \xi/b} \rp$ for some constant $c'$. As $m_h = m \lp W_h, 2 v_2 \rho^{h} \rp$ is the $2v_2 \rho^h$-packing number of the set $W_h$, we can bound it from above by using volume arguments to get that $m_h = \lp \rho^{h d(1-\xi/b)} \rp$.

Combining this with the result of~\Cref{lemma:regret_bound2}, we get that 
\begin{align}
    \mc{R}_n \lp \mc{A}_1, f \rp = \tOh{ \sum_{h=0}^{h_{\max}} \rho^{-h \xi} \rho^{-hd(1-\xi/b)}}. 
\end{align}
Introducing the term $\widetilde{d} \defined d(1-\xi/b)$, we have $\mc{R}_n \lp \mc{A}_1, f \rp = \tOh{ \sum_{h=0}^{h_{\max}} \rho^{-h(\xi + \widetilde{d})}}$. Proceeding as in the proof of~\Cref{lemma:regret_bound3}, introduce the term $\widetilde{H}_n = \max \{H \leq h_{\max} : \sum_{h=0}^{H} \rho^{-h( 2\xi +\widetilde{d})} \}$, we see that $\rho^{-\widetilde{H_n}} = \tOh{ n^{1/(2\xi + \widetilde{d})}}$. This gives us the following: 
\begin{align}
    \mc{R}_n \lp \mc{A}_1, f \rp = \tOh{ \sum_{h=0}^{H_n} \rho^{-h(\xi+\widetilde{d})} + \rho^{H_n \xi} n } = \tOh{ n^{ (\xi + \widetilde{d}))/(2\xi + \widetilde{d})}}. 
\end{align}

\section{Extensions}
\label{appendix:extensions}
We first consider the Lipschitz bandit problem. Here, the goal is to design an adaptive querying strategy  to optimize an unknown $L$-Lipschitz objective function $f$ via noisy zeroth-order queries. For this problem, we can prove an analog of~\Cref{theorem:general_lower-1}.

\begin{definition}
\label{def:lipschitz-complexity}
Let $f$ be a $(1-\lambda) L$-Lipschitz function for some $\lambda \in (0,1)$. Fix a $\Delta>0$, and introduce the set $\mc{Z}_k \defined \{ x\in \X: 2^k \Delta \leq f(x^*) - f(x) < 2^{k+1}\Delta\}$. Introduce the radius $w_k = 3 \times 2^k \Delta/(\lambda L)$, and let $m_k$ denote the $2w_k$ packing  number of the set $\mc{Z}_k$ for $k \geq 0$. Then, we can define the following complexity term: 
        \begin{align}
            \label{eq:lipschitz-complexity} 
            \compLip(\Delta,  L, \lambda) \defined \sum_{k \geq 0} \frac{m_k}{2^{k+2}\Delta} > \frac{m_0}{4\Delta}. 
        \end{align}
\end{definition}

Then, proceeding as in the proof of~\Cref{theorem:general_lower-1}, we can obtain the following lower bound. 

\begin{proposition}
\label{prop:lipschitz-regret}
    For a $(1-\lambda)L$-Lipschitz function $f$, the expected regret of an $a_0$-consistent~(for the family of $L$-Lipschitz functions) algorithm $\mc{A}$ satisfies: 
    \begin{align}
        \mbb{E} \lb \mc{R}_n \lp \mc{A}, f\rp \rb = \Omega \lp \sigma^2 \compLip \lp n^{-(1-a)}, L, \lambda \rp \rp.  
    \end{align}
\end{proposition}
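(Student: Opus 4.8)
The plan is to follow the three-stage argument behind \Cref{theorem:general_lower-1} essentially verbatim, changing only the device that keeps the perturbed function inside the problem class. Stage one is a one-step result analogous to \Cref{theorem:general_lower}: fix $a>a_0$, take $c=2$ and $\Delta\geq 16 n^{-(1-a)}$, let $\mc{Z}=\{x\in\X:\Delta\leq f(x^*)-f(x)<c\Delta\}$, and let $m(\mc{Z},w)$ be its $2w$-packing number for a radius $w$ fixed below. Stage two constructs, for each point $z_i$ of a maximal $2w$-packing of $\mc{Z}$, a perturbation $\tilde f_i$ of $f$ supported on $B(z_i,w)$ whose maximum lies inside $B(z_i,w)$. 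Stage three sums the resulting per-region regret lower bounds over the dyadic annuli $\mc{Z}_k=\{x:2^k\Delta\leq f(x^*)-f(x)<2^{k+1}\Delta\}$, $k\geq 0$, which are finitely many since $f$ is bounded on the bounded domain $\X$ (indeed $\mc{Z}_k=\emptyset$ once $2^k\Delta>L\,\mathrm{diam}(\X)$).

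The only substantive change is in the perturbation construction of \Cref{subsubsec:perturbation_specific}. In place of the \matern bump, whose RKHS norm scales like $(1/w)^{\nu}M_\nu$, I would use a Lipschitz bump, for instance $g_0(x)=(1-\|x\|)_+$, which is supported on $B(0,1)$, has $\sup g_0=g_0(0)=1$, and is $1$-Lipschitz. Setting $\tilde f_i=f+(c+1)\Delta\,g_0\big((\cdot-z_i)/w\big)$ produces a function with Lipschitz seminorm at most $(1-\lambda)L+(c+1)\Delta/w$, so choosing $w=w_i:=(c+1)\Delta/(\lambda L)$ keeps $\tilde f_i$ within the class of $L$-Lipschitz functions; with $c=2$ and the rescaling $\Delta\mapsto 2^k\Delta$ this is exactly the radius $w_k=3\cdot 2^k\Delta/(\lambda L)$ of \Cref{def:lipschitz-complexity}. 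One then verifies the conditions of \Cref{def:perturbation}: $\tilde f_i$ agrees with $f$ off $B(z_i,w_i)$ (P1); since $z_i\in\mc{Z}$ we have $\tilde f_i(z_i)=f(z_i)+(c+1)\Delta> f(x^*)-c\Delta+(c+1)\Delta=f(x^*)+\Delta$, so $\tilde f_i$ attains its maximum inside $B(z_i,w_i)$ while $f$ attains its maximum outside (P2); and the $\Delta$-separation conditions of (P3) follow from $f^*-f\geq\Delta$ on $\mc{Z}$ and $\tilde f_i=f\leq f^*\leq\tilde f_i^*-\Delta$ off the ball, with $|f-\tilde f_i|=O(\Delta)$ on the ball. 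This requires the implicit non-triviality hypothesis, as in \Cref{remark:diameter}, that $\mc{Z}$ actually contains a ball of radius $w_i$; if not, the bound degenerates to the trivial $\mbb{E}[\mc{R}_n]\geq 0$.

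With these perturbations in hand, every remaining step is unchanged from \Cref{appendix:general_lower2}. The KL computation \eqref{eq:1_lemma_proof}--\eqref{eq:5_lemma_proof} uses only that $f$ and $\tilde f_i$ differ by $O(\Delta)$ on the ball where they disagree and that the noise is $N(0,\sigma^2)$ (\Cref{assump:noise-lower}), so it still gives $D_{KL}(\mbb{P}_f,\mbb{P}_{\tilde f_i})\leq \tfrac{c_0^2\Delta^2}{2\sigma^2}\,\mbb{E}_f[N_i]$ for an absolute constant $c_0$; combined with $a_0$-consistency of $\mc{A}$ on both $f$ and $\tilde f_i$ through the Bregman-divergence inequality of \citet{garivier2019explore} (exactly as in \Cref{lemma:general_lower1}), this yields $\mbb{E}_f[N_i]=\Omega(\sigma^2/\Delta^2)$ provided $\Delta\geq 16 n^{-(1-a)}$ and $n$ is large enough that $\mbb{E}_f[\mc{R}_n(\mc{A},f)]\leq 2n^a$ and $\mbb{E}_{\tilde f_i}[\mc{R}_n(\mc{A},\tilde f_i)]\leq 2n^a$. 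Applying this one-step bound to each annulus $\mc{Z}_k$ with $\Delta\mapsto 2^k\Delta_n$, $\Delta_n:=16 n^{-(1-a)}$, summing the contributions $2^k\Delta_n\,\mbb{E}_f[N_i]=\Omega(\sigma^2/(2^{k+2}\Delta_n))$ over the $m_k$ packing points of $\mc{Z}_k$ and then over $k\geq 0$, and finally letting $a\downarrow a_0$, gives $\mbb{E}[\mc{R}_n(\mc{A},f)]=\Omega\big(\sigma^2\sum_{k\geq 0}m_k/(2^{k+2}\Delta_n)\big)=\Omega(\sigma^2\,\compLip(n^{-(1-a)},L,\lambda))$, as claimed. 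I do not anticipate a genuine obstacle here: the proof is structurally identical, and the only points needing attention are the non-triviality caveat just mentioned and the bookkeeping that for Lipschitz functions the packing radius scales linearly in $2^k\Delta_n$ rather than as $(2^k\Delta_n)^{1/\nu}$ — which is precisely why $\compLip$ has the form stated in \Cref{def:lipschitz-complexity}.
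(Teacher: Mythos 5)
Your proposal is correct and follows essentially the same route as the paper, which itself only sketches the argument by pointing back to the proof of Theorem~\ref{theorem:general_lower-1} and specifying the Lipschitz bump $g(x)=\max\{0,L(1-\|x\|)\}$ (an $L$-scaled version of your $g_0$); your choice $w=(c+1)\Delta/(\lambda L)$ and the verification of the perturbation properties fill in exactly the details the paper leaves implicit. No gaps.
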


\begin{proofoutline}
    The general steps involved in obtaining this statement are similar to those used in the proof of~\Cref{theorem:general_lower-1}. In particular, we use the bump function of the form $g(x) = \max\{0, L(1 - \|x\|) \}$. We can check that this function is $L$-Lipschitz and supported on the unit ball. 
    
\end{proofoutline}

\end{appendix} 
\end{document}